\newcommand{\obsbox}[1]{
    \begin{tcolorbox}[colframe=black!70, colback=lightgray!15, boxrule=1pt, arc=2mm]
        \small#1
    \end{tcolorbox}
}
\newtheorem{lemma}{Lemma}
\title{Two Is Better Than One: Rotations Scale LoRAs}
\author{
  \textbf{Hongcan Guo}\textsuperscript{1}, 
  \textbf{Guoshun Nan}\textsuperscript{1}, 
  \textbf{Yuan Yang}\textsuperscript{1}, 
  \textbf{Diyang Zhang}\textsuperscript{1}, 
  \textbf{Haotian Li}\textsuperscript{1}, 
  \textbf{Zhican Chen}\textsuperscript{1}, \\
  \textbf{Qinchuan Zhou}\textsuperscript{1}, 
  \textbf{ Yuhan Ran}\textsuperscript{3}, 
  \textbf{ Xinye Cao}\textsuperscript{1}, 
  \textbf{Sicong Leng}\textsuperscript{2}, 
  \textbf{Xiaofeng Tao}\textsuperscript{1}, 
  \textbf{and}~\textbf{Xudong Jiang}\textsuperscript{2}
  \\[1ex]
  \textsuperscript{1}Beijing University of Posts and Telecommunications, China\\
  \textsuperscript{2}Nanyang Technological University, Singapore\\
  \textsuperscript{3} University of Bristol, UK
}
\begin{document}

\maketitle

\vspace{-10pt}
\begin{abstract}
\vspace{-5pt}

Scaling Low-Rank Adaptation (LoRA)-based Mixture-of-Experts (MoE) facilitates large language models (LLMs) to efficiently adapt to diverse tasks. However, traditional gating mechanisms that route inputs to the best experts may fundamentally hinder LLMs' scalability, leading to poor generalization and underfitting issues. We identify that the root cause lies in the restricted expressiveness of existing weighted-sum mechanisms, both within and outside the convex cone of LoRA representations. This motivates us to propose \textit{RadarGate}, a novel geometrically inspired gating method that introduces rotational operations of LoRAs representations to boost the expressiveness and facilitate richer feature interactions among multiple LoRAs for scalable LLMs. Specifically, we first fuse each LoRA representation to other LoRAs using a learnable component and then feed the output to a rotation matrix. This matrix involves learnable parameters that define the relative angular relationship between LoRA representations. Such a simple yet effective mechanism provides an extra degree of freedom, facilitating the learning of cross-LoRA synergies and properly tracking the challenging poor generalization and underfitting issues as the number of LoRA grows. Extensive experiments on 6 public benchmarks across 21 tasks show the effectiveness of our \textit{RadarGate} for scaling LoRAs. We also provide valuable insights, revealing that the rotations to each pair of representations are contrastive,  encouraging closer alignment of semantically similar representations during geometrical transformation while pushing distance ones further apart. We will release our code to the community.

\end{abstract}

\vspace{-15pt}
\section{Introduction}
\vspace{-5pt}

Scaling large-language models (LLMs) to adapt to diverse downstream tasks is non-trivial in real-world applications \cite{DBLP:conf/icml/Zhao0CWAT24}. However, the increasing size and complexity of these models pose significant challenges in terms of computational resources and training efficiency. To alleviate considerable computation costs of full fine-tuning, Low-Rank Adaptation (LoRA)  \cite{hu2022lora} has emerged as a parameter-efficient solution that freezes weights of the pre-trained model and injects trainable low-rank components. Meanwhile, the increasing demand to simultaneously handle various domain-specific tasks highlights the need for generalization and scalability of LLMs \cite{DBLP:journals/corr/abs-2406-11424}. Towards this direction, the concept of Mixture of Experts (MoE)~ \cite{zhou2022mixture, jacobs1991adaptive} was introduced
to substantially scale up LLM's capacity for various tasks. Therefore, marrying LoRA with MoE \cite{Agiza_2024_CVPR}, i.e., LoRA-MoE, offers significant potential for parameter-efficient and scalable LLMs, routing inputs to the best LoRAs for different tasks and thus facilitating various parameter-efficient adaptations.

Existing gating mechanisms of LoRA-MoE can be categorized into rule-based and learnable ones. Specifically, the rule-based gating networks, such as LoraHub \cite{huang2024lorahub}, PEMs \cite{Zhang2023ComposingPM} and Arrow \cite{DBLP:conf/icml/OstapenkoSPCRCS24}, heavily rely on pre-defined templates or mathematical formulations for the compositions of LoRA experts. In contrast, the learnable methods, such as HydraLoRA \cite{tian2024hydralora}, MoLE \cite{DBLP:conf/iclr/WuHW24}, OMoE \cite{Feng2025OMoEDM}, explore cutting edge learning techniques to adaptively active LoRA experts for downstream tasks. These methods greatly advance the state of the art of LoRA-MoE, efficiently facilitating scalable LLMs.

\begin{wrapfigure}[25]{r}{0.5\textwidth}
    \centering 
    \captionsetup[subfigure]{skip=0pt}
    
    \begin{minipage}{\linewidth}
        \centering

        \begin{subfigure}{\linewidth}
            \centering
            
            \includegraphics[height=2.275cm]{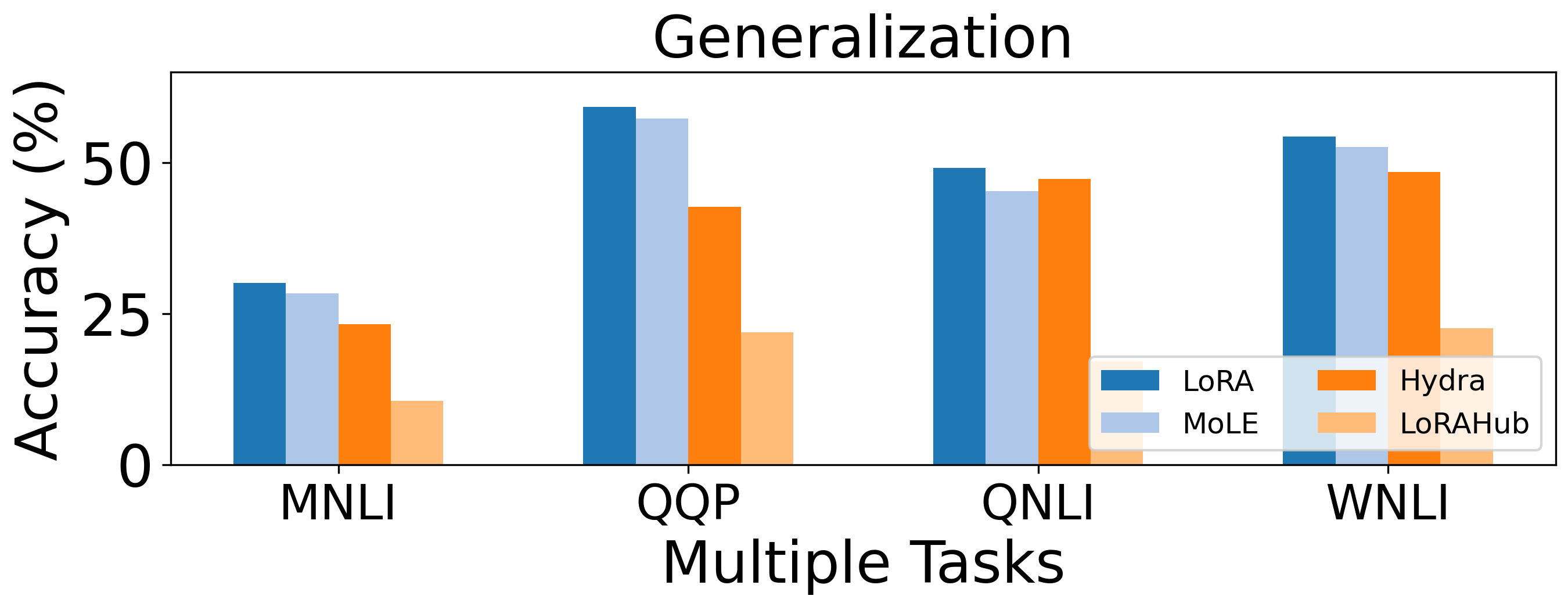}
            \caption{}
            \label{fig:subfig_a}
        \end{subfigure}

        \begin{subfigure}{\linewidth}
            \centering
            \includegraphics[height=1.95cm]{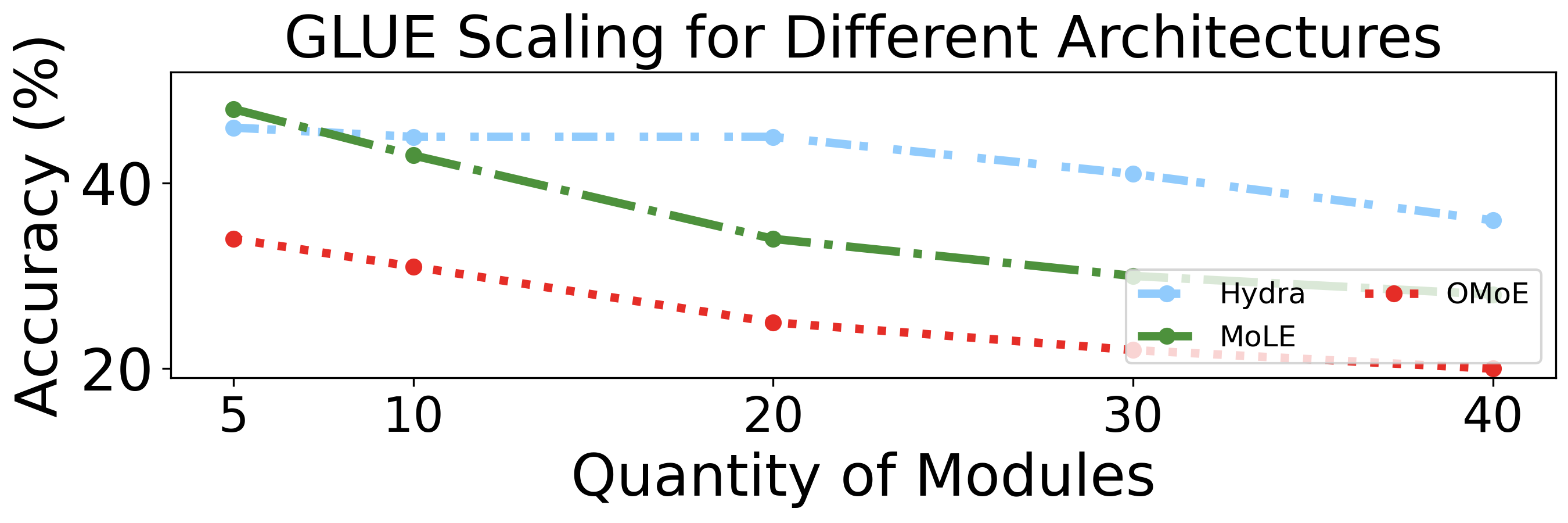}
            \caption{}
            \label{fig:subfig_b}
        \end{subfigure}

        \begin{subfigure}{\linewidth}
            \centering
            \includegraphics[height=1.95cm]{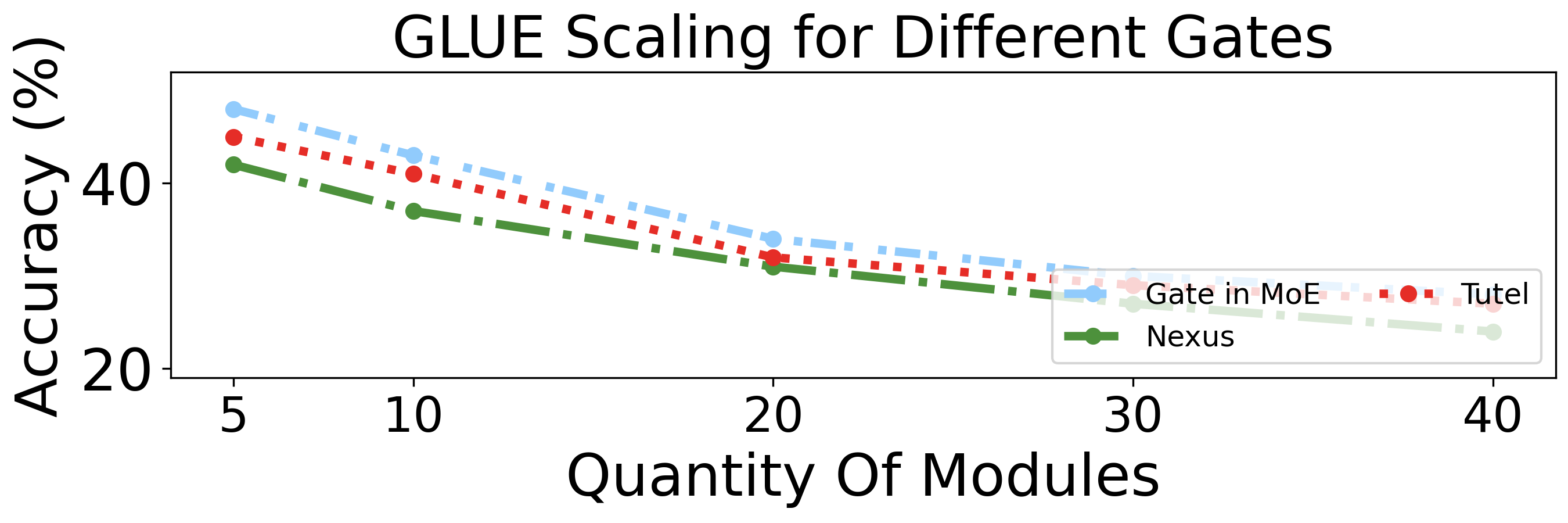}
            \caption{}
            \label{fig:subfig_c}
        \end{subfigure}
    \end{minipage}

    \vspace{-2mm}
    \caption{(a) Composable LoRA-MoE performs even worse than vanilla LoRA. (b) Poor generalization of different LoRA-MoE architectures as the number of LoRA grows. (c) Underfitting of various gating methods as the LoRA scales up.}
    \label{fig:main_stacked_figure}
\end{wrapfigure}

However, as the number of LoRA grows, existing LoRA-MoE gating methods may limit the LLMs' scalability and face two critical challenges regarding generalization and underfitting. Figure \ref{fig:main_stacked_figure} (\subref{fig:subfig_a}) demonstrates that existing gating methods in Hydra \cite{tian2024hydralora} and Lorahub \cite{huang2024lorahub} architectures, which involve more than 5 LoRAs, perform worse than the vanilla LoRA \cite{hu2022lora}, when these methods are trained on FLAN and applied to MNLI, QQP and WNLI tasks \cite{socher2013recursive}. The poor generation of existing methods causes a limited ability to adapt to diverse tasks. Figure \ref{fig:main_stacked_figure} (\subref{fig:subfig_b}) shows that the accuracy of the same gating method \cite{DBLP:conf/iclr/WuHW24} on three different MoE architectures decreases sharply by 10\%, 20\%, and 14\%, respectively, as the number of LoRA experts increases from 5 to 40. Figure \ref{fig:main_stacked_figure} (\subref{fig:subfig_b}) demonstrates that the performance of three different gating methods on the same MoE architecture (MoLE) \cite{DBLP:conf/iclr/WuHW24} also rapidly declines when increasing the LoRA number. This suggests that large-scale LoRAs may lead to suboptimal routing decisions due to poor generalization. Figure \ref{fig:main_stacked_figure} (\subref{fig:subfig_c}) shows that the training of the gating networks will be much slower and unstable when the number of LoRAs is increased from 5 to 40, also indicating underfitting issues when scaling LoRAs.

There is a line of work that attempts to achieve scalable LoRA-MoE. Early studies improved the scalability by reducing the computation and storage costs \cite{Hwang2023PregatedMA} or promoted the generations by mitigating interference among tasks \cite{DBLP:conf/nips/ZhuZWWLWD22}. A top-$k$ approach ExpertChoice \cite{DBLP:conf/nips/ZhouLLDHZDCLL22} reduces convergence time with a load balance mechanism. Recently, an LoRA library \cite{DBLP:conf/icml/OstapenkoSPCRCS24} was proposed to achieve zero-shot routing for better generalization. MoDE \cite{ning2024mode} introduced a flexible adapter to facilitate multi-task LLMs. Mostly related studies to our work are newly emerged Nexus \cite{DBLP:journals/corr/abs-2408-15901} and MoLE. Nexus is an enhanced MoE architecture that relies on adaptive routing to reduce the training cost of MoEs and enable efficient adaptation to new tasks. MoLE can dynamically compose multiple trained LoRAs for better generalization. Although effective, we empirically show that they still suffer from three challenges as the number of LoRA grows (see details in Section \ref{sect-comparison}).    

This paper proposes \textit{RadarGate}, a novel geometrical gating method that addresses the above two challenges by introducing rotational interactions of LoRA's representations for scalable LLMs. Our \textit{RadarGate} consists of two key components: a \textit{RotationGate} and a \textit{StretchGate}. The \textit{RotationGate} first learns the complex interactions between LoRAs represented by angles between LoRAs. The output of the \textit{RotationGate} will be fed into \textit{StretchGate}, which further assigns the weights for each LoRA representation. A weight indicates the importance of the LoRA for the task. Experiments show the effectiveness of our \textit{RadarGate}. The main contributions of our work are summarized as follows.

\begin{itemize}
    \item We propose \textit{RadarGate},  
    a novel geometrical gating method that introduces rotational operations of LoRAs representations for scalable LLMs, aiming to boost the expressiveness and facilitate richer feature interactions among multiple LoRAs. Such a straightforward yet effective method provides an extra degree of freedom beyond the weighted-sum mechanisms, thus facilitating the learning of cross-LoRA synergies as the number of LoRA grows.
    \item We present two key components \textit{RotationGate} and \textit{StretchGate}, where the former dynamically generates angles between LoRAs, and the latter further refines this interaction. Such a geometrical transformation properly addresses the two challenges of scalable LoRA-MoE.
    \item We conduct extensive experiments to show the effectiveness of our \textit{RadarGate}, and provide  valuable insights of scalable LLMs. For example, \textit{we observe that the rotations to each pair of representations are contrastive, encouraging closer alignment of semantically similar representations while pushing distant ones further apart,} and such an interesting finding can interpret that the rotations help to converge the representations as LoRA scales up. 
\end{itemize}

\section{Related Work}
\vspace{-5pt}
\noindent

\textbf{Rule-based Gating Methods} 
use pre-defined formulations, such as subspace functional decomposition and recomposition~\cite{DBLP:journals/corr/abs-2402-16843,ludziejewski2024scaling}, gradient-free arithmetic averaging~\cite{huang2024lorahub,DBLP:conf/nips/ZhouLLDHZDCLL22}, and specific arithmetic functions~\cite{liang2024matrix,DBLP:conf/iclr/IlharcoRWSHF23,Zhang2023ComposingPM,DBLP:conf/icml/OstapenkoSPCRCS24,DBLP:conf/nips/NieLFZMLZL024} for LoRA activation. These methods typically follow fixed logic or heuristics to guide expert selection, offering simplicity and low overhead. Although effective, pre-defined rules may face challenges for the unseen tasks due to their limited flexibility. Different from rule-based ones, our \textit{RadarGate} enhances the capability to coordinate different inputs through learnable magnitude scaling and angular rotation modules..

\noindent
\textbf{Learnable Gating Methods} explore deep learning techniques~\cite{DBLP:conf/iclr/ShazeerMMDLHD17,tian2024hydralora,DBLP:conf/iclr/WuHW24,DBLP:journals/corr/abs-2408-15901,DBLP:conf/nips/YunCPWBZXLC24,DBLP:conf/nips/HanNHHS24} or optimization strategies~\cite{he2021fastmoe,hwang2023tutel,aminabadi2022deepspeed,Hwang2023PregatedMA,DBLP:journals/corr/abs-2408-15901} for adaptive selection of LoRAs. These methods aim to dynamically assign experts based on input features or training signals, improving adaptability across domains. Different from these methods, our \textit{RadarGate} expands the degrees of freedom of gating architectures by incorporating a rotation module, achieving better fitting capability and generalization, and demonstrates superior performance in large-scale LoRA scenarios.

\vspace{-10pt}
\section{Motivation}
\vspace{-5pt}

In this section, we first detail existing gating architectures (sec~\ref{sec:composable_loras_architecture_long_eq}), then present two key observations on fitting and generalization. We provide theoretical explanations for these observations (sec~\ref{subsec:obs}) and analyze why the scaling performance of LoRA modules degrades based on our findings (sec~\ref{subsec:scaling}).

\vspace{-5pt}
\subsection{Composable LoRAs Architecture}
\vspace{-5pt}
\label{sec:composable_loras_architecture_long_eq}

This subsection outlines the forward computation and backpropagation of the composable LoRA architecture~\cite{DBLP:conf/iclr/WuHW24}. We feed the input $\mathbf{x} \in \mathbb{R}^{1 \times d_{in}}$ to a neural module $W \in \mathbb{R}^{d_{in} \times d_{out}}$ of a pretrained model. The LoRA group involves $n$ LoRA modules and a gating module with top-$k$ activation ($A_i \in \mathbb{R}^{d_{in} \times r}, B_i \in \mathbb{R}^{r \times d_{out}}$, with rank $r \ll d_{in}, d_{out}$). We denote the output of this composable LoRAs as $\mathbf{y} \in \mathbb{R}^{1 \times d_{out}}$, which can be expressed by a weighted sum of the base model's output and the ones of selected LoRA modules. We give the formulation of $\mathbf{y}$ and the gate $\mathbf{g}$ in Equation \eqref{eq:long_output}.
\begin{equation}
\mathbf{y} = \mathbf{x} W +  \sum_{i=1}^n g_i \mathbf{v}_i, \mathbf{v}_i=\mathbf{x} A_i B_i
\text{ and } \mathbf{g} = \mathrm{topk}\left(\mathrm{softmax}\left( \frac{\mathbf{x} \mathbf{\theta}}{\tau} \right)\right)=[g_1, \dots, g_n],
\label{eq:long_output}
\end{equation}
where $\mathbf{v}_i \in \mathbb{R}^ {d_{out}}$ represents the output of the $i$-th LoRA module, $\mathbf{\theta} \in \mathbb{R}^{d_{in} \times n}$ is the learnable parameter of the gating module, and $g_i \in \mathbb{R}$ is the corresponding gating weight. These weights are derived from the input $\mathbf{x}$ via a gating network. Here $\mathbf{\theta}$ is a learnable projection matrix mapping the input $\mathbf{x}$ to $n$ gating scores (logits), and $\tau$ is the softmax temperature. The $\mathrm{topk}(\cdot)$ function is used to renormalize the top-$k$ weights, while simultaneously setting the remaining gating weights to zero.

\vspace{-5pt}
\subsection{Observation}\label{subsec:obs}
\vspace{-5pt}
In this subsection, we will present two observations regarding fitting and generalization of the LoRAs' gating module, and then provide our insights of the underlying cause from a theoretical perspective.
\obsbox{
\textit{\textbf{Obs I (Underfitting)}: \label{obs: Underfitting}
Existing gating mechanisms struggle to capture complex patterns of ideal $g_i^*(\mathbf{x})$ distribution within the convex cone $\mathcal{H}$, resulting in an underfitting ensemble of multiple LoRAs. }
}

We consider the scenario when the fitting target $\mathbf{y}_{\text{target}}$ is inside the convex cone $\mathcal{H}$ of LoRA representations. We denote the input of the LoRA-based MoE as $\mathbf{x}$. Assume there exists a set of ideal, non-negative weights $g_i^*$ that sum to 1, such that the current fitting target can be expressed as.
\begin{equation}
\Delta\mathbf{y}_{\text{target}} \approx \sum_{i=1}^{n} g_i^* \mathbf{v}_i, g_i^* \ge 0, \sum_{i=1}^{n} g_i^* = 1, \mathbf{v}_i=\mathbf{x} A_i B_i
\label{eq:ideal_combination}
\end{equation}

\obsbox{
\textit{\textbf{Obs II (Poor Generalization)}: \label{obs: Undergeneralization}
Existing gating methods heavily rely on a weighted sum of LoRA representation, thus degrading generalization as the LoRA output is limited in the convex cone $\mathcal{H}$.}}

Each LoRA representation $\mathbf{v}_i$ modifies the input $\mathbf{x}$ in a specific semantic direction.
As shown in Equation~\eqref{eq:long_output}, these representations are weighted and summed to form a composite representation $\mathbf{y}$.
The coefficient $g_i$ only scales the magnitude of $\mathbf{v}_i$', leaving directions unchanged in the vector space.
Thus, $\mathbf{y}$ will be limited to the convex cone $\mathcal{H}$ spanned by non-negative linear combinations of $\{\mathbf{v}_i\}$:
\begin{equation}\label{eq:generalization_conv}
    \mathcal{H} = \text{conv}(\{\mathbf{v}_1, \dots, \mathbf{v}_n\}) = \left\{ \sum_{i=1}^{n} g_i \mathbf{v}_i \mid g_i \ge 0, \sum_{i=1}^{n} g_i = 1 \right\}
\end{equation}
When the target output $\mathbf{y}_{\textbf{target}} \notin \mathcal{H}$, it becomes infeasible to achieve an adequate fit using only magnitude scaling as the single degree of freedom, thereby leading to suboptimal generalization.

\vspace{-5pt}
\subsection{Scaling Issues}\vspace{-5pt}\label{subsec:scaling}

As shown in Fig.~\ref{fig:main_stacked_figure}(\subref{fig:subfig_b})(\subref{fig:subfig_c}), the model accuracy sharply degrades along with LoRA modules scaling up. From the aforementioned \textit{\textbf{Obs I}} and \textit{\textbf{Obs II}}, we provide our insights into scalable LoRAs as follows:

\obsbox{\textbf{Summary.} As the LoRA modules (module numbers and parameters) scale up, the pattern of the target magnitude weights $\mathbf{g}^*$ that existing gating methods need to fit becomes more complex. Moreover, the expressiveness of LoRA representations $\mathbf{v}_i$ remains confined within the convex cone $\mathcal{H}$. Consequently, underfitting and poor generalization become more pronounced as the scale of LoRA modules increases.}

\begin{figure}
\centering
\includegraphics[width=1.0\textwidth]{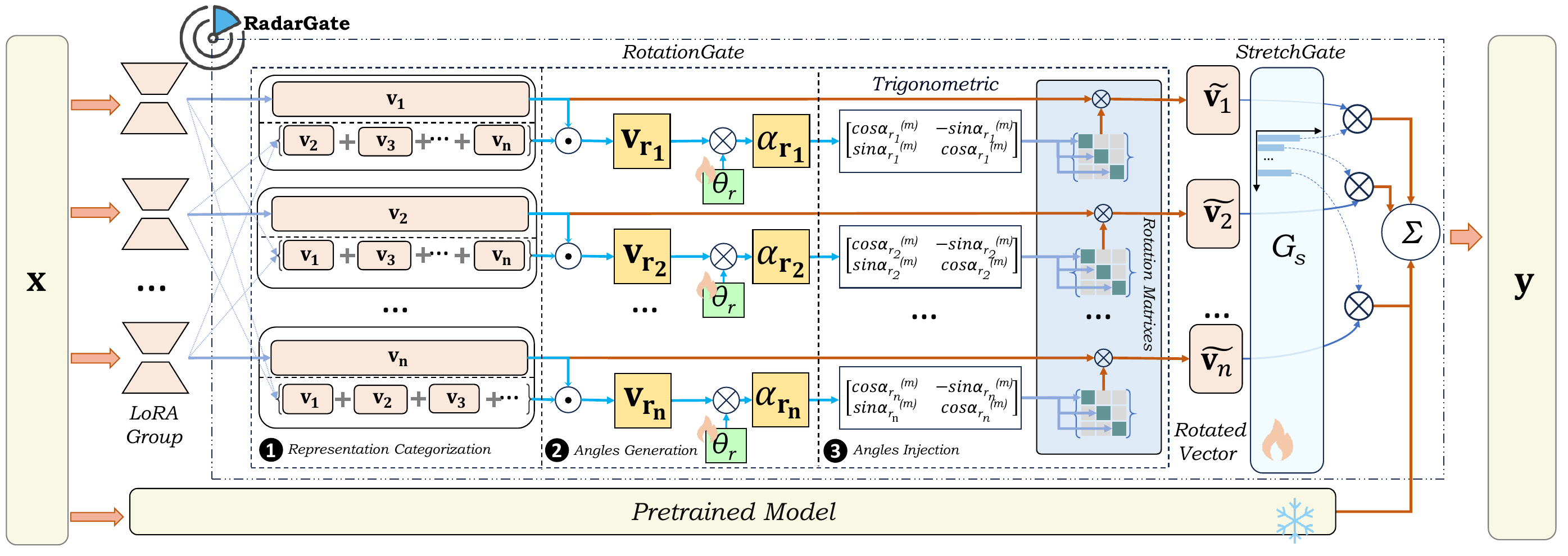}
\caption{\textbf{Workflow of the proposed \textit{RadarGate}.} Two key ingredients are \textit{RotationGate} and \textit{StretchGate}. \textit{RotationGate} takes LoRA representations as inputs and then proceeds to three steps, including 1) LoRA representation categorization, 2) rotation angles generation, and 3) angles injection. The rotated LoRA representation will be stretched in magnitude by \textit{StretchGate} to get the output.}

\label{fig:workflow}
\vskip -0.2in
\end{figure}

\vspace{-10pt}
\section{Our \textit{RadarGate} Method}
\vspace{-5pt}

Figure \ref{fig:workflow} illustrates the workflow of our \textit{RadarGate}. The proposed \textit{RadarGate} consists of two key components including \textit{RotationGate} and \textit{StretchGate}. In this section, we will introduce the workflow of our \textit{RadarGate} (sec~\ref{subsec:workflow}) and theoretically explain how \textit{RadarGate} improves fitting and generalization when LoRA modules scale up (sec~\ref{subsec:theory}). Subsequently, we will demonstrate that the computational and memory overhead incurred by our novel module is negligible (sec~\ref{subsec:complex}).   
\vspace{-5pt}
\subsection{Workflow}\label{subsec:workflow}
\vspace{-5pt}
This subsection details \textit{RadarGate}'s integration of LoRA submodule outputs per layer through angle and magnitude adjustments, as defined below:
\vspace{-3pt}
\begin{equation}
\mathbf{y} = \mathbf{x} W + \sum_{i=1}^n g_i \tilde{\mathbf{v}}_i, \tilde{\mathbf{v}}_i=  G\left( Map(\mathcal{L}_i), \mathbf{x}\right),
\label{eq:9}
\end{equation}
\vspace{-3pt}

here $g_i$ is the output of  \textit{StretchGate}, as shown in Equation \eqref{eq:long_output}. The operator $Map(\mathcal{L}_i)$ is defined as:
\begin{equation}
Map(\mathcal{L}_i) = (A_iB_i, \mathcal{L} - \{A_iB_i\}),\mathcal{L}=\{A_iB_i | i=1,2,...,n\}.
\label{eq:10}
\end{equation}
Here, the minus sign $-$ in $\mathcal{L} - \{A_iB_i\}$ denotes the set difference operation between two sets. $Map(\mathcal{L}_i)$ constructs binary relations between submodule $A_iB_i$ and its reference set $\mathcal{L} - \{A_i B_i\}$.
The function $G(\cdot)$ rotates each submodule’s output $\mathbf{v}_i$ using their relations.
\begin{equation}
G(Map(\mathcal{L}_i),\mathbf{x}) = \mathbf{x}A_iB_i \times \mathcal{R}_i(Map(\mathcal{L}_i))\triangleq \mathbf{v}_i \times \mathcal{R}_i.
\label{eq:gating}
\end{equation}
Here, $\times$ denotes the multiplication operation between matrices, for each LoRA representation $\mathbf{v}_i$, we compute a rotation matrix $\mathcal{R}_i$ with angles governed by $\theta_r$ to learn relative angular relationships. If we denote $d_{in}\triangleq d$, then the rotation matrix is:
\begin{equation}
\mathcal{R}_i = 
\begin{pmatrix}
R_i^{(0)} & 0 & \cdots & 0 \\ 
0 & R_i^{(1)} & \cdots & 0 \\ 
\vdots & \vdots & \ddots & \vdots \\ 
0 & 0 & \cdots & R_i^{(\frac{d-1}{2})} \\ 
\end{pmatrix},
R_i^{(m)} = 
\begin{pmatrix}
\cos \alpha_{r_i}^{(m)} & -\sin \alpha_{r_i}^{(m)} \\ 
\sin \alpha_{r_i}^{(m)} & \cos \alpha_{r_i}^{(m)} \\ 
\end{pmatrix},
\label{eq:11}
\end{equation}
Here, $\alpha_{r_i}^{(m)}$ is the $m$-th component of the rotational control factor $\alpha_{r_i} \in \mathbb{R}^{\frac{d}{2}}$, which is calculated as:
\begin{equation}
\alpha_{r_i} = \left(\mathbf{x}\times Map(\mathcal{L}_i)^{(0)}\right) \odot \left(\mathbf{x}\times \sum_{A_jB_j \in S_i} A_jB_j\right) \times \theta_r, S_i = \{A_jB_j | A_jB_j \in Map(\mathcal{L}_i)^{(1)}\}
\label{eq:13}
\end{equation}
where $\odot$ denotes the element-wise Hadamard product, $Map(\mathcal{L}_i)^{(t)}$ is the $t$-th submodule output in $A_iB_i$'s reference set. Submodule outputs and references undergo Hadamard product, then matrix multiplication with learnable $\theta_r$, injecting relative angular information to update $\mathbf{v}_i$. It should be noted that we use the map to construct the binary relation for the rotating reference frame because the absolute value of the angle is meaningless, and only the relative value of the rotation angle matters. We provide more details about the workflow of the proposed \textit{RadarGate} in Appendix \ref{app:wor}.

\vspace{-5pt}
\subsection{Theoretical Demonstration}\label{subsec:theory}
In this subsection, we will explain the reasons why our \textit{RadarGate} can effectively improve the underfitting phenomenon and enhance the generalization ability from the theoretical perspective.\footnote{Proofs of Lemma 1 and 2 are provided in Appendix ~\ref{lem_proof}}

\textbf{Mitigating Underfitting.}

\obsbox{
\begin{lemma}\label{lem:1}
    For nested function hypothesis spaces $\mathcal{K}_1 \subseteq \mathcal{K}_2$, the optimal fitting error $\mathcal{E}_t = \inf_{f \in \mathcal{K}_t} L(f, g^*)$ of the target function $g^*$ under the loss function $L$ necessarily satisfies $\mathcal{E}_2 \leq \mathcal{E}_1$.
\end{lemma}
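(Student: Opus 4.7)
The plan is to derive this inequality directly from the definition of the infimum applied to nested sets, with essentially no additional machinery. First, I would unpack the hypothesis $\mathcal{K}_1 \subseteq \mathcal{K}_2$: every candidate $f \in \mathcal{K}_1$ is automatically a member of $\mathcal{K}_2$, so the image set of achievable losses $S_1 = \{L(f, g^*) : f \in \mathcal{K}_1\}$ is a subset of $S_2 = \{L(f, g^*) : f \in \mathcal{K}_2\}$.

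Next, I would invoke the standard monotonicity of the infimum: for any two nonempty subsets of $\mathbb{R}$ (bounded below) with $S_1 \subseteq S_2$, one has $\inf S_2 \leq \inf S_1$, since any lower bound for $S_2$ is in particular a lower bound for $S_1$, while conversely every element of $S_1$ is available to the infimum over $S_2$. Applied to our two loss-image sets this gives
\begin{equation*}
\mathcal{E}_2 \;=\; \inf_{f \in \mathcal{K}_2} L(f, g^*) \;\leq\; \inf_{f \in \mathcal{K}_1} L(f, g^*) \;=\; \mathcal{E}_1,
\end{equation*}
which is exactly the claim. If I wanted a more self-contained argument avoiding the abstract monotonicity lemma, I could instead pick, for any $\varepsilon > 0$, a near-minimizer $f_\varepsilon \in \mathcal{K}_1$ with $L(f_\varepsilon, g^*) \leq \mathcal{E}_1 + \varepsilon$, observe $f_\varepsilon \in \mathcal{K}_2$ by containment, conclude $\mathcal{E}_2 \leq L(f_\varepsilon, g^*) \leq \mathcal{E}_1 + \varepsilon$, and let $\varepsilon \to 0$.

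The main obstacle here is not technical but conceptual: this is essentially a one-line observation, and the only caveats are bookkeeping. I would verify that the infima are well-defined (it suffices that $L$ be bounded below, e.g. a nonnegative loss, which is standard for the fitting losses considered in the paper), and I would note that no attainment, convexity, or compactness assumption on $\mathcal{K}_t$ is needed because only a one-sided inequality is asserted. Finally, I would connect the lemma back to the paper's narrative: by introducing the rotational component, \emph{RadarGate} enlarges the effective hypothesis class from that of the weighted-sum gate to a strict superset, so Lemma \ref{lem:1} immediately certifies that the optimal fitting error cannot increase; the remaining work (presumably Lemma 2) is to show that the containment is strict in a way that matters for the underfitting targets characterized in Observation I.
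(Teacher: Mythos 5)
Your proof is correct and follows essentially the same route as the paper: both deduce $\mathcal{E}_2 \le \mathcal{E}_1$ from the containment $\mathcal{K}_1 \subseteq \mathcal{K}_2$ via monotonicity of the infimum over nested sets, with the paper implicitly assuming (and you explicitly noting) that the loss is bounded below so the infima are well-defined. Your optional $\varepsilon$-near-minimizer argument is a harmless elaboration of the same one-line observation, not a different method.
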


}
    
Treating the gating architecture as a function space $\mathcal{K}_G$, existing gating $\mathcal{K}_{\text{gate}}$ and our \textit{RadarGate} $\mathcal{K}_{\text{ours}}$ satisfy $\mathcal{K}_{\text{gate}} \subseteq \mathcal{K}_{\text{ours}}$. Due to the added \textit{RotationGate} module, \textit{RadarGate} has an advantage for ideal mapping $g^*$ (Lemma~\ref{lem:1}). Specifically, during inference, LoRA representations are adjusted via magnitude scaling and vector rotation. During optimization, taking MSE loss as an example,
\begin{equation}
\mathcal{L}(x) = \left\| \Delta \mathbf{y}_{\text{target}}(\mathbf{x}) - \sum_{i=1}^n g_i(\mathbf{x}) \left( \mathbf{v}_i \mathcal{R}_i(\mathbf{x}; \theta_r) \right) \right\|^2,
\end{equation}
\textit{RadarGate} optimizes by adjusting gradients of scaling parameter $\theta$ (i.e., $\frac{\partial \mathcal{L}}{\partial g_i} \frac{\partial g_i}{\partial \theta}$) and rotation parameter $\theta_r$ (i.e., $\frac{\partial \mathcal{L}}{\partial \mathcal{R}_i} \frac{\partial \mathcal{R}_i}{\partial \alpha_{r_i}} \frac{\partial \alpha_{r_i}}{\partial \theta_r}$). Higher freedom and extra optimization paths enhance model flexibility in fitting data, alleviating underfitting from $\mathcal{K}_{\text{gate}}$'s insufficient expressiveness.

\textbf{Improving Generalization.}

\obsbox{
\begin{lemma}\label{lem:2}
    Define the fixed output space $\mathcal{H} = \left\{ \sum_{i=1}^n \alpha_i v_i \mid \alpha_i \ge 0,\ \sum_{i=1}^n \alpha_i = 1 \right\}$ with fixed basis vectors $\{v_i\}$. Transforming $v_i$ via input-dependent rotation $R_i(x)$ gives $\tilde{v}_i(x) = v_i R_i(x)$. Define dynamic output space $\mathcal{H}'(x) = \left\{ \sum_{i=1}^n \alpha_i \tilde{v}_i(x) \mid \alpha_i \ge 0,\ \sum_{i=1}^n \alpha_i = 1 \right\}$. The union $\mathcal{S} = \bigcup_x \mathcal{H}'(x)$ strictly contains $\mathcal{H}$, i.e., $\mathcal{S} \supset \mathcal{H}$.

\end{lemma}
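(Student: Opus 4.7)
The plan is to establish $\mathcal{S} \supsetneq \mathcal{H}$ by proving two inclusions separately: first $\mathcal{H} \subseteq \mathcal{S}$, and then exhibiting an explicit element of $\mathcal{S} \setminus \mathcal{H}$ to establish strictness. The first direction should be nearly automatic once one finds an input $x_0$ at which every rotation collapses to the identity; the second direction is the substantive part and requires arguing that the rotation orbit of at least one basis vector escapes the convex hull.

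For $\mathcal{H} \subseteq \mathcal{S}$, I would identify an input $x_0$ at which every $R_i(x_0)$ reduces to the identity. Inspecting the construction of the rotational control factor in Equation~\eqref{eq:13}, whenever either Hadamard factor $\mathbf{x} \times Map(\mathcal{L}_i)^{(0)}$ or $\mathbf{x} \times \sum_{A_j B_j \in S_i} A_j B_j$ vanishes (for instance $\mathbf{x} = \mathbf{0}$), all components $\alpha_{r_i}^{(m)}(x_0) = 0$, so each $2\times 2$ block $R_i^{(m)}(x_0)$ becomes the $2\times 2$ identity. Consequently $\tilde{v}_i(x_0) = v_i$ for every $i$, giving $\mathcal{H}'(x_0) = \mathcal{H}$, and therefore $\mathcal{H} \subseteq \bigcup_{x} \mathcal{H}'(x) = \mathcal{S}$.

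For strictness, I would construct an input $x_1$ at which a single rotation $R_{i^*}(x_1)$ genuinely moves its basis vector $v_{i^*}$ outside the convex hull $\mathcal{H}$. Concretely, pick one index $i^*$ and choose $x_1$ so that exactly one planar angle $\alpha_{r_{i^*}}^{(m)}$ is nonzero while the remaining angles vanish. Because $\mathcal{H}$ is a bounded convex hull of $n$ fixed vectors while the $SO$-orbit of $v_{i^*}$ under $R_{i^*}(\cdot)$ is a norm-preserving curve of positive measure transverse to $\mathrm{span}\{v_i\}$, the orbit cannot be entirely contained in $\mathcal{H}$. Setting the scaling weights to $\alpha_{i^*} = 1$ and $\alpha_j = 0$ for $j \ne i^*$ then produces the point $\tilde{v}_{i^*}(x_1) \in \mathcal{H}'(x_1)$, which by construction lies outside $\mathcal{H}$ and hence witnesses $\mathcal{S} \setminus \mathcal{H} \ne \emptyset$.

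The main obstacle is making the rotation-range argument rigorous: one must show that the parameterization $x \mapsto R_i(x)$ induced by the learnable matrix $\theta_r$ and the Hadamard product in Equation~\eqref{eq:13} is expressive enough to realize at least one nontrivial angle. I would either treat $\theta_r$ as a free learnable parameter so that $\alpha_{r_i}^{(m)}$ can sweep a nontrivial interval around $0$, or impose a mild genericity condition on $\{v_i\}$ excluding the degenerate case where $\mathcal{H}$ already fills the ambient subspace. A useful simplification is that whenever $\mathcal{H}$ is lower-dimensional than the ambient space (e.g.\ an $(n-1)$-dimensional affine hull in $\mathbb{R}^{d_{out}}$ with $d_{out} \ge n$), \emph{any} rotation with nonzero angle in a direction normal to the affine hull automatically ejects $\tilde{v}_{i^*}(x_1)$ from $\mathcal{H}$, which sidesteps a delicate geometric case analysis.
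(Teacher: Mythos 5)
Your proof follows essentially the same route as the paper: you establish $\mathcal{H} \subseteq \mathcal{S}$ by exhibiting an input at which every $R_i$ collapses to the identity (the paper uses exactly this step, noting that when $R_i(x)$ is the identity rotation $\mathcal{H}'(x) = \mathcal{H}$), and you establish strictness by arguing that a nontrivial rotation carries some $\tilde{v}_{i^*}(x)$ outside the fixed convex hull $\mathcal{H}$. The caveat you raise --- that the strictness direction implicitly needs a genericity or expressiveness hypothesis on the rotation family (otherwise $R_i(x) \equiv I$ would give $\mathcal{S} = \mathcal{H}$) --- is a gap the paper's proof shares, since it, too, only asserts that ``for generic input-dependent rotations'' the rotated vectors escape $\mathcal{H}$; your suggestion to invoke a lower-dimensional affine hull or a free learnable angle is a reasonable way to close that gap, but it is a patch both arguments need rather than a divergence from the paper's approach.
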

}    
According to Equation~\ref{eq:generalization_conv}, existing gating outputs are confined to fixed convex cone $\mathcal{H}$, so $\Delta y_{\text{target}} \notin \mathcal{H}$ cannot be fitted. \textit{RadarGate} introduces input-dependent rotation $R_i(x)$ (Lemma~\ref{lem:2}) to expand the space to dynamic convex cones $\mathcal{H}'(x)$. According to Lemma~\ref{lem:2}, we have $\bigcup_x \mathcal{H}'(x) \supset \mathcal{H}$. Thus $\Delta y_{\text{target}} \in \bigcup_x \mathcal{H}'(x) \setminus \mathcal{H}$ (outside $\mathcal{H}$) can still be fitted. This rotation-induced basis alteration and space expansion improve the generalization of gating modules on various tasks outside the cone $\mathcal{H}$.

\textbf{Enhancing Scaling.} As the LoRA modules scale up, the complexity of approximating the ideal weights $g_i^*(\mathbf{x})$ grows, and the limitations of the fixed convex cone $\mathcal{H}$ become more pronounced. Combining Lemmas~\ref{lem:1} and~\ref{lem:2} with the preceding analyses, we can summarize our insights as follows: 
\obsbox{
\textbf{Summary.} \textit{RadarGate}'s rotational mechanism $\mathcal{R}_i(\mathbf{x})$ expands the hypothesis space to $\mathcal{K}_{\text{ours}} \supset \mathcal{K}_{\text{gate}}$ and the effective output space to $\bigcup_x \mathcal{H}'(x) \supset \mathcal{H}$, providing the necessary flexibility to better fit complex $g_i^*(\mathbf{x})$ and generalize to a wider range of target outputs, thereby sustaining performance at larger scales.
}
\vspace{-5pt}
\subsection{Computational and Memory Complexity}\vspace{-5pt}\label{subsec:complex}

For a sequence input dimension of \( L \times d_{\text{in}} \), we decompose the parameters of \(\textit{RotationGate}\) into two matrices of \( d_{\text{in}} \times r_{\text{a}} \) and \( r_{\text{a}} \times d_{\text{in}} \) through low-rank factorization. When these parameters satisfy \( n, r, k, r_{\text{a}} \ll \min\{d_{\text{in}}, d_{\text{out}}\} \), the computational and memory complexity $O$ and $M$ can be simplified as:  
\begin{equation}
O_{\text{s}} = O\left(L \cdot \min\{d_{\text{in}}, d_{\text{out}}\}\right) = O_{\text{r}}, \quad M_{\text{s}} \approx M_{\text{r}}
\end{equation}
This result indicates that the computational and memory complexities of \(\textit{RadarGate}\) are asymptotically of the same order of magnitude as those of existing gating methods.\footnote{For detailed theoretical derivation, please refer to Appendix \ref{app:complex}.}

\vspace{-10pt}
\section{Experiments}
\vspace{-5pt}
\subsection{Experimental Setup}
\vspace{-5pt}
\textbf{Environment.}  

All experiments are conducted on an Ubuntu 20.04.5 LTS server with PyTorch, featuring 64GB RAM, an Intel Xeon Silver 4210 CPU, and dual NVIDIA A40 GPUs (48GB each).

\textbf{Datasets.}  
We use nine datasets from the v1 version of the FLAN dataset \cite{weifinetuned} as the base LoRA module training and test set for the LoRA module independence experiments. In later experiments, the v2 FLAN dataset is categorized by language, mathematics, reasoning, and translation for LoRA module training. Our method \textit{RadarGate} is compared with baselines on six large-scale comprehensive benchmarks, including NLP benchmarks: GLUE \cite{socher2013recursive}, MMLU \cite{hendryckstest2021}, WMT14 \cite{bojar-EtAl:2014:W14-33}; mathematics benchmarks: MATH \cite{lightman2023lets}, GSM8K \cite{deepseekai2025deepseekr1incentivizingreasoningcapability} and the science benchmark GPQA \cite{rein2024gpqa}.

\textbf{Baselines.}  

\textit{RadarGate} is compared with multi-LoRA/gating architectures across two categories: 1) Rule-based (LoraHub \cite{huang2024lorahub}, PEMs \cite{Zhang2023ComposingPM}, Arrow \cite{DBLP:conf/icml/OstapenkoSPCRCS24}, direct LoRA) and 2) Learnable (HydraLoRA \cite{tian2024hydralora}, MoLE \cite{DBLP:conf/iclr/WuHW24}, OMoE \cite{Feng2025OMoEDM}). Evaluated gating mechanisms include Stretch-Only (MoLE \cite{DBLP:conf/iclr/WuHW24}), Rotation-Only (from \textit{RadarGate}), Nexus \cite{DBLP:journals/corr/abs-2408-15901}, and Tutel \cite{hwang2023tutel}. \textit{RadarGate} integrates Stretch-Only Gate and Rotation-Only Gate to jointly improve generalization and scalability.

\textbf{Metric.}  
Accuracy serves as the primary metric. Predictions are evaluated on the aforementioned benchmarks using their standard protocols. Overall accuracy is calculated by matching predictions against reference answers.

\vspace{-5pt}
\subsection{Training Details}  
\vspace{-5pt}

\textit{RadarGate} employs frozen pretrained weights with LoRA standard initialization (rank $8$, LoRA \( \alpha = 32 \), learning rate \( 1e{-4} \), batch size $4$, dropout = $0.1$). Gate training uses identical hyperparameters but fewer parameters than baselines, with frozen pretrained/LoRA weights isolating gating effects. Inference maintains frozen parameters and benchmark consistency \(\text{max\_new\_tokens} = 512 \).

\vspace{-5pt}
\subsection{Performance}
\vspace{-5pt}
\label{sect-comparison}
We validate \textit{RadarGate} across multi-LoRA architectures and provide experimental insights\footnote{Details about experiments are given in Appendix \ref{app:exp_gen} and \ref{app:exp_sca}.}. Figure \ref{fig:ablation_studies}(\subref{fig:fitting_a})  confirms it has excellent fitting capability under matched training/test conditions, while Table \ref{fig:gen} demonstrates superior generalization. Figure \ref{fig:main} reveals scalability improvements with increasing module and parameter, and Figure \ref{fig:ablation_studies}(\subref{fig:ablation_a})(\subref{fig:ablation_b})  shows the results of the ablation study. 

\begin{figure}[htpb]
\setlength{\abovecaptionskip}{0cm}
\setlength{\belowcaptionskip}{0cm}
  \centering
  \subfloat[]{\includegraphics[width=0.37\textwidth,keepaspectratio]{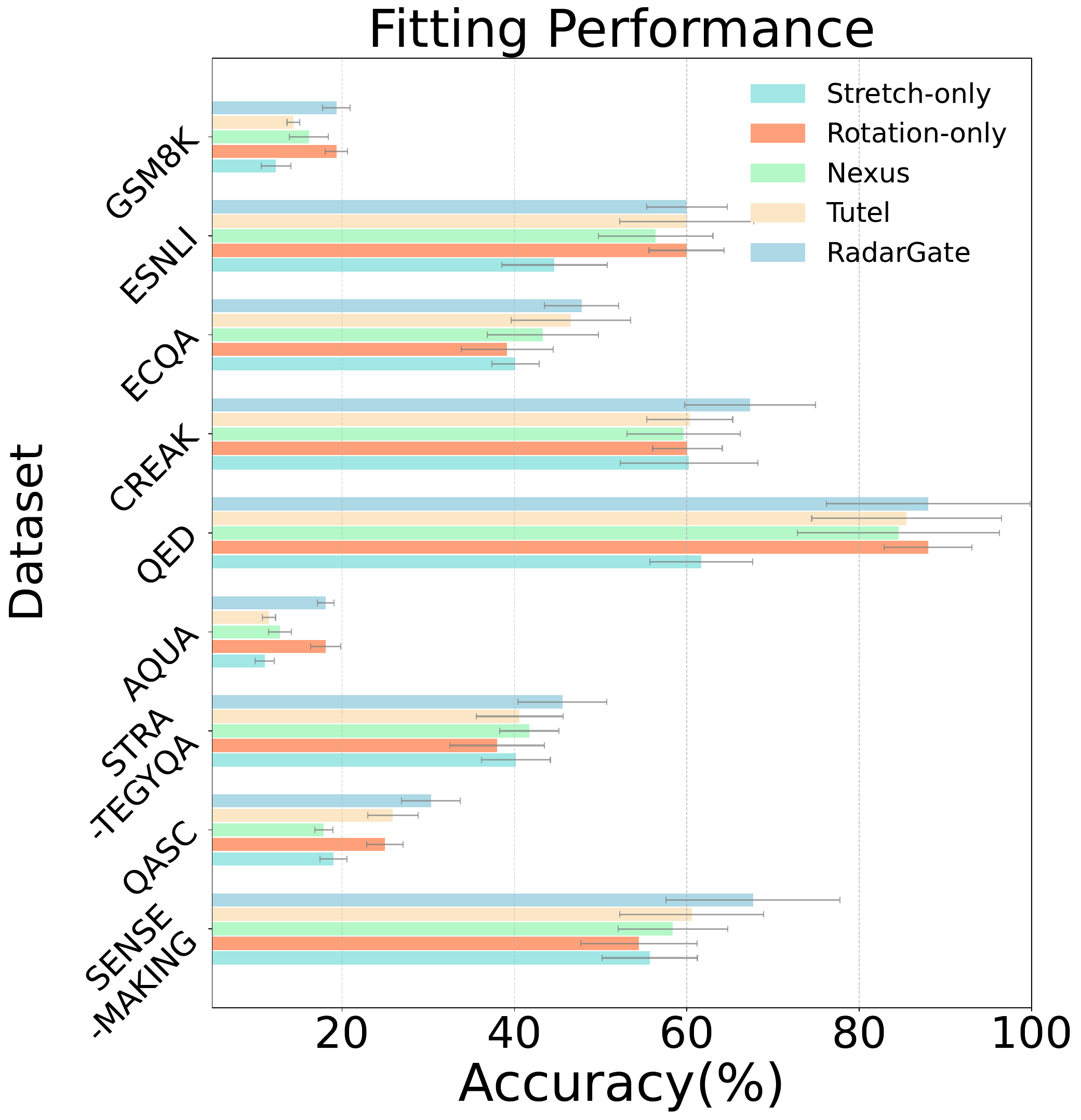}\label{fig:fitting_a}} 
  \hfill
  \subfloat[]{\includegraphics[width=0.31\textwidth]{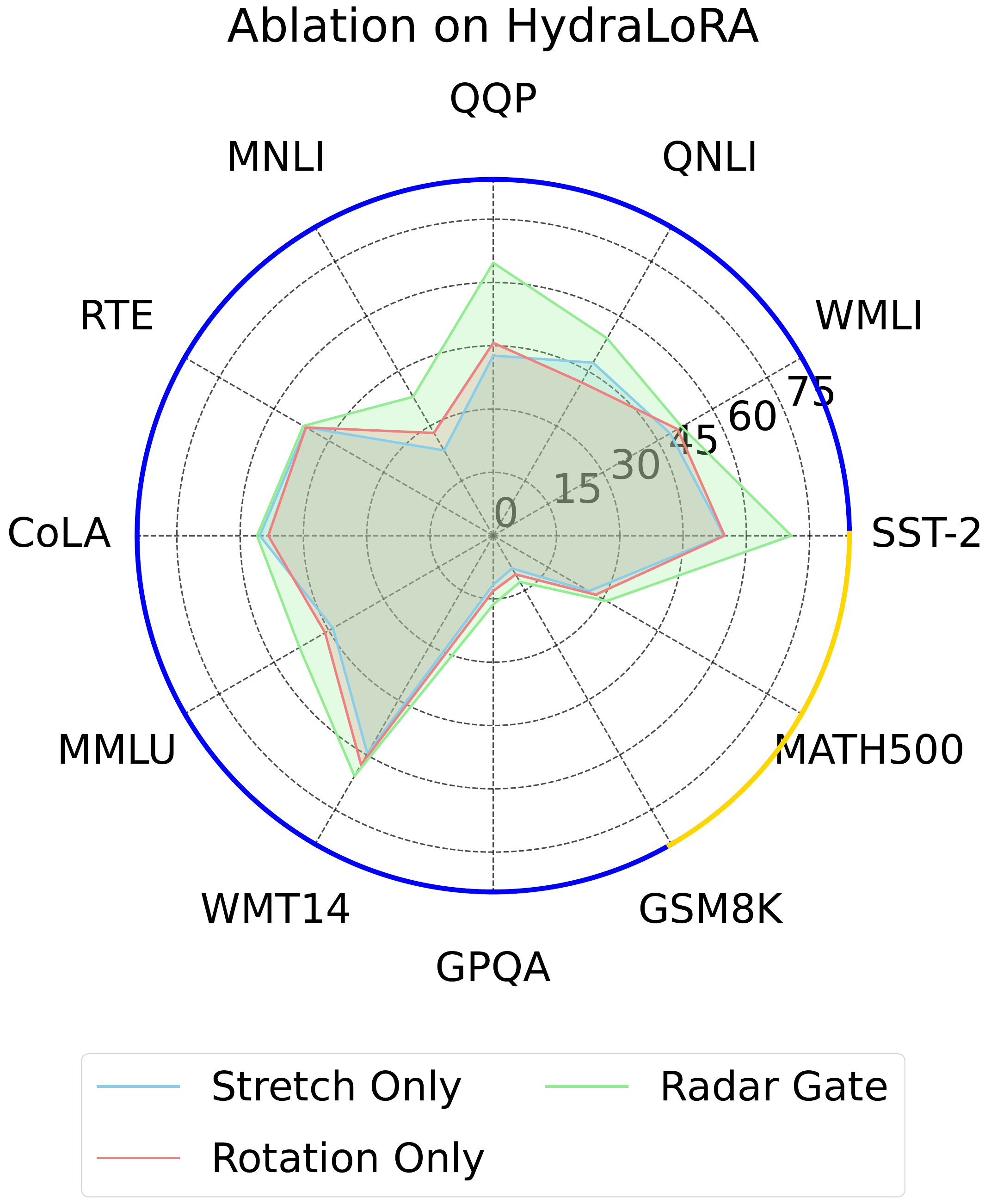}\label{fig:ablation_a}}
  \hfill
  \subfloat[]{\includegraphics[width=0.31\textwidth]{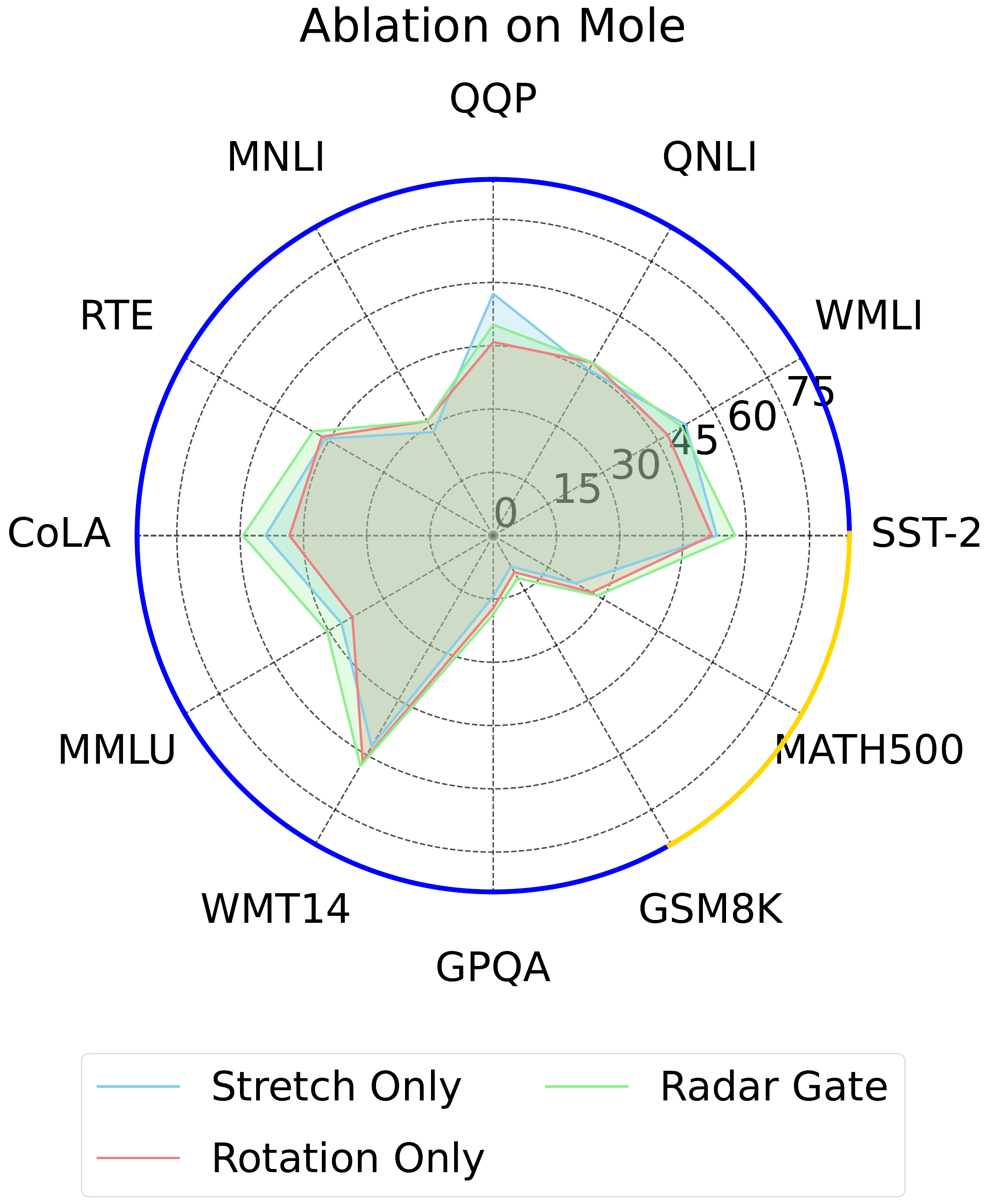}\label{fig:ablation_b}}
  \caption{\textbf{Performance on Fitting and Ablation.} 
Figure (a) shows performance of fitting capability on same-source training/test sets, while Figures (b) and (c) show ablation results for \textit{RadarGate}'s \textit{StretchGate} and \textit{RotationGate} components.} 
  \label{fig:ablation_studies}
\end{figure}

\textbf{Experiments on Fitting.}
We validate \textit{RadarGate} on nine tasks to assess the fitting capability under three frameworks. Figure \ref{fig:ablation_studies}(\subref{fig:fitting_a}) shows \textit{RadarGate} outperforms baselines in MoLE across nine tasks, we make the following observation:

\textbf{Obs.\ding{182} Baseline methods exhibit lower fitting performance than \textit{RadarGate} when training and test sets are from the same source.} As shown in Figure \ref{fig:ablation_studies}(\subref{fig:fitting_a}), our method achieves state-of-the-art performance in over 90\% of tasks compared to other baseline methods, and in some tasks, it even outperforms certain baselines by more than 20\% in performance. This indicates that \textit{RadarGate} has a stronger fitting capability.

\begin{table*}[t]
\setlength{\abovecaptionskip}{-0.1cm}
\setlength{\belowcaptionskip}{-0cm}
\caption{\textbf{Performance on Generalization.} Comparsions of our proposed \textit{RadarGate} with existing 4 baseline methods regarding generalization under 2 composable LoRAs architectures.}
\label{fig:gen}
\vskip 0.15in
\renewcommand{\arraystretch}{1.2}
\begin{center}
\begin{LARGE}
\begin{sc}
\resizebox{\textwidth}{!}{ 
\begin{tabular}{l@{\hspace{-12pt}}ccccccccccc} 
\toprule
{\fontsize{18}{16}\selectfont \textbf{\textsc{\multirow{3}{*}[-1.2ex]{Benchmark}}}} & {\fontsize{18}{16}\selectfont \multirow{3}{*}[-1.2ex]{\textbf{Task}}} & \multicolumn{2}{c}{ \textbf{Rule-based}} & \multicolumn{8}{c}{\textbf{Learnable}} \\
\cmidrule(r){3-4}\cmidrule(r){5-12}
& & \multirow{2}{*}[-1.0ex]{\textbf{Lorahub}} & \multirow{2}{*}[-1.0ex]{\textbf{Arrow}} & \multicolumn{4}{c}{\textbf{HydraLoRa}} & \multicolumn{4}{c}{\textbf{MoLE}}  \\
\cmidrule(r){5-8}\cmidrule(r){9-12}
& & & & \textbf{\makecell{Stretch-\\Only Gate}} & \textbf{Nexus} & \textbf{Tutel} & \textbf{\makecell{\textit{RadarGate}\\(ours)}} & \textbf{\makecell{Stretch-\\Only Gate}} & \textbf{Nexus} & \textbf{Tutel} & \textbf{\makecell{\textit{RadarGate}\\(ours)}} \\
\toprule
{\fontsize{18}{16}\selectfont\multirow{7}{*}{\textbf{GLUE}}}
& SST-2 &  \underline{16.67\%} & \textbf{18.33\%}  & 54.67\% & 53.35\% & \underline{58.33\%} & \textbf{70.67\%} &  53.01\% & 48.69\% & \underline{56.30\%} & \textbf{57.33\%} \\
& WNLI &  \textbf{22.62\%} & \underline{20.40\%}  & \underline{48.42\%}  & 44.89\% & 41.68\% & \textbf{51.58\%} &  \textbf{52.63\%} & 42.16\% & 48.11\% & \underline{51.58\%} \\
& QNLI &  \underline{17.11\%} & \textbf{19.42\%}  & \underline{47.33\%} & 46.16\% & 40.74\% & \textbf{54.00\%} &  \underline{45.33\%} & 36.92\% & 37.52\% & \textbf{47.33\%} \\
& QQP &  \textbf{21.92\%} & \underline{17.21\%}  & 42.67\%  & \underline{45.47\%} & 45.41\% & \textbf{64.67\%} &  \textbf{57.33\%} & 47.99\% & 47.87\% & \underline{50.00\%} \\
& MNLI &  \underline{10.59\%} & \textbf{9.61}\%  & 23.33\% & 22.98\% & \underline{24.06\%} & \textbf{38.00\%} &  28.33\% & 24.74\% & \underline{30.37\%} & \textbf{31.33\%} \\
& RTE &  \underline{18.67\%} & \textbf{21.33\%}  & \underline{51.33\%} & 48.68\% & 49.82\% & \textbf{52.00\%} &  46.00\%  & 40.32\% & \underline{47.12\%} & \textbf{49.33\%} \\
& CoLA &  \underline{19.67\%} & \textbf{22.33\%}  & \underline{55.33\%} & 50.34\% & 53.72\% & \textbf{56.00\%} &  54.00\% & \underline{55.18\%} & 49.12\% & \textbf{59.33\%} \\
\midrule
{\fontsize{18}{16}\selectfont \multirow{7}{*}{\textbf{MMLU}}}
& ARC-HARD &  \underline{12.33\%} & \textbf{19.67\%} & \underline{38.67\%} & 37.24\% & 36.37\% & \textbf{41.33\%} &  \underline{35.33\%} & \textbf{40.00\%} & 33.68\% & \textbf{40.00\%} \\
& SCI-MID &  \textbf{21.33\%} & \underline{19.33\%} & 55.56\% & \underline{57.76\%} & \textbf{58.89\%} & \textbf{58.89\%} &  \underline{41.11\%} & 37.16\% & \textbf{50.00\%} & \textbf{50.00\%} \\
& RACE &  \textbf{13.67\%} & \underline{13.00\%} & 32.00\% & 37.64\% & \underline{37.93\%} & \textbf{40.00\%} &  34.67\% & \textbf{40.00\%} & \underline{34.92\%} & \textbf{40.00\%} \\
& OBQA &  \textbf{14.67\%} & \underline{10.21\%} & 34.67\% & \textbf{41.33\%} & \underline{39.28\%} & \textbf{41.33\%} &  30.00\% & 33.88\% & \underline{34.31\%} & \textbf{35.33\%} \\
& MC-TEST &  \textbf{25.33\%} & \underline{21.67\%} & 57.33\% & 70.89\% & \underline{71.76\%} & \textbf{78.00\%} &  \underline{58.67\%}  & 54.51\% & 58.66\% & \textbf{59.33\%} \\
& AUX-LAW-90S &  \underline{11.67\%} & \textbf{13.00\%} & 26.00\% & 31.01\% & \underline{31.13\%} & \textbf{33.33\%} &  32.67\% & \underline{35.20\%} & 30.38\% & \textbf{36.67\%} \\
& ARC-EASY &  \underline{11.67\%} & \textbf{12.33\%} & \textbf{56.67\%}  & \underline{48.08\%} & 47.77\% & \underline{52.67\%} &  36.67\% & \underline{39.35\%} & 36.88\% & \textbf{42.00\%} \\
& SCI-ELEM & \underline{15.33\%} & \textbf{16.00\%} & 52.17\%  & \underline{54.59\%} & \textbf{60.87\%} & \textbf{60.87\%} &  \textbf{55.43\%} & 50.20\% & 50.84\% & \underline{52.17\%} \\
\midrule
{\fontsize{18}{16}\selectfont \multirow{3}{*}{\textbf{WMT14}}}
& EN-CD &  \underline{35.33\%} & \textbf{39.11\%} & 60.67\% & \textbf{63.33\%} & \underline{63.04\%} & \textbf{63.33\%} &  \underline{59.33\%} & 57.69\% & 56.72\% & \textbf{62.67\%} \\
& EN-RU &  \underline{41.67\%} & \textbf{40.00\%} & 57.33\% & 57.37\% & \underline{57.95\%} & \textbf{65.67\%} &  56.67\% & 55.41\% & \underline{60.55\%} & \textbf{61.33\%} \\
& EN-DE &  \textbf{35.00\%} & \underline{34.47\%} & 61.00\% & \textbf{68.33\%} & \underline{68.22\%} & \textbf{68.33\%} &  57.00\% & \textbf{65.00\%} & \underline{62.35\%} & \textbf{65.00\%} \\
\midrule
{\fontsize{18}{16}\selectfont \multirow{1}{*}{\textbf{GPQA}}}
& GPQA &  \underline{3.58\%} & \textbf{4.39\%} & 9.67\% & \underline{13.10\%} & 12.15\% & \textbf{15.33\%} &  11.00\% & \underline{17.20\%} & 14.53\% & \textbf{18.67\%} \\
\midrule
{\fontsize{18}{16}\selectfont \multirow{1}{*}{\textbf{MATH}}}
& MATH &  \underline{3.35\%} & \textbf{4.11\%} & 7.33\% & 7.82\% & \underline{8.80\%} & \textbf{12.67\%} &  6.33\% & \underline{10.02\%} & \textbf{11.67\%} & \textbf{11.67\%} \\
\midrule
{\fontsize{18}{16}\selectfont \multirow{1}{*}{\textbf{GSM8K}}}
& GSM8K &  \textbf{18.67\%} & \underline{15.11\%}  & 23.33\% & \underline{29.47\%} & 26.08\% & \textbf{31.00\%} &  21.58\%  & 21.40\% & \underline{26.92\%} & \textbf{28.47\%} \\
\bottomrule
\end{tabular}
} 
\end{sc}
\end{LARGE}
\end{center}
\vskip -0.1in
\end{table*}

\textbf{Experiments on Generalization.}
We evaluate \textit{RadarGate} on six well-known benchmarks to assess its generalization ability under three frameworks, with the settings involving different source test and training datasets. Table \ref{fig:gen} demonstrates the generalization performance of \textit{RadarGate} (top-k = $2$). Observations include:

\textbf{Obs.\ding{183} 
\textit{RadarGate} demonstrates optimal generalization performance when the training and test sets are from different sources.} Our \textit{RadarGate} achieves 30\%-50\% higher accuracy than rule-based methods and 5\%–10\% improvements over learnable baselines. This shows that the proposed \textit{RadarGate} adapts to different LoRA architectures, surpassing baselines in generalization while preserving module independence.

\textbf{Experiments on Scaling.}
We analyze LoRA modules and parameter scalability's impact on performance and can make the following observations.

\begin{figure}[htbp]
\centering
\setlength{\abovecaptionskip}{3pt}
\setlength{\belowcaptionskip}{-3pt}  
\begin{minipage}{\textwidth}
    \centering
    
    \begin{subfigure}[t]{0.32\textwidth}
        \includegraphics[width=\textwidth]{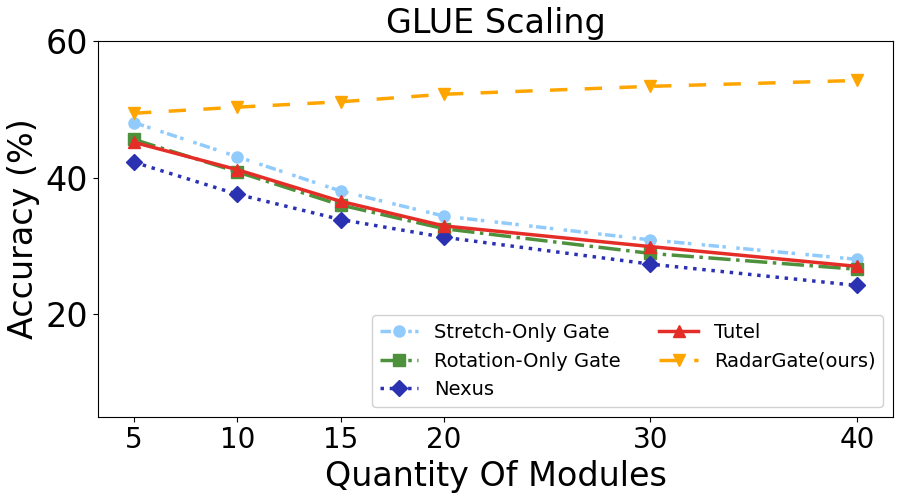}
        \caption{}
        \label{fig:subfig1}
    \end{subfigure}
    \hfill
    \begin{subfigure}[t]{0.32\textwidth}
        \includegraphics[width=\textwidth]{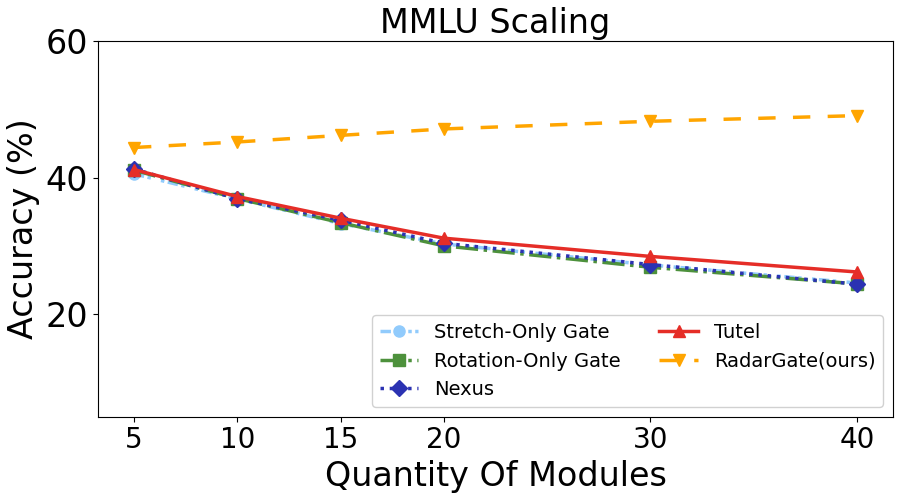}
        \caption{}
        \label{fig:subfig2}
    \end{subfigure}
    \hfill
    \begin{subfigure}[t]{0.32\textwidth}
        \includegraphics[width=\textwidth]{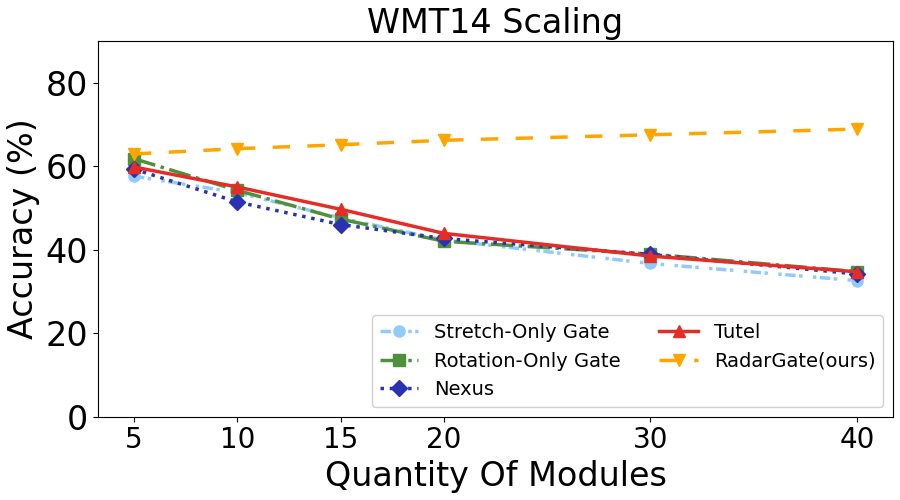}
        \caption{}
        \label{fig:subfig3}
    \end{subfigure}
    \hfill
    \begin{subfigure}[t]{0.32\textwidth}
        \includegraphics[width=\textwidth]{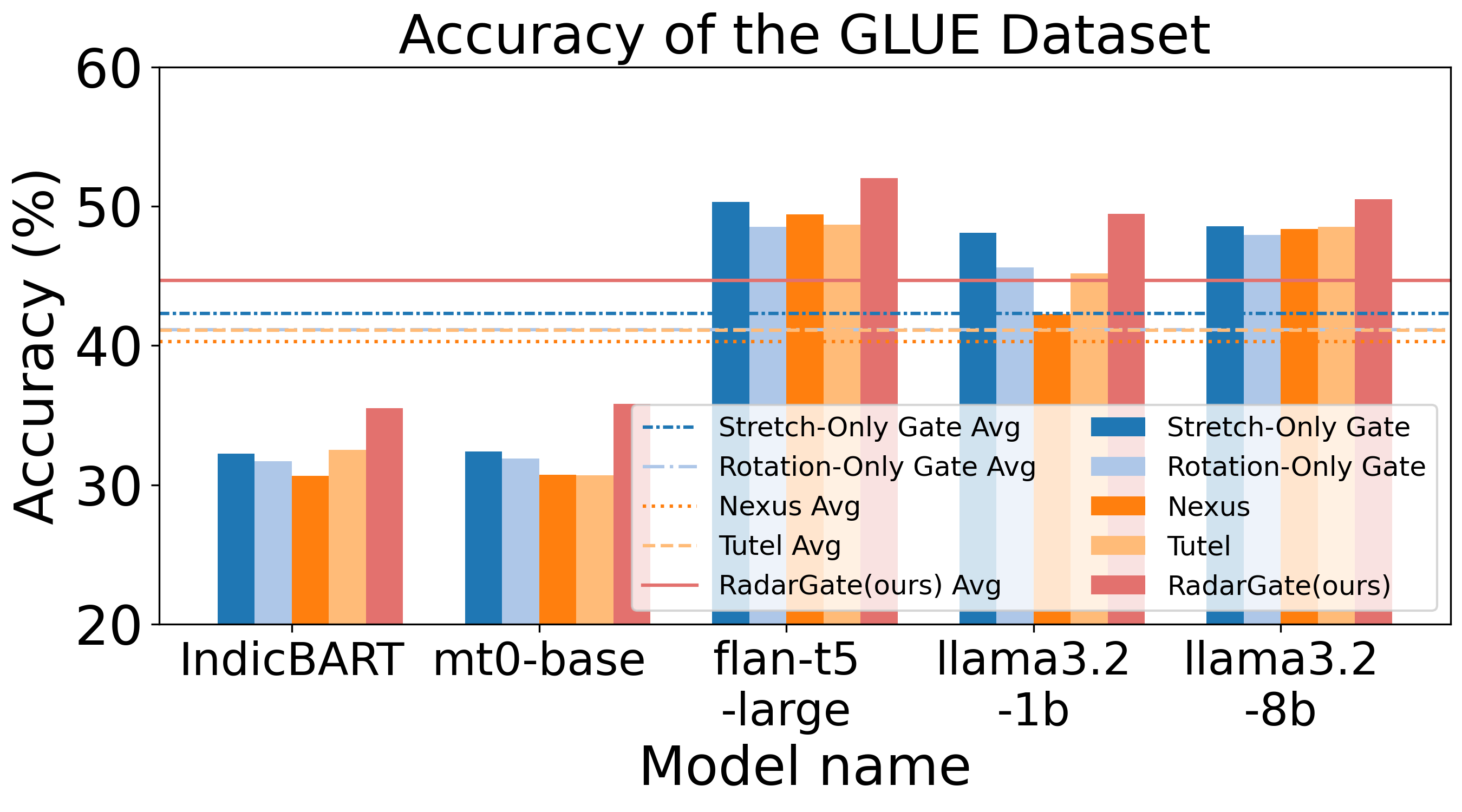}
        \caption{}
        \label{fig:subfig4}
    \end{subfigure}
    \hfill
    \begin{subfigure}[t]{0.32\textwidth}
        \includegraphics[width=\textwidth]{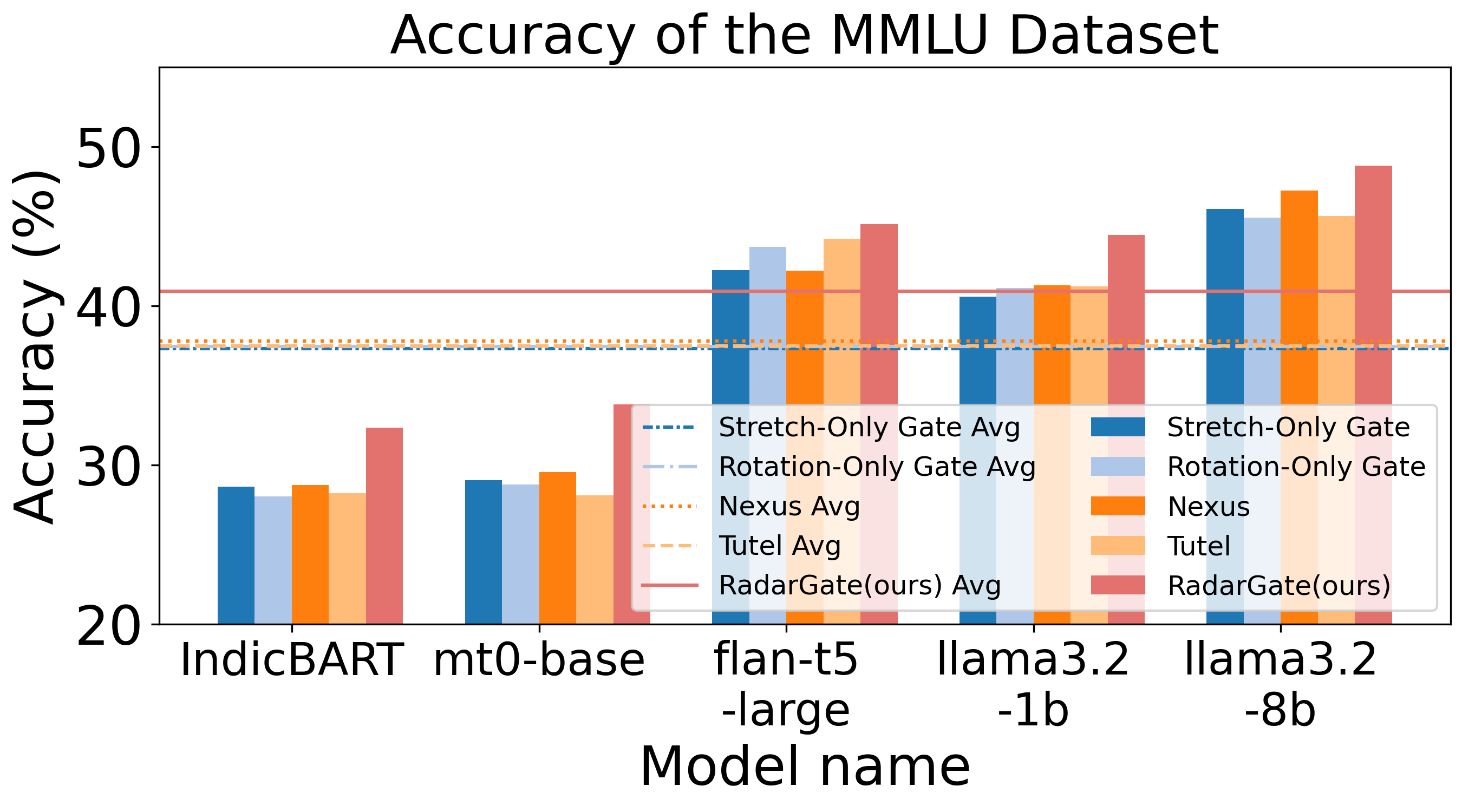}
        \caption{}
        \label{fig:subfig5}
    \end{subfigure}
    \hfill
    \begin{subfigure}[t]{0.32\textwidth}
        \includegraphics[width=\textwidth]{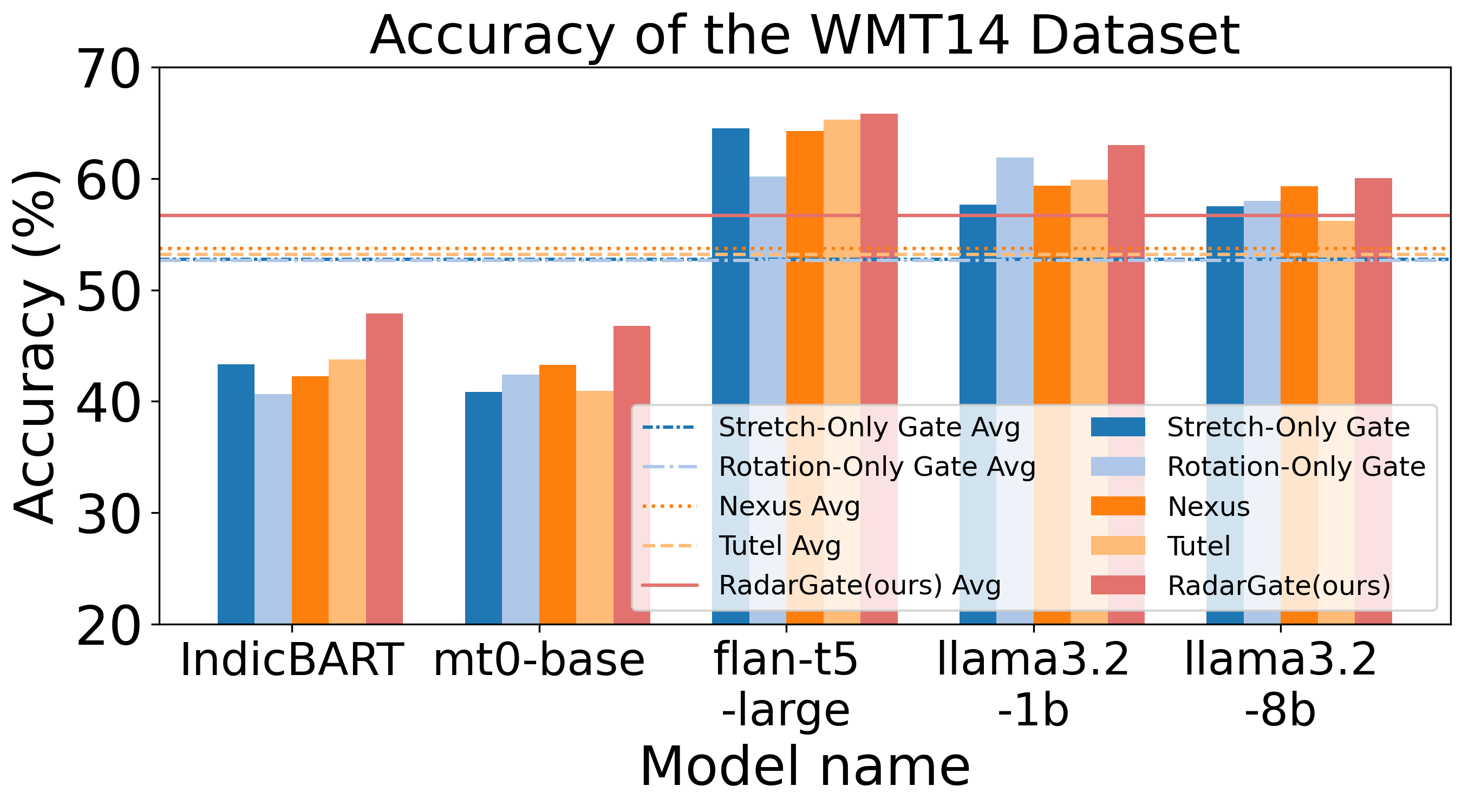}
        \caption{}
        \label{fig:subfig6}
    \end{subfigure}
\end{minipage}

\caption{\textbf{Scaling performance comparison on three benchmarks.} Figures (a), (b), and (c) compare the scaling performance of \textit{RadarGate} with four baselines as the number of modules increases. Figure (d), (e), and (f) show performance across different model sizes.}
\label{fig:main}
\end{figure}

\textbf{Obs.\ding{184} 
When the number of LoRA modules in the composable LoRA architecture scales up, our \textit{RadarGate} demonstrates superior performance.} Figure \ref{fig:main}(\subref{fig:subfig1})(\subref{fig:subfig2})(\subref{fig:subfig3}) evaluate NLP performance with 5–40 LoRA modules under MoLE: baselines exhibit inverted U-shaped trends, whereas our method achieves near-monotonic improvement (with an 8\% maximum gain), demonstrating sustained superiority. 

\textbf{Obs.\ding{185} 
When the parameter count of the base model in the composable LoRA architecture scales up (with the LoRA module parameters increasing accordingly), our \textit{RadarGate} demonstrates superior performance.} Figure \ref{fig:main}(\subref{fig:subfig4})(\subref{fig:subfig5})(\subref{fig:subfig6}) show consistent 5\%–10\% advantages over baselines across the 110M, 580M, 770M, 1B, and 8B parameters, confirming method scalability.

\textbf{Experiments on Ablation.}
To verify the necessity of the \textit{StretchGate} and \textit{RotationGate} components in RadaGate, we conducted ablation experiments on 12 tasks, leading to the following observations:

\textbf{Obs.\ding{186} 
\textit{RadarGate} achieves the best performance when all components are included, and \textit{StretchGate} and \textit{RotationGate} mutually reinforce each other.} As shown in Figures \ref{fig:ablation_studies}(\subref{fig:ablation_a})(\subref{fig:ablation_b}), the complete \textit{RadarGate} covers the largest area and achieves the maximum value in over 90\% of tasks, outperforming the second-best method by even 20\% in some cases. The standalone \textit{StretchGate} and \textit{RotationGate} exhibit significantly lower performance than the full \textit{RadarGate}, indicating that both components are crucial for optimizing the gating performance.

\begin{figure}[htbp]

\includegraphics[width=\columnwidth]{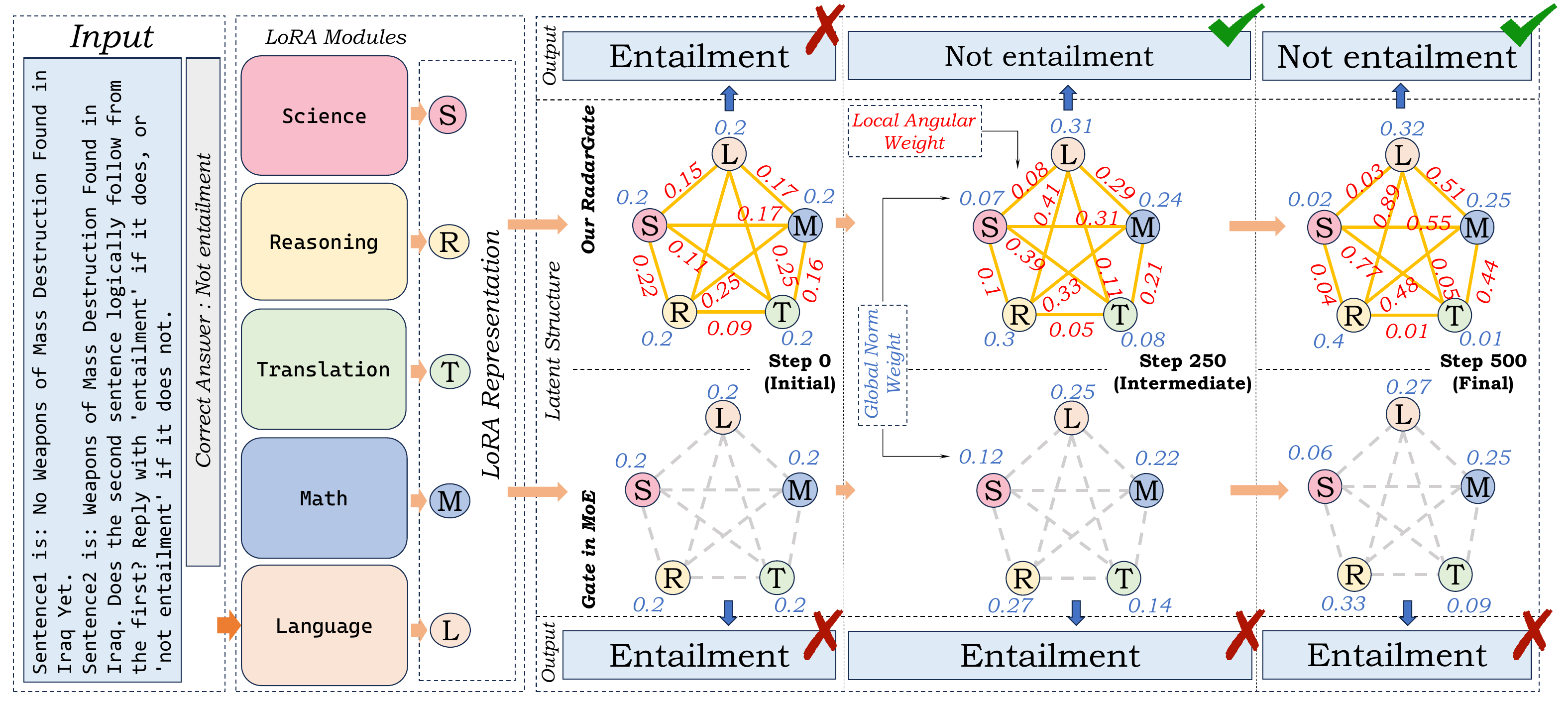}
\caption{\textbf{A GLUE case study visualizes \textit{RadarGate}.} The proposed \textit{RadarGate} can correctly integrate representations along global norm weight and local angular weight by mid-training, yielding the correct answers,while the Gate in the previous MoLE method fails to generate the correct answer.}
\label{fig:case_study}
\vspace{-6mm}
\end{figure}

\subsection{Case Study}
\vspace{-5pt}

Figure \ref{fig:case_study} demonstrates \textit{RadarGate}'s ability to capture latent architectures among LoRA representations in a reasoning task. Initially, both \textit{RadarGate} and MoE Gate fail due to averaged norm weights (global magnitude) and negligible angular weights (local angle dependencies). During training, MoE erroneously amplifies irrelevant Translation (T) and Science (S) LoRA modules, assigning their weights to 0.14 and 0.12, respectively. After 500 steps, MoE still fails. 
In contrast, we observe that \textit{RadarGate} suppresses interference by driving the angular weights between unrelated modules (module S and R, as well as module R and T) toward zero and those between related modules (module L and R, as well as module S and T) toward one, 
achieving correct answers after 250 steps. This suggests that the rotations to each pair of representations are contrastive, encouraging closer alignment of semantically similar representations while pushing distant ones further apart, which can help to converge the representations.

\vspace{-10pt}
\section{Discussion}
\vspace{-5pt}
Our experiments shows \textit{RadarGate}'s advantages in fitting, generalization and scalability. In this section, we present key observations and discuss future research directions. We investigate the convergence of \textit{RadarGate}, visualize it and analyze its performance in low-sample settings.

\begin{figure}[htpb]
\centering
\begin{minipage}{\textwidth}
    \centering
    
    \begin{subfigure}[t]{0.3\textwidth}
        \includegraphics[width=\textwidth]{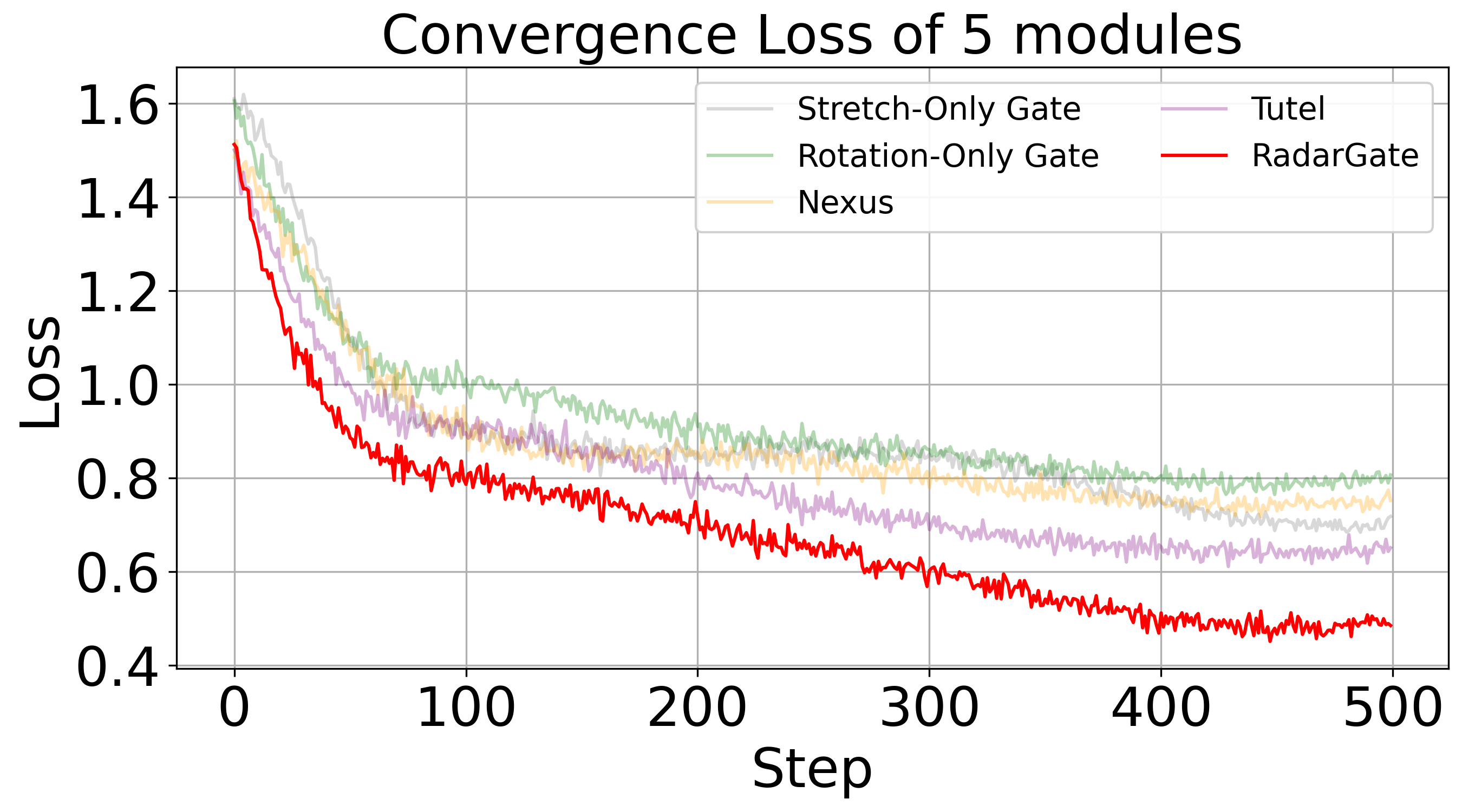}
        \caption{}
        \label{fig:subfig11}
    \end{subfigure}
    \hfill
    \begin{subfigure}[t]{0.37\textwidth}
        \includegraphics[width=\textwidth]{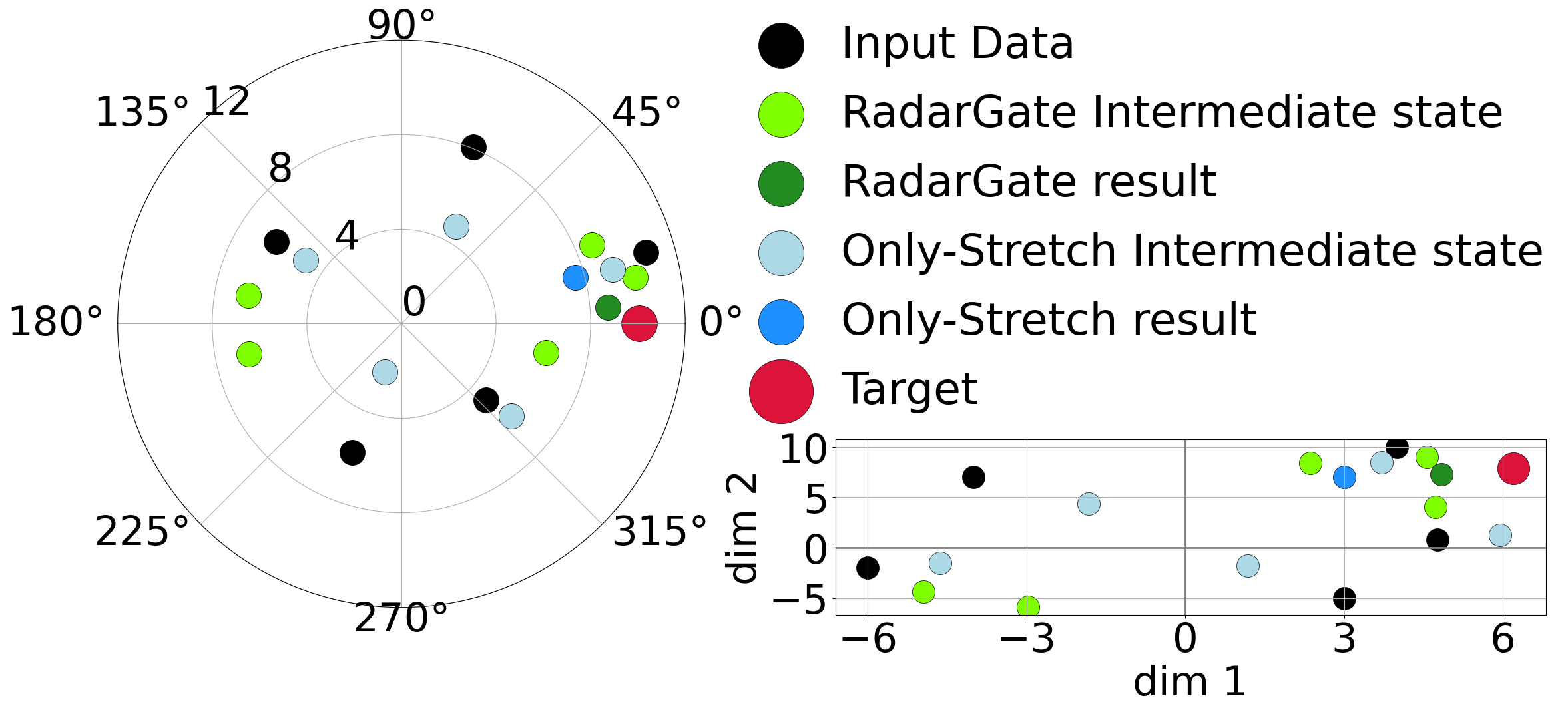}
        \caption{}
        \label{fig:subfig22}
    \end{subfigure}
    \hfill
    \begin{subfigure}[t]{0.3\textwidth}
        \includegraphics[width=\textwidth]{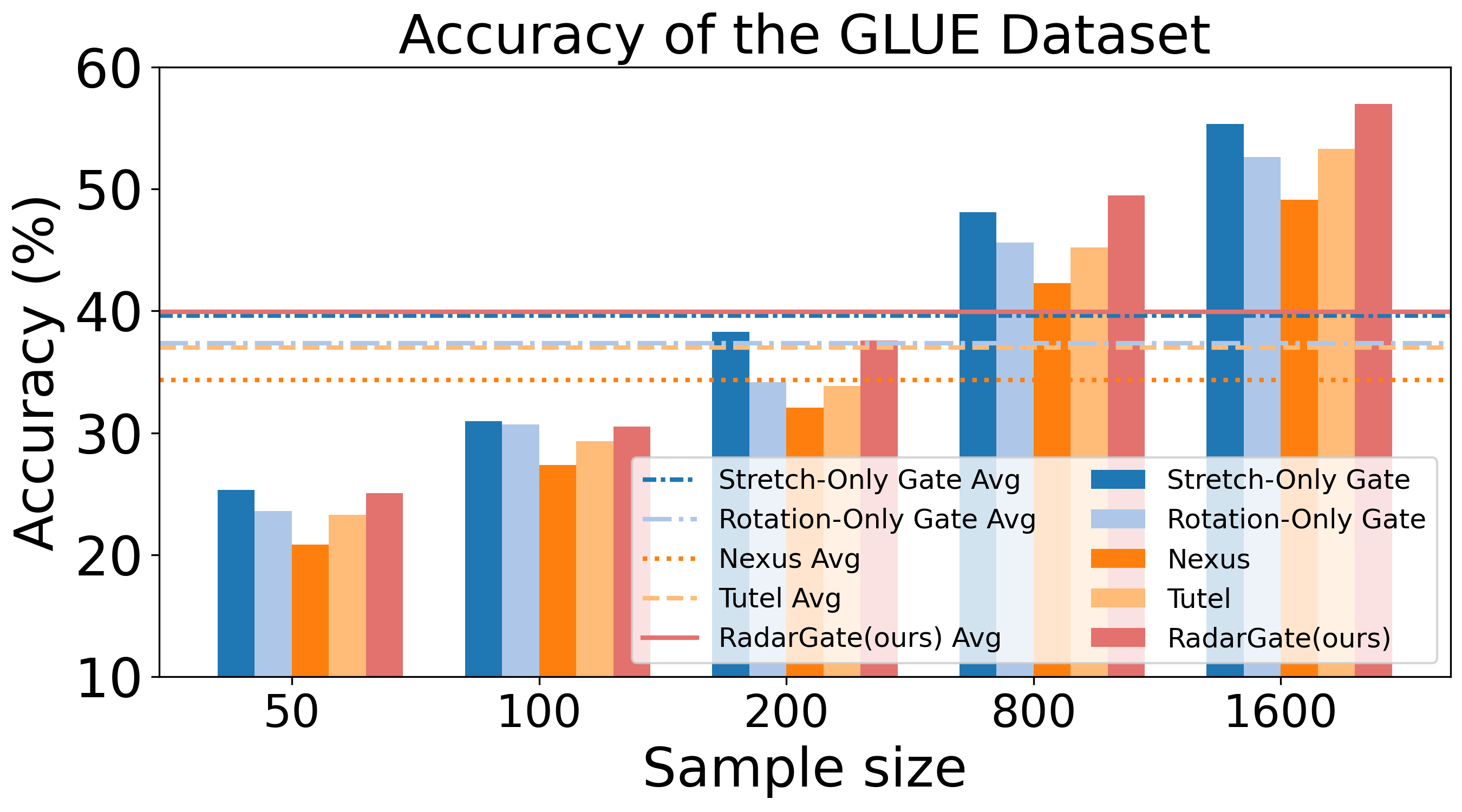}
        \caption{}
        \label{fig:subfig33}
    \end{subfigure}
\end{minipage}

\caption{\textbf{Discussion.} Figure (a) shows the convergence speed of different gating methods in the same MoLE architecture with five modules. Figure (b) presents the visualization of our proposed \textit{RadarGate}. Figure (c) compares method performance in MoLE under different sample sizes.}

\label{fig:main2}
\vskip -0.2in
\end{figure}

\vspace{3pt}
\textbf{Convergence during the Training steps.}

Training loss analysis reveals baseline methods suffer from early erratic oscillations, persistent convergence issues (suboptimal minima), and late-stage fluctuations indicating robustness limitations. As shown in Figure \ref{fig:main2}(\subref{fig:subfig11}), \textit{RadarGate} achieves faster convergence with lower loss and sustained stability, demonstrating superior convergence dynamics. Complete convergence experiment details are in Appendix \ref{app:exp_con}.

\textbf{Visualization of the Rotation Process.}
Figure \ref{fig:main2}(\subref{fig:subfig22}) shows the positional changes of LoRA representations before and after rotation and stretching, with PCA reducing the input from 2048 to 2 dimensions, visualized in both Cartesian and polar coordinates. In the Stretch-Only method, the magnitude of input vectors changes with little angular variation, 

leading to underfitting. In contrast, \textit{RadarGate} 

pulls similar vectors closer and rotates less correlated ones toward the target, improving feature learning and aligning the data more effectively with the target.

\textbf{Performance under Varying Sample Sizes.}

Figure \ref{fig:main2}(\subref{fig:subfig33}) shows our method achieves 5\% GLUE benchmark superiority over baselines at 50 samples, maintaining $\ge$5\% advantage with increasing data. This validates exceptional low-sample performance and resource-constrained applicability. Full sample size experiments are detailed in Appendix \ref{app:exp_sam}.

\vspace{-10pt}
\section{Conclusion}
\vspace{-5pt}
This paper introduces \textit{RadarGate}, a novel gating method that enhances the performance of composable LoRA architecture through two degrees of freedom: rotation and stretching. Our proposed \textit{RadarGate} achieves better fitting capability, generalization, and scalability for LoRA-MoE. Our \textit{RadarGate} consists of two key components: a \textit{RotationGate}  and a \textit{StretchGate}. The \textit{RotationGate} dynamically injects relative angular rotation information through a learnable parameter module. The output of the \textit{RotationGate} is fed into the \textit{StretchGate}, which stretches the magnitude of each LoRA representation by assigning weights to them. 
Extensive experiments on 6 public benchmarks across 21 tasks show the effectiveness of the proposed method. In the future, we aim to extend the application of \textit{RadarGate} to multimodal scenarios for image and video data.

\bibliography{example_paper}
\bibliographystyle{neurips_2025}

\appendix

\section{Motivation}
\label{app:lim}

We provide a more detailed mathematical explanation for the two observations presented in Section \ref{subsec:obs}. We adopt the notation from the main text. Specifically, $\mathbf{v}_i(\mathbf{x}) = \mathbf{x} A_i B_i$ represents the output of the $i$-th LoRA module for a given input $\mathbf{x}$. The gating weights, denoted by $g_i(\mathbf{x}; \mathbf{\theta}, \tau) = \tilde{S}(\epsilon_i)$, are produced by the gating mechanism described in Section \ref{sec:composable_loras_architecture_long_eq}. The LoRA component of the final output is $\Delta\mathbf{y}(\mathbf{x}) = \sum_{i=1}^n g_i(\mathbf{x}; \mathbf{\theta}, \tau) \mathbf{v}_i(\mathbf{x})$.

\subsection{Underfitting}

Existing gating mechanisms struggle to capture complex patterns of ideal $g_i^*(\mathbf{x})$ distribution within the convex hull $\mathcal{H}(\mathbf{x})$, resulting in an underfitting ensemble of multiple LoRAs.

    The ideal weights $g_i^*(\mathbf{x})$ are defined such that $\Delta\mathbf{y}_{\text{target}}(\mathbf{x}) \approx \sum_{i=1}^{n} g_i^*(\mathbf{x}) \mathbf{v}_i(\mathbf{x})$, as per Equation \eqref{eq:ideal_combination}. The core of the underfitting observation is that the function class for $g_i(\mathbf{x}; \mathbf{\theta}, \tau)$ might not be rich enough to approximate $g_i^*(\mathbf{x})$ for all training samples $\mathbf{x}$ using a single, shared $\theta$.

    The critical step in generating $g_i$ is the linear projection $\mathbf{z}(\mathbf{x})\theta$ to obtain the logits $\epsilon(\mathbf{x}; \theta)$.  the final mapping from this representation to the logits is linear, governed by the fixed matrix $\theta$. The subsequent softmax and top-$k$ operations are fixed non-linear transformations.

    If the true mapping from $\mathbf{z}(\mathbf{x})$ to the ideal logits $\epsilon^*(\mathbf{x})$ (i.e., logits that would perfectly generate $g_i^*(\mathbf{x})$ after softmax and top-$k$) is highly non-linear or varies in a complex manner that cannot be captured by a single linear transformation $\theta$, then the model will struggle.

    The model is trained by minimizing a loss function, typically of the form:
    $$\mathcal{L}_{\text{total}} = \sum_{(\mathbf{x}, \mathbf{y}_{\text{target}}) \in \text{TrainingSet}} \| \mathbf{y}_{\text{target}} - (\mathbf{x}W + \sum_{i=1}^n g_i(\mathbf{x}; \mathbf{\theta}, \tau) \mathbf{v}_i(\mathbf{x})) \|^2$$
    If the gating mechanism $g_i(\mathbf{x}; \mathbf{\theta}, \tau)$ cannot adequately approximate the ideal weights $g_i^*(\mathbf{x})$ across the diverse inputs $\mathbf{x}$ in the training set, the term $\sum_{i=1}^n g_i(\mathbf{x}; \mathbf{\theta}, \tau) \mathbf{v}_i(\mathbf{x})$ will fail to reconstruct $\Delta\mathbf{y}_{\text{target}}(\mathbf{x})$ effectively. This leads to a high training loss, which is characteristic of underfitting. The "magnitude scaling" mentioned in the observation refers to the fact that $g_i$ controls the contribution magnitude of $\mathbf{v}_i(\mathbf{x})$; the underfitting arises because the mechanism for determining these $g_i$ values is not sufficiently expressive.

\subsection{Poor Generalization}

Existing gating methods heavily rely on a weighted sum of LoRA representation, thus degrading generalization as the LoRA output is limited in the convex hull $\mathcal{H}(\mathbf{x})$.

For a given input $\mathbf{x}$, the LoRA modules produce a set of basis vectors $\mathbf{v}_i(\mathbf{x}) = \mathbf{x} A_i B_i$. The gating mechanism computes weights $g_i(\mathbf{x}; \mathbf{\theta}, \tau) = \tilde{S}(\epsilon_i)$.
    According to the definition of $\tilde{S}(\epsilon_i)$ in Equation (5):
\begin{equation}
    g_i(\mathbf{x}; \mathbf{\theta}, \tau) \ge 0,\sum_{i=1}^n g_i(\mathbf{x}; \mathbf{\theta}, \tau)  = 1. 
\end{equation}

    Given the properties of $g_i$, $\Delta\mathbf{y}_{\text{model}}(\mathbf{x})$ is a convex combination of the vectors $\{\mathbf{v}_i(\mathbf{x})\}_{i \in \epsilon_{\text{topk}}}$. This means $\Delta\mathbf{y}_{\text{model}}(\mathbf{x})$ must lie within the convex hull of the selected top-$k$ LoRA module outputs.
    Let $\mathcal{V}(\mathbf{x}) = \{\mathbf{v}_1(\mathbf{x}), \dots, \mathbf{v}_n(\mathbf{x})\}$. The generated LoRA output $\Delta\mathbf{y}_{\text{model}}(\mathbf{x})$ is always an element of $\text{conv}(\mathcal{V}_{\text{topk}}(\mathbf{x}))$, where $\mathcal{V}_{\text{topk}}(\mathbf{x})$ are the top-k selected vectors. This is necessarily a subset of $\mathcal{H}(\mathbf{x}) = \text{conv}(\mathcal{V}(\mathbf{x}))$.

Thus, $\mathbf{v}_i(\mathbf{x})$ are determined by the fixed LoRA matrices $A_i, B_i$ and the input $\mathbf{x}$. The gating coefficients $g_i$ only scale the magnitudes of these directional vectors $\mathbf{v}_i(\mathbf{x})$ and sum them up; they do not alter the directions themselves.

Consider a target output $\mathbf{y}_{\text{target}}(\mathbf{x})$. The corresponding target LoRA contribution is $\Delta\mathbf{y}_{\text{target}}(\mathbf{x}) = \mathbf{y}_{\text{target}}(\mathbf{x}) - \mathbf{x}W$.

If, for a given input $\mathbf{x}$ (especially one from a test set or unseen distribution), the true required modification $\Delta\mathbf{y}_{\text{target}}(\mathbf{x})$ lies outside the convex hull $\mathcal{H}(\mathbf{x})$, i.e.,
\begin{equation}
    \Delta\mathbf{y}_{\text{target}}(\mathbf{x}) \notin \mathcal{H}(\mathbf{x}) = \left\{ \sum_{i=1}^{n} \alpha_i \mathbf{v}_i(\mathbf{x}) \mid \alpha_i \ge 0, \sum_{i=1}^{n} \alpha_i = 1 \right\}
\end{equation}

then it is mathematically impossible for the model's LoRA output $\Delta\mathbf{y}_{\text{model}}(\mathbf{x})$ to exactly match $\Delta\mathbf{y}_{\text{target}}(\mathbf{x})$, because $\Delta\mathbf{y}_{\text{model}}(\mathbf{x})$ is constrained to be within $\mathcal{H}(\mathbf{x})$.Therefore,the best the model can do is to output the point in $\mathcal{H}(\mathbf{x})$ that is closest to $\Delta\mathbf{y}_{\text{target}}(\mathbf{x})$. This point is the projection of $\Delta\mathbf{y}_{\text{target}}(\mathbf{x})$ onto the set $\mathcal{H}(\mathbf{x})$, let's call it $\text{proj}_{\mathcal{H}(\mathbf{x})}(\Delta\mathbf{y}_{\text{target}}(\mathbf{x}))$.
The error resulting from this input $\mathbf{x}$, usually in the form of MSE, will be:
    $$\| \Delta\mathbf{y}_{\text{target}}(\mathbf{x}) - \text{proj}_{\mathcal{H}(\mathbf{x})}(\Delta\mathbf{y}_{\text{target}}(\mathbf{x})) \|^2$$

This inherent limitation, stemming from the fact that the LoRA outputs are restricted to the convex hull of component LoRA outputs (scaled by non-negative weights summing to unity), can lead to poor generalization performance when the model encounters inputs $\mathbf{x}$ for which the desired output $\Delta\mathbf{y}_{\text{target}}(\mathbf{x})$ requires stepping outside this convex hull.

\section{Workflow of \textit{RadarGate}}\label{app:wor}
\subsection{\textit{RadarGate} Inference}\label{subsec:inference}

Our proposed \textit{RadarGate} method combines the output of the original pretrained model layer with dynamically adjusted outputs from multiple low-rank adaptation (LoRA) submodules. The overall output $\mathbf{y}$ of a layer utilizing \textit{RadarGate} is computed as:
\begin{equation}
\mathbf{y} = \mathbf{x} W + \sum_{i=1}^n g_i \tilde{\mathbf{v}}_i,
\label{eq:radar_output}
\end{equation}
where $\mathbf{x} \in \mathbb{R}^{d_{in}}$ is the input vector, $W \in \mathbb{R}^{d_{in} \times d_{out}}$ is the frozen weight matrix from the pretrained model, $n$ is the number of LoRA submodules. Here, $\tilde{\mathbf{v}}_i \in \mathbb{R}^{d_{out}}$ is the transformed output of the $i$-th LoRA submodule produced by the \textit{RotationGate}, and $g_i \in \mathbb{R}$ is the scalar gating factor for the $i$-th submodule's transformed output, produced by the \textit{StretchGate}.

This approach reformulates the traditional gating mechanism often used with multiple LoRA modules. In typical setups, as illustrated in Section \ref{sec:composable_loras_architecture_long_eq}, the individual LoRA outputs $\mathbf{v}_i = \mathbf{x}A_iB_i$ are often concatenated, normalized, and projected to generate scalar gating weights $g_i$ (e.g., via softmax and top-$k$), and the final output is a sum over scaled original LoRA outputs: $\mathbf{y} = \mathbf{x}W + \sum_{i=1}^n g_i \mathbf{v}_i$. \textit{RadarGate} differs fundamentally by introducing an angular adjustment via the \textit{RotationGate} before applying the scaling factor from the \textit{StretchGate}, operating on $\tilde{\mathbf{v}}_i$ instead of $\mathbf{v}_i$.

To achieve this, \textit{RadarGate}'s gating function $G$, as defined in Equation \eqref{eq:9}, operates on the structural relationships between LoRA modules captured by $\text{Map}(\mathcal{L})$ and the input $\mathbf{x}$, parameterized by $\theta_r$ and $\theta_s$:
\begin{equation}
\tilde{\mathbf{v}}_i = G\left( \text{Map}(\mathcal{L}_i), \mathbf{x}; \theta_r \right).
\end{equation}
As shown, the function $G$ here specifically represents the \textit{RotationGate}, producing the rotated vector $\tilde{\mathbf{v}}_i$. The \textit{StretchGate} is a separate component that computes the scalar $g_i$. The operator $\text{Map}(\mathcal{L}_i)$ provides the $i$-th submodule $A_iB_i$ and its reference set $\mathcal{L} - \{A_iB_i\}$, consistent with Equation \eqref{eq:10}:
\begin{equation}
\text{Map}(\mathcal{L}_i) = (A_iB_i, \mathcal{L} - \{A_iB_i\}),\quad\mathcal{L}=\{A_iB_i | i=1,2,...,n\}.
\end{equation}

The \textit{RotationGate} computes $\tilde{\mathbf{v}}_i$ by applying a rotation matrix $\mathcal{R}_i$ to the original LoRA output $\mathbf{v}_i$. The gating function applies a transformation analogous to Rotary Position Embedding (RoPE) to each submodule output $\mathbf{v}_i = \mathbf{x}A_iB_i$, following the principle in Equation \eqref{eq:gating}:
\begin{equation}
\tilde{\mathbf{v}}_i = \mathbf{v}_i \times \mathcal{R}_i = (\mathbf{x}A_iB_i) \times \mathcal{R}_i \quad (\tilde{\mathbf{v}}_i \in \mathbb{R}^{d_{out}}).
\end{equation}
Here, $\times$ denotes matrix multiplication. The rotation matrix $\mathcal{R}_i \in \mathbb{R}^{d_{out} \times d_{out}}$ is a block-diagonal matrix. Due to the sparsity of the rotation matrix, we can optimize the computation into the following element-wise formula, avoiding explicit matrix multiplication:
\begin{equation}
\tilde{\mathbf{v}}_i =
\begin{pmatrix}
\mathbf{v}_i^{(0)} \\ \mathbf{v}_i^{(1)} \\ \mathbf{v}_i^{(2)} \\ \mathbf{v}_i^{(3)} \\ \vdots \\ \mathbf{v}_i^{({d_{out}-2})} \\ \mathbf{v}_i^{({d_{out}-1})}
\end{pmatrix}
\odot
\begin{pmatrix}
\cos \alpha_{r_i}^{(0)} \\ \cos \alpha_{r_i}^{(0)} \\ \cos \alpha_{r_i}^{(1)} \\ \cos \alpha_{r_i}^{(1)} \\ \vdots \\ \cos \alpha_{r_i}^{(d_{out}/2-1)} \\ \cos \alpha_{r_i}^{(d_{out}/2-1)}
\end{pmatrix}
+
\begin{pmatrix}
-\mathbf{v}_i^{(1)} \\ \mathbf{v}_i^{(0)} \\ -\mathbf{v}_i^{(3)} \\ \mathbf{v}_i^{(2)} \\ \vdots \\ -\mathbf{v}_i^{({d_{out}-1})} \\ \mathbf{v}_i^{({d_{out}-2})}
\end{pmatrix}
\odot
\begin{pmatrix}
\sin \alpha_{r_i}^{(0)} \\ \sin \alpha_{r_i}^{(0)} \\ \sin \alpha_{r_i}^{(1)} \\ \sin \alpha_{r_i}^{(1)} \\ \vdots \\ \sin \alpha_{r_i}^{(d_{out}/2-1)} \\ \sin \alpha_{r_i}^{(d_{out}/2-1)}
\end{pmatrix}
\end{equation}
Specifically, for each pair of dimensions $(2m, 2m+1)$ where $m=0,1,\dots,\frac{d_{out}}{2}-1$, the rotation is applied as:
\begin{equation}
\begin{aligned}
\tilde{\mathbf{v}}_i^{(2m)} &= \mathbf{v}_i^{(2m)} \cos \alpha_{r_i}^{(m)} - \mathbf{v}_i^{(2m+1)} \sin\alpha_{r_i}^{(m)},\\
\tilde{\mathbf{v}}_i^{(2m+1)} &= \mathbf{v}_i^{(2m)} \sin \alpha_{r_i}^{(m)} + \mathbf{v}_i^{(2m+1)} \cos \alpha_{r_i}^{(m)}.
\end{aligned}
\end{equation}
This computation corresponds to the application of the block-diagonal matrix $\mathcal{R}_i$:
\begin{equation}
\mathcal{R}_i =
\begin{pmatrix}
R_i^{(0)} & 0 & \cdots & 0 \\
0 & R_i^{(1)} & \cdots & 0 \\
\vdots & \vdots & \ddots & \vdots \\
0 & 0 & \cdots & R_i^{(\frac{d_{out}}{2}-1)} \\
\end{pmatrix},
\quad R_i^{(m)} =
\begin{pmatrix}
\cos \alpha_{r_i}^{(m)} & -\sin \alpha_{r_i}^{(m)} \\
\sin \alpha_{r_i}^{(m)} & \cos \alpha_{r_i}^{(m)} \\
\end{pmatrix},
\label{eq:rotation_matrix_inference}
\end{equation}
where $\alpha_{r_i}^{(m)}$ is the $m$-th component of the rotational control factor vector $\alpha_{r_i} \in \mathbb{R}^{\frac{d_{out}}{2}}$. The vector $\alpha_{r_i}$ governs the rotation angles for $\tilde{\mathbf{v}}_i$ and is computed dynamically based on the input $\mathbf{x}$ and the relative relationships defined by $Map(\mathcal{L}_i)^{(0)}$ and $Map(\mathcal{L}_i)^{(1)}$, using the learnable parameter $\theta_r \in \mathbb{R}^{d_{out} \times \frac{d_{out}}{2}}$:
\begin{equation}
\alpha_{r_i} = \left(\mathbf{x}\times Map(\mathcal{L}_i)^{(0)}\right) \odot \left(\mathbf{x}\times \sum_{A_jB_j \in S_i} A_jB_j\right) \times \theta_r,\quad S_i = \{A_jB_j | A_jB_j \in Map(\mathcal{L}_i)^{(1)}\}.
\end{equation}
Here, $Map(\mathcal{L}_i)^{(0)} = A_iB_i$ and $Map(\mathcal{L}_i)^{(1)} = \mathcal{L} - \{A_iB_i\}$. The Hadamard product ($\odot$) is applied element-wise to the resulting vectors. The orthogonal nature of $\mathcal{R}_i$ ensures that this operation rotates the vector $\mathbf{v}_i$ without changing its magnitude, effectively representing relative angular relationships.

The \textit{StretchGate} is responsible for computing the scalar gating factor $g_i$ for each rotated submodule output $\tilde{\mathbf{v}}_i$. This computation leverages parameters related to $\theta_s$ and dynamically scales the contribution of each rotated LoRA output (details of the \textit{StretchGate} computation for $g_i$ are provided in Equation \eqref{eq:long_output}). The final layer output $\mathbf{y}$ is then the sum of the original pretrained model output $\mathbf{x}W$ and the scaled, rotated LoRA outputs $g_i \tilde{\mathbf{v}}_i$, as expressed in Equation \eqref{eq:radar_output}.

\subsection{\textit{RadarGate} Parameter Updating}
This section details the computation of gradients for the learnable parameters $\theta_s$ and $\theta_r$ to minimize the loss function $\mathcal{L}$. The gradients are derived using the chain rule based on the forward pass defined by Equation \eqref{eq:radar_output}: $\mathbf{y} = \mathbf{x} W + \sum_{i=1}^n g_i \tilde{\mathbf{v}}_i$.

\textbf{Derivative of Loss w.r.t. $\theta_s$}

The parameter matrix $\theta_s \in \mathbb{R}^{nd_{out} \times n}$ parameterizes the \textit{StretchGate}, which computes the scalar gating weights $g_i$. Following the structure described in Section \ref{sec:composable_loras_architecture_long_eq} for generating scalar gates, we assume $g_i$ are derived from the concatenated and normalized original LoRA outputs projected by $\theta_s$. Let $\mathbf{v}_i = \mathbf{x}A_iB_i$, $\mathbf{x}_{\text{cat}} = \mathbf{v}_1 \oplus \dots \oplus \mathbf{v}_n$, and $\mathbf{z} = \mathrm{Normalization}(\mathbf{x}_{\text{cat}}) \in \mathbb{R}^{1 \times nd_{out}}$. The gating logits are $\epsilon = \mathbf{z} \theta_s \in \mathbb{R}^{1 \times n}$, and $g_i$ is derived from $\epsilon_i$ (e.g., via temperature-scaled softmax and top-$k$).

The gradient of the loss $\mathcal{L}$ with respect to $\theta_s$ is given by the chain rule. A convenient form for the matrix gradient $\frac{\partial \mathcal{L}}{\partial \theta_s}$ is:
\begin{equation}
\frac{\partial \mathcal{L}}{\partial \theta_s} = \mathbf{z}^\intercal \left(\frac{\partial \mathcal{L}}{\partial \epsilon}\right),
\end{equation}
where $\frac{\partial \mathcal{L}}{\partial \epsilon}$ is the row vector $\begin{bmatrix} \frac{\partial \mathcal{L}}{\partial \epsilon_1} & \dots & \frac{\partial \mathcal{L}}{\partial \epsilon_n} \end{bmatrix} \in \mathbb{R}^{1 \times n}$.

We compute the elements of $\frac{\partial \mathcal{L}}{\partial \epsilon}$ using the chain rule through the scalar gates $g_i$:
\begin{equation}
\frac{\partial \mathcal{L}}{\partial \epsilon_j} = \sum_{i=1}^n \frac{\partial \mathcal{L}}{\partial g_i} \frac{\partial g_i}{\partial \epsilon_j}.
\end{equation}
From the \textit{RadarGate} output definition $\mathbf{y} = \mathbf{x}W + \sum_{k=1}^n g_k \tilde{\mathbf{v}}_k$, the derivative of the scalar loss $\mathcal{L}$ with respect to the scalar gate $g_i$ is:
\begin{equation}
\frac{\partial \mathcal{L}}{\partial g_i} = \frac{\partial \mathcal{L}}{\partial \mathbf{y}} \left(\frac{\partial \mathbf{y}}{\partial g_i}\right)^\intercal = \frac{\partial \mathcal{L}}{\partial \mathbf{y}} (\tilde{\mathbf{v}}_i)^\intercal.
\end{equation}
The term $\frac{\partial g_i}{\partial \epsilon_j}$ is the derivative of the function mapping logits $\epsilon$ to sparse gating weights $g_i$. Using the common approximation for sparse gating gradients (considering $\frac{\partial g_i}{\partial \epsilon_j} \approx 0$ for $i \neq j$ and using the standard softmax derivative for $i=j$):
\begin{equation}
\frac{\partial g_i}{\partial \epsilon_j} \approx \delta_{ij} \frac{1}{\tau} S(\epsilon_i)(1 - S(\epsilon_i)).
\end{equation}
Substituting these into the expression for $\frac{\partial \mathcal{L}}{\partial \epsilon_j}$:
\begin{equation}
\frac{\partial \mathcal{L}}{\partial \epsilon_j} \approx \sum_{i=1}^n \left(\frac{\partial \mathcal{L}}{\partial \mathbf{y}} (\tilde{\mathbf{v}}_i)^\intercal\right) \left(\delta_{ij} \frac{1}{\tau} S(\epsilon_i)(1 - S(\epsilon_i))\right) = \left(\frac{\partial \mathcal{L}}{\partial \mathbf{y}} (\tilde{\mathbf{v}}_j)^\intercal\right) \frac{1}{\tau} S(\epsilon_j)(1 - S(\epsilon_j)).
\end{equation}
Combining with $\frac{\partial \mathcal{L}}{\partial \theta_s} = \mathbf{z}^\intercal \left(\frac{\partial \mathcal{L}}{\partial \epsilon}\right)$, the full gradient matrix is:
\begin{equation}
\frac{\partial \mathcal{L}}{\partial \theta_s} \approx \mathbf{z}^\intercal \begin{bmatrix} \left(\frac{\partial \mathcal{L}}{\partial \mathbf{y}} (\tilde{\mathbf{v}}_1)^\intercal\right) \frac{1}{\tau} S(\epsilon_1)(1 - S(\epsilon_1)), & \dots, & \left(\frac{\partial \mathcal{L}}{\partial \mathbf{y}} (\tilde{\mathbf{v}}_n)^\intercal\right) \frac{1}{\tau} S(\epsilon_n)(1 - S(\epsilon_n)) \end{bmatrix}.
\end{equation}

\textbf{Derivative of Loss w.r.t. $\theta_r$}
The parameter matrix $\theta_r \in \mathbb{R}^{d_{out} \times \frac{d_{out}}{2}}$ parameterizes the \textit{RotationGate}, influencing the rotation angles $\alpha_{r_i}$ which determine $\tilde{\mathbf{v}}_i$. The gradient of $\mathcal{L}$ with respect to $\theta_r$ is given by the chain rule:
\begin{equation}
\frac{\partial \mathcal{L}}{\partial \theta_r} = \sum_{i=1}^n \frac{\partial \mathcal{L}}{\partial \tilde{\mathbf{v}}_i} \frac{\partial \tilde{\mathbf{v}}_i}{\partial \theta_r}.
\end{equation}
From the output definition, the derivative of the loss with respect to the rotated vector $\tilde{\mathbf{v}}_i$ is:
\begin{equation}
\frac{\partial \mathcal{L}}{\partial \tilde{\mathbf{v}}_i} = \frac{\partial \mathcal{L}}{\partial \mathbf{y}} \left(\frac{\partial \mathbf{y}}{\partial \tilde{\mathbf{v}}_i}\right)^\intercal = \frac{\partial \mathcal{L}}{\partial \mathbf{y}} (g_i \mathbf{I})^\intercal = g_i \frac{\partial \mathcal{L}}{\partial \mathbf{y}}.
\end{equation}
Now we need $\frac{\partial \tilde{\mathbf{v}}_i}{\partial \theta_r}$. The rotated vector $\tilde{\mathbf{v}}_i$ depends on $\alpha_{r_i}$, which depends on $\theta_r$. We decompose this derivative:
\begin{equation}
\frac{\partial \tilde{\mathbf{v}}_i}{\partial \theta_r} = \frac{\partial \tilde{\mathbf{v}}_i}{\partial \alpha_{r_i}} \frac{\partial \alpha_{r_i}}{\partial \theta_r}.
\end{equation}
$\frac{\partial \tilde{\mathbf{v}}_i}{\partial \alpha_{r_i}}$ is a Jacobian matrix $\in \mathbb{R}^{d_{out} \times \frac{d_{out}}{2}}$. Its elements are $\frac{\partial \tilde{\mathbf{v}}_i^{(k)}}{\partial \alpha_{r_i}^{(m)}}$. From the element-wise rotation formulas:
\begin{align}
\frac{\partial \tilde{\mathbf{v}}_i^{(2m)}}{\partial \alpha_{r_i}^{(m)}} &= -\mathbf{v}_i^{(2m)} \sin \alpha_{r_i}^{(m)} - \mathbf{v}_i^{(2m+1)} \cos \alpha_{r_i}^{(m)}, \\
\frac{\partial \tilde{\mathbf{v}}_i^{(2m+1)}}{\partial \alpha_{r_i}^{(m)}} &= \mathbf{v}_i^{(2m)} \cos \alpha_{r_i}^{(m)} - \mathbf{v}_i^{(2m+1)} \sin \alpha_{r_i}^{(m)}.
\end{align}
For $k \notin \{2m, 2m+1\}$, $\frac{\partial \tilde{\mathbf{v}}_i^{(k)}}{\partial \alpha_{r_i}^{(m)}} = 0$.

For $\frac{\partial \alpha_{r_i}}{\partial \theta_r}$, from Equation \eqref{eq:13}, we have:
\begin{equation}
\alpha_{r_i} = U_i \times \theta_r
\end{equation}
where 
\begin{equation}
U_i = \left(\mathbf{x}\times Map(\mathcal{L}_i)^{(0)}\right) \odot \left(\mathbf{x}\times \sum_{A_jB_j \in S_i} A_jB_j\right) \in \mathbb{R}^{1 \times d_{out}},\quad  S_i = \{A_jB_j | A_jB_j \in Map(\mathcal{L}_i)^{(1)}\}.
\end{equation}

The derivative of the vector $\alpha_{r_i} \in \mathbb{R}^{1 \times \frac{d_{out}}{2}}$ with respect to the matrix $\theta_r \in \mathbb{R}^{d_{out} \times \frac{d_{out}}{2}}$ can be computed element-wise. The element $(j, k)$ of the matrix gradient $\frac{\partial \mathcal{L}}{\partial \theta_r}$ is:
\begin{equation}
\frac{\partial \mathcal{L}}{\partial \theta_r^{(j, k)}} = \sum_{i=1}^n \sum_{m=0}^{d_{out}/2-1} \frac{\partial \mathcal{L}}{\partial \alpha_{r_i}^{(m)}} \frac{\partial \alpha_{r_i}^{(m)}}{\partial \theta_r^{(j, k)}}.
\end{equation}

The term $\frac{\partial \alpha_{r_i}^{(m)}}{\partial \theta_r^{(j, k)}}$ from $\alpha_{r_i}^{(m)} = \sum_{p=1}^{d_{out}} U_i^{(p)} \theta_r^{(p, m)}$ is:
\begin{equation}
\frac{\partial \alpha_{r_i}^{(m)}}{\partial \theta_r^{(j, k)}} = U_i^{(j)} \delta_{mk}.
\end{equation}

The term $\frac{\partial \mathcal{L}}{\partial \alpha_{r_i}^{(m)}}$ is:
\begin{equation}
\frac{\partial \mathcal{L}}{\partial \alpha_{r_i}^{(m)}} = \frac{\partial \mathcal{L}}{\partial \tilde{\mathbf{v}}_i} \frac{\partial \tilde{\mathbf{v}}_i}{\partial \alpha_{r_i}^{(m)}},
\end{equation}
where $\frac{\partial \tilde{\mathbf{v}}_i}{\partial \alpha_{r_i}^{(m)}}$ is the row vector of partials. Thus:
\begin{equation}
\frac{\partial \mathcal{L}}{\partial \alpha_{r_i}^{(m)}} = g_i \frac{\partial \mathcal{L}}{\partial \mathbf{y}} \begin{pmatrix} \frac{\partial \tilde{\mathbf{v}}_i^{(0)}}{\partial \alpha_{r_i}^{(m)}}, & \dots, & \frac{\partial \tilde{\mathbf{v}}_i^{(d_{out}-1)}}{\partial \alpha_{r_i}^{(m)}} \end{pmatrix}^\intercal.
\end{equation}
The only non-zero terms in this vector are at indices $2m, 2m+1$:
\begin{equation}
\frac{\partial \mathcal{L}}{\partial \alpha_{r_i}^{(m)}} = g_i \left( \left(\frac{\partial \mathcal{L}}{\partial \mathbf{y}}\right)^{(2m)} \frac{\partial \tilde{\mathbf{v}}_i^{(2m)}}{\partial \alpha_{r_i}^{(m)}} + \left(\frac{\partial \mathcal{L}}{\partial \mathbf{y}}\right)^{(2m+1)} \frac{\partial \tilde{\mathbf{v}}_i^{(2m+1)}}{\partial \alpha_{r_i}^{(m)}} \right).
\end{equation}

Substituting back into the gradient element $\frac{\partial \mathcal{L}}{\partial \theta_r^{(j, k)}}$:
\begin{equation}
\frac{\partial \mathcal{L}}{\partial \theta_r^{(j, k)}} = \sum_{i=1}^n \frac{\partial \mathcal{L}}{\partial \alpha_{r_i}^{(k)}} U_i^{(j)} = \sum_{i=1}^n g_i \left( \left(\frac{\partial \mathcal{L}}{\partial \mathbf{y}}\right)^{(2k)} \frac{\partial \tilde{\mathbf{v}}_i^{(2k)}}{\partial \alpha_{r_i}^{(k)}} + \left(\frac{\partial \mathcal{L}}{\partial \mathbf{y}}\right)^{(2k+1)} \frac{\partial \tilde{\mathbf{v}}_i^{(2k+1)}}{\partial \alpha_{r_i}^{(k)}} \right) U_i^{(j)}.
\end{equation}

This element-wise expression defines the gradient matrix $\frac{\partial \mathcal{L}}{\partial \theta_r} \in \mathbb{R}^{d_{out} \times \frac{d_{out}}{2}}$.

\section{Theoretical Demonstration}
\subsection{Proof of Lemmas in 4.2}\label{lem_proof}
\setcounter{lemma}{0} 
\begin{lemma}
For nested function hypothesis spaces $\mathcal{K}_1 \subseteq \mathcal{K}_2$, the optimal fitting error $\mathcal{E}_t = \inf_{f \in \mathcal{K}_t} L(f, g^*)$ of the target function $g^*$ under the loss function $L$ necessarily satisfies $\mathcal{E}_2 \leq \mathcal{E}_1$.
\end{lemma}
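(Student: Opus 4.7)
The plan is to reduce the statement to the elementary monotonicity property of infimum under set inclusion, applied pointwise to the map $f \mapsto L(f, g^*)$. Since the two quantities $\mathcal{E}_1$ and $\mathcal{E}_2$ differ only in the feasible set over which the same non-negative functional $L(\cdot, g^*)$ is minimized, and since enlarging the feasible set cannot increase an infimum, the result follows almost directly from definitions.

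First I would fix an arbitrary $f \in \mathcal{K}_1$ and invoke the hypothesis $\mathcal{K}_1 \subseteq \mathcal{K}_2$ to conclude $f \in \mathcal{K}_2$. Consequently, by the definition of infimum,
\begin{equation}
L(f, g^*) \;\ge\; \inf_{f' \in \mathcal{K}_2} L(f', g^*) \;=\; \mathcal{E}_2.
\end{equation}
Since this lower bound $\mathcal{E}_2$ does not depend on the particular choice of $f$, it is a lower bound for the set $\{L(f, g^*) : f \in \mathcal{K}_1\}$. Taking the infimum of both sides over $f \in \mathcal{K}_1$ then yields
\begin{equation}
\mathcal{E}_1 \;=\; \inf_{f \in \mathcal{K}_1} L(f, g^*) \;\ge\; \mathcal{E}_2,
\end{equation}
which is precisely the claim.

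I would note two small caveats rather than genuine obstacles. The first is that one should tacitly assume $\mathcal{K}_1$ is nonempty so that $\mathcal{E}_1$ is a well-defined extended real number; if $\mathcal{K}_1 = \varnothing$ then $\mathcal{E}_1 = +\infty$ under the usual convention and the inequality still holds trivially. The second is that no continuity, convexity, or topology on $\mathcal{K}_1, \mathcal{K}_2$ is needed, and $L$ is not required to attain its infimum; the argument uses only the order-theoretic definition of $\inf$. Since the proof is a one-line application of this definition, I do not anticipate any genuinely hard step: the only thing to be careful about is to write the chain of inequalities in the correct direction (enlarging the admissible set weakens the optimal value).
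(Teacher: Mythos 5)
Your proof is correct and takes essentially the same route as the paper's: both reduce the claim to the monotonicity of the infimum under set inclusion, noting that any $f\in\mathcal{K}_1$ also lies in $\mathcal{K}_2$ and then passing to the infimum over $\mathcal{K}_1$. You merely spell out the intermediate pointwise bound $L(f,g^*)\ge\mathcal{E}_2$ a bit more explicitly and add the (harmless) caveats about nonemptiness and the lack of any topological assumptions, neither of which changes the argument.
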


\begin{proof}
Let $L(f, g^*)$ be a non-negative loss function that measures the discrepancy between a hypothesis function $f$ and the target function $g^*$.

Since $\mathcal{K}_1 \subseteq \mathcal{K}_2$, every $f \in \mathcal{K}_1$ is also in $\mathcal{K}_2$. Therefore,
\[
\inf_{f \in \mathcal{K}_1} L(f, g^*) \ge \inf_{f \in \mathcal{K}_2} L(f, g^*).
\]
By definition, $\mathcal{E}_1 = \inf_{f \in \mathcal{K}_1} L(f, g^*)$ and $\mathcal{E}_2 = \inf_{f \in \mathcal{K}_2} L(f, g^*)$.
Hence, the optimal fitting error under the larger hypothesis space $\mathcal{K}_2$ is less than or equal to that under the smaller space $\mathcal{K}_1$, i.e., $\mathcal{E}_2 \le \mathcal{E}_1$.
\end{proof}

\vspace{1em}

\begin{lemma}\label{lem:22}
Define the fixed output space $\mathcal{H} = \left\{ \sum_{i=1}^n \alpha_i v_i \mid \alpha_i \ge 0,\ \sum_{i=1}^n \alpha_i = 1 \right\}$ with fixed basis vectors $\{v_i\}$. Transforming $v_i$ via input-dependent rotation $R_i(x)$ gives $\tilde{v}_i(x) = v_i R_i(x)$. Define dynamic output space $\mathcal{H}'(x) = \left\{ \sum_{i=1}^n \alpha_i \tilde{v}_i(x) \mid \alpha_i \ge 0,\ \sum_{i=1}^n \alpha_i = 1 \right\}$. The union $\mathcal{S} = \bigcup_x \mathcal{H}'(x)$ strictly contains $\mathcal{H}$, i.e., $\mathcal{S} \supset \mathcal{H}$.
\end{lemma}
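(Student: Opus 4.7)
The plan is to establish the two inclusions separately: first the containment $\mathcal{H}\subseteq\mathcal{S}$, and then the strictness $\mathcal{S}\setminus\mathcal{H}\neq\emptyset$. The argument hinges on two mild structural assumptions about the rotation family $\{R_i(\cdot)\}$ that are implicit in Section~\ref{subsec:workflow}: (i) there exists some input $x_0$ for which every $R_i(x_0)$ equals the identity (or, equivalently, for which $\alpha_{r_i}(x_0)=0$, which by Eq.~\eqref{eq:13} holds whenever $\mathbf{x}=0$ or the Hadamard factor vanishes), and (ii) there exists some input $x_1$ and some index $j$ for which $R_j(x_1)\neq I$, so the rotation module is nontrivial.

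For the first inclusion, I would take any $y=\sum_i\alpha_i v_i\in\mathcal{H}$ and evaluate the dynamic cone at the input $x_0$ from assumption (i). Since $\tilde v_i(x_0)=v_i R_i(x_0)=v_i$ for every $i$, we have $\mathcal{H}'(x_0)=\mathcal{H}$, and therefore $y\in\mathcal{H}'(x_0)\subseteq\mathcal{S}$. This step is essentially a direct substitution and presents no difficulty.

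For the strict inclusion, I would exhibit an explicit witness $y^\star\in\mathcal{S}\setminus\mathcal{H}$. Pick the input $x_1$ and index $j$ from assumption (ii), and take the extreme coefficient choice $\alpha_j=1$ and $\alpha_i=0$ for $i\neq j$. Then $y^\star=\tilde v_j(x_1)=v_j R_j(x_1)\in\mathcal{H}'(x_1)\subseteq\mathcal{S}$. To show $y^\star\notin\mathcal{H}$, I would argue that $\mathcal{H}$ is a bounded set whose extreme points are a subset of $\{v_1,\dots,v_n\}$, and that $v_j R_j(x_1)$ differs from every $v_i$ in direction (since a nontrivial orthogonal rotation preserves norm but alters direction, while distinct $v_i$ typically have distinct directions). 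Concretely, $\|y^\star\|=\|v_j\|$ but the direction $y^\star/\|y^\star\|$ differs from $v_j/\|v_j\|$; choosing $R_j(x_1)$ far enough from the identity guarantees that $y^\star$ lies outside $\mathcal{H}$, since $\mathcal{H}$ is contained in the convex hull of a finite set of fixed directions of fixed magnitudes.

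The main obstacle is the strictness step, specifically ensuring that the rotated vector $v_j R_j(x_1)$ genuinely escapes the convex hull $\mathcal{H}$ rather than landing back inside it by coincidence. To make this watertight, I would either (a) appeal to a genericity/dimension argument — the rotated orbit $\{v_j R_j(x):x\in\mathcal{X}\}$ sweeps a set of positive measure on a sphere, while $\mathcal{H}$ is a bounded polytope of lower effective dimension in generic configurations — or (b) give a pointwise contradiction: if $v_j R_j(x_1)=\sum_i\beta_i v_i$ with $\beta_i\ge 0,\sum\beta_i=1$, then comparing norms and using that rotations preserve $\|v_j\|$ while convex combinations strictly shrink norm unless the combination is a single vertex, force $v_j R_j(x_1)=v_k$ for some $k$, which contradicts $R_j(x_1)\neq I$ together with the assumption that the $v_i$ are in general position. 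This pointwise variant is the cleanest and is what I would write out in full.
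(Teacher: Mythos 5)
Your proposal follows essentially the same route as the paper's proof: for the first inclusion you evaluate at an input where every $R_i$ is the identity so that $\mathcal{H}'(x_0)=\mathcal{H}$, and for strictness you invoke a nontrivial rotation to move a vector outside the fixed convex hull, which is exactly the paper's argument (the paper illustrates it with a 2D example of two linearly independent $v_i$ and a nonzero rotation angle). You are, if anything, more explicit than the paper about the implicit hypotheses — namely that the identity rotation is attainable and that some $R_j(x_1)\neq I$, and that the $v_i$ are in general position — which the paper's proof also needs but only gestures at with the word ``generic.''

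One small caution on your preferred ``pointwise'' variant (b): the claim that a convex combination $\sum_i\beta_i v_i$ with $\|\sum_i\beta_i v_i\|=\|v_j\|$ must collapse to a single vertex relies on $\|v_i\|\le\|v_j\|$ for all $i$; if some other $v_k$ has larger norm, the convex combination can match $\|v_j\|$ without being a vertex, and the contradiction does not follow from norm preservation alone. You would either need to pick $j\in\arg\max_i\|v_i\|$, or fall back on your option (a) — the directional/genericity argument — which is closer to what the paper actually writes and does not have this issue.
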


\begin{proof}
Consider that when $R_i(x)$ is the identity rotation for all $i$, we have $\tilde{v}_i(x) = v_i$ and therefore $\mathcal{H}'(x) = \mathcal{H}$. Thus, for some $x$, we have $\mathcal{H} \subseteq \mathcal{H}'(x)$. Taking the union over all $x$, we obtain:
\[
\mathcal{H} \subseteq \bigcup_x \mathcal{H}'(x) = \mathcal{S}.
\]
To show the inclusion is strict, note that for generic input-dependent rotations $R_i(x)$, the rotated vectors $\tilde{v}_i(x)$ can span directions outside the original convex hull of the fixed vectors $\{v_i\}$. Therefore, for some $x$, the space $\mathcal{H}'(x)$ contains points not in $\mathcal{H}$.

As a concrete example, suppose $v_1, v_2 \in \mathbb{R}^2$ are linearly independent. If $v_1$ is rotated by an angle $\theta \ne 0$, then $\tilde{v}_1(x)$ does not lie on the line spanned by $v_1$ (assuming the rotation is applied appropriately, e.g., $v_1$ is treated as a column vector and multiplied by the rotation matrix). The convex combination $\sum_i \alpha_i \tilde{v}_i(x)$ can exit the original convex region defined by $\mathcal{H}$.

Hence, there exists $\mathbf{y} \in \mathcal{H}'(x)$ such that $\mathbf{y} \notin \mathcal{H}$. This implies that the union $\mathcal{S}$ strictly contains $\mathcal{H}$:
\[
\mathcal{S} = \bigcup_x \mathcal{H}'(x) \supset \mathcal{H}.
\]
\end{proof}

\subsection{Mitigating Underfitting}

The LoRA output hypothesis space of the existing gating mechanism is $\mathcal{K}_{\text{gate}}$. A function $f \in \mathcal{K}_{\text{gate}}$ has the form:
\[
f(x; \theta_s) = \sum_{i=1}^{n} g_i(x; \theta_s) v_i(x)
\]
Here, $v_i(x) = xA_iB_i \in \mathbb{R}^{1 \times d_{\text{out}}}$ is the output of the $i$-th LoRA module for input $x$. The gating weight $g_i(x; \theta_s)$ is determined by $x$ and parameters $\theta_s \in \mathbb{R}^{n d_{\text{out}} \times n}$. Specifically, $g_i$ comes from applying a non-linear function to $[v_1(x), \dots, v_n(x)]\theta_s$.

The LoRA output hypothesis space of \textit{RadarGate} is $\mathcal{K}_{\text{ours}}$. A function $f \in \mathcal{K}_{\text{ours}}$ has the form:
\[
f(x; \theta_s, \theta_r) = \sum_{i=1}^{n} G_i(x; \theta_s, \theta_r) v_i(x)
\]
Here, $G_i(x; \theta_s, \theta_r)$ is the gating weight of \textit{RadarGate}, determined by $x$ and parameters $\theta_s \in \mathbb{R}^{n d_{\text{out}} \times n}$ and $\theta_r \in \mathbb{R}^{d_{\text{out}} \times d_{\text{out}}/2}$. Specifically, $G_i$ comes from applying a non-linear function to $[\tilde{x}_1(x; \theta_r), \dots, \tilde{x}_n(x; \theta_r)]\theta_s$, where $\tilde{x}_j(x; \theta_r) = v_j(x)R_j(x; \theta_r)$. The rotation $R_j$ depends on the angle $\alpha_{rj}(x; \theta_r)$, which is calculated from $x$ and $\theta_r$.

To prove $\mathcal{K}_{\text{gate}} \subseteq \mathcal{K}_{\text{ours}}$, it suffices to show that any $f \in \mathcal{K}_{\text{gate}}$ can also be represented as a function in $\mathcal{K}_{\text{ours}}$.
Take any $f(x; \theta_s^{\text{gate}}) = \sum_{i} g_i(x; \theta_s^{\text{gate}}) v_i(x) \in \mathcal{K}_{\text{gate}}$.
Choose the parameters of \textit{RadarGate} as $\theta_s = \theta_s^{\text{gate}}$ and $\theta_r = 0$ (the zero matrix).
When $\theta_r = 0$, the formula for calculating $\alpha_{rj}(x; 0)$ (Equation (8)) results in a zero vector, and the corresponding rotation matrix $R_j(x; 0)$ is the identity matrix $I$. At this point:
\[
\tilde{x}_j(x; 0) = v_j(x)I = v_j(x)
\]
Therefore, the calculation of \textit{RadarGate}'s gating weights $G_i(x; \theta_s^{\text{gate}}, 0)$ (based on $[\tilde{x}_1(x; 0), \dots, \tilde{x}_n(x; 0)]\theta_s^{\text{gate}})$ is exactly the same as the calculation of the existing gating weights $g_i(x; \theta_s^{\text{gate}})$ (based on $[v_1(x), \dots, v_n(x)]\theta_s^{\text{gate}}$). This means:
\[
G_i(x; \theta_s^{\text{gate}}, 0) = g_i(x; \theta_s^{\text{gate}}) \quad \text{for all } i, x
\]
Thus:
\[
f(x; \theta_s^{\text{gate}}) = \sum_{i} g_i(x; \theta_s^{\text{gate}}) v_i(x) = \sum_{i} G_i(x; \theta_s^{\text{gate}}, 0) v_i(x)
\]
which shows that $f(x; \theta_s^{\text{gate}}) \in \mathcal{K}_{\text{ours}}$.

We have shown that $\mathcal{K}_{\text{gate}} \subseteq \mathcal{K}_{\text{ours}}$. According to Lemma 1, for the target function $g^* = \Delta y_{\text{target}} - xW$, the optimal fitting error of \textit{RadarGate} $E_{\text{ours}}$ and the existing method $E_{\text{gate}}$ satisfy:
\[
E_{\text{ours}} = \inf_{f \in \mathcal{K}_{\text{ours}}} \mathcal{L}(f, g^*) \leq \inf_{f \in \mathcal{K}_{\text{gate}}} \mathcal{L}(f, g^*) = E_{\text{gate}}
\]
A larger hypothesis space and lower optimal error potential alleviate underfitting.

\subsection{ Enhancing Generalization Ability}

For a given input $x$, the output of the existing gating mechanism is $\Delta y_{\text{model}}(x) = \sum_{i} g_i(x; \theta_s) v_i(x)$. This is a convex combination of the vector set $\{v_i(x)\}_{i=1}^{n}$ (given $g_i \geq 0, \sum g_i = 1$). Its output space is limited within the convex hull $H(x)$:
\[
H(x) = \text{conv}(\{v_i(x)\}_{i=1}^{n})
\]
\textit{RadarGate}'s gating weights $G_i(x; \theta_s, \theta_r)$ are calculated based on the rotated vectors $\tilde{x}_j(x; \theta_r)$. To align with Lemma 2 regarding the convex hull of rotated vectors, we consider the effective output space of \textit{RadarGate} to be the convex hull formed by the rotated vectors. Define $\mathcal{H}'(x; \theta_r)$ as the convex hull spanned by the rotated vectors $\{\tilde{x}_i(x; \theta_r)\}_{i=1}^{n}$:
\[
\mathcal{H}'(x; \theta_r) = \text{conv}(\{\tilde{x}_i(x; \theta_r)\}_{i=1}^{n})
\]

Lemma 2 states that the union of convex hulls after applying input-dependent rotations $R_i(x)$ to a fixed set of basis vectors $\{v_i\}$ strictly contains the original convex hull:
\[
\bigcup_{x} \text{conv}(\{v_i R_i(x)\}) \supset \text{conv}(\{v_i\})
\]

Consider the vector set $\{v_i(x)\}_{i=1}^{n}$ corresponding to each input $x$ as the "basis vectors" and apply the input-dependent rotation $R_i(x; \theta_r)$. Although the original statement of the lemma is for fixed $v_i$, the core idea is that rotation changes the vector direction and expands the reachable space. \textit{RadarGate} uses $x$ and $\theta_r$ to calculate $R_i(x; \theta_r)$ to rotate $v_i(x)$ to obtain $\tilde{x}_i(x; \theta_r)$. The effective output space of \textit{RadarGate} is the set of outputs under all possible inputs $x$ and all possible parameters $\theta_s, \theta_r$. This set includes points within the dynamic convex hull $\mathcal{H}'(x; \theta_r)$. Consider the union of $\mathcal{H}'(x; \theta_r)$ over all possible $\theta_r$ and $x$:
\[
\bigcup_{\theta_r} \bigcup_{x} \mathcal{H}'(x; \theta_r) = \bigcup_{\theta_r} \bigcup_{x} \text{conv}(\{\tilde{x}_i(x; \theta_r)\}_{i=1}^{n})
\]
According to the conclusion of Lemma 2, due to the input dependence of the rotation $R_i(x; \theta_r)$, this union space strictly contains the union of convex hulls spanned only by the original $v_i(x)$, i.e., $\bigcup_{x} H(x) = \bigcup_{x} \text{conv}(\{v_i(x)\}_{i=1}^{n})$.

\textit{RadarGate}'s effective output space $\bigcup_{\theta_r} \bigcup_{x} \mathcal{H}'(x; \theta_r)$ is larger than the output space of existing methods. This means \textit{RadarGate} can generate vectors that lie outside the original convex hull $H(x)$ corresponding to any single input $x$. This allows the model to match target outputs $\Delta y_{\text{target}}(x)$ that need to fall outside of $H(x)$, thereby improving generalization ability.

Through the above mathematical definitions and application of the lemmas, it can be rigorously explained how \textit{RadarGate}'s rotation mechanism enhances the model's fitting and generalization abilities from the perspectives of hypothesis space and effective output space, respectively.

\subsection{Computational and Memory Complexity}\vspace{-5pt}\label{app:complex}
In this subsection, we will analyze the Computational and Memory Complexity of \textit{RadarGate} and existing gating architectures in detail. Theoretically, we will prove that our method is on par with existing ones in terms of complexity, and the newly added module incurs minimal extra cost.  

Let the input $x$ be of size $L \times d_{\text{in}}$, where $L$ is the sequence length. In the code implementation, we applied low-rank factorization to the \textit{RotationGate} parameters, splitting them into two matrices of size $d_{\text{in}} \times r_{\text{a}}$ and $r_{\text{a}} \times d_{\text{in}}$, respectively.

\textbf{Computational Complexity.}
From Equation \eqref{eq:long_output} and \eqref{eq:11}, the computational complexities $O_{\text{s}}$ of existing gating architectures and $O_{\text{r}}$ of our \textit{RadarGate} can be derived as follows:
\begin{equation}
O_{\text{s}} = L\left[ n d_{\text{in}} + k r (d_{\text{in}} + d_{\text{out}}) + k d_{\text{out}} \right],
O_{\text{r}} = L\left[ (2n + 2r_{\text{a}})d_{\text{in}} + k r (d_{\text{in}} + d_{\text{out}}) + k d_{\text{out}} \right].
\end{equation}

\textbf{Memory Complexity.}
Following the same reasoning, the memory complexities $M_{\text{s}}$ of existing gating architectures and $M_{\text{r}}$ of our \textit{RadarGate} can be derived as follows:
\begin{equation}
M_{\text{s}} = n\left[ d_{\text{in}} + r(d_{\text{in}} + d_{\text{out}}) \right] + L \cdot (n + k r),
M_{\text{r}} = n\left[ (2r_{\text{a}} + 1)d_{\text{in}} + r(d_{\text{in}} + d_{\text{out}}) \right] + L \cdot (n + k r).
\end{equation}

Given that $n, r, k, r_{\text{a}} \ll \min\{d_{\text{in}}, d_{\text{out}}\}$, the complexities simplify to:
\begin{equation}
O_{\text{s}} = O(L \cdot \min\{d_{\text{in}}, d_{\text{out}}\}) = O_{\text{r}}, M_{\text{s}} \approx M_{\text{r}}
\end{equation}
This indicates that the computational and memory complexities of \textit{RadarGate} and existing gating methods are asymptotically equivalent, belonging to the same order of magnitude.
\section{Fitting Capability of LoRAs}

Table \ref{tab:app_exp_ind_le} presents the performance of different gating structures within the three learnable gating architectures—HydraLoRA, MoLE, and OMoE—when the training and test sets are from the same distribution. The underlying LoRA blocks are composed of the nine tasks listed in the table. Notably, our approach achieves the highest accuracy in over 90\% of cases under the fitting experiments, significantly outperforming existing methods. Table \ref{tab:app_exp_ind_rb} reports the independence of LoRA modules under existing rule-based methods, showing that their overall performance is substantially lower than that of learnable methods and consistently inferior to the single LoRA approach in all cases. Based on all the presented results, it is evident that \textit{RadarGate} demonstrates a superior ability to preserve the independence of LoRA modules.
\begin{table}[htbp]
\setlength{\abovecaptionskip}{-0.1cm}
\setlength{\belowcaptionskip}{0cm}
\caption{Fitting Capability Experiments of Learnable Gate Architecture.}
\label{tab:app_exp_ind_le}
\vskip 0.15in
\renewcommand{\arraystretch}{1.2}
\begin{center}
\begin{large}
\begin{sc}
\resizebox{\textwidth}{!}{
\begin{tabular}{lp{2cm}<{\centering}ccccccccccccccc}
\toprule
{\fontsize{14}{16}\selectfont \multirow{2}{*}[-1.2ex]{\textbf{TASK}}} & {\fontsize{14}{16}\selectfont \multirow{2}{*}[-1.2ex]{\textbf{TOPK}}}& \multicolumn{5}{c}{\textbf{HydraLoRa}} & \multicolumn{5}{c}{\textbf{MoLE}} & \multicolumn{5}{c}{\textbf{OMoE}}  \\
\cmidrule(r){3-7}\cmidrule(r){8-12}\cmidrule(r){13-17}
 & & \textbf{\makecell{Stretch-\\Only Gate}} & \textbf{\makecell{Rotation-\\Only Gate}} & \textbf{Nexus} & \textbf{Tutel} & \textbf{\makecell{\textit{RadarGate}\\(ours)}} & \textbf{\makecell{Stretch-\\Only Gate}} & \textbf{\makecell{Rotation-\\Only Gate}} & \textbf{Nexus} & \textbf{Tutel} & \textbf{\makecell{\textit{RadarGate}\\(ours)}} & \textbf{\makecell{Stretch-\\Only Gate}} & \textbf{\makecell{Rotation-\\Only Gate}} & \textbf{Nexus} & \textbf{Tutel} & \textbf{\makecell{\textit{RadarGate}\\(ours)}} \\
\toprule
\Large \multirow{6}{*}{\textbf{SENSEMAKING}}
& \Large 1 & 52.95\% & 53.28\% & 49.38\% & \underline{58.24\%} & \textbf{62.26\%} &  44.99\% & 41.01\% & \underline{47.11\%} & 40.97\% & \textbf{50.64\%} &  58.66\% & 59.76\% & 48.86\% & \underline{62.82\%} & \textbf{65.71\%} \\
& \Large 2 & 55.54\% & \textbf{72.21\%} & \underline{70.64\%} & 51.30\% & \textbf{72.21\%} &  \underline{44.72\%} & 36.15\% & 38.85\% & 39.29\% & \textbf{51.60\%} &  70.43\% & 71.03\% & \underline{75.00\%} & 59.64\% & \textbf{79.65\%} \\
& \Large 3 & 67.78\% & \textbf{87.78\%} & \underline{84.90\%} & 84.16\% & \textbf{87.78\%} &  55.70\% & 54.44\% & 58.37\% & \underline{60.57\%} & \textbf{67.67\%} &  77.78\% & 83.45\% & \underline{84.24\%} & 84.05\% & \textbf{85.67\%} \\
& \Large 4 & 77.96\% & 65.71\% & \underline{78.26\%} & 66.03\% & \textbf{80.00\%} &  \textbf{70.27\%} & 51.87\% & 59.60\% & 55.02\% & \underline{64.20\%} &  \textbf{79.37\%} & 74.76\% & 75.46\% & 72.58\% & \underline{77.20\%} \\
& \Large 5 & 76.26\% & 75.99\% & 68.24\% & \underline{78.79\%} & \textbf{80.00\%} &  70.78\% & \underline{74.99\%} & 69.01\% & \textbf{75.08\%} & \textbf{75.08\%} &  72.29\% & \underline{79.36\%} & 62.64\% & 65.76\% & \textbf{80.00\%} \\
& \Large 6 & 72.07\% & \underline{75.76\%} & 66.46\% & 64.12\% & \textbf{76.17\%} &  \underline{77.02\%} & 69.01\% & \textbf{80.00\%} & 70.83\% & \textbf{80.00\%} &  \underline{68.45\%} & \textbf{68.46\%} & 65.82\% & 64.14\% & \textbf{68.46\%} \\
\cmidrule{1-17}
\Large \multirow{6}{*}{\textbf{QASC}}
& \Large 1 & 18.88\% & \underline{20.53\%} & 20.04\% & 18.93\% & \textbf{23.25\%} &  15.92\% & 18.54\% & 18.33\% & \underline{20.12\%} & \textbf{21.84\%} &  15.49\% & 17.03\% & 15.73\% & \underline{20.03\%} & \textbf{20.49\%} \\
& 2 & 20.92\% & 18.22\% & 20.30\% & \underline{20.94\%} & \textbf{28.42\%} &  21.64\% & \underline{21.89\%} & \textbf{28.61\%} & 18.86\% & \textbf{28.61\%} &  15.28\% & 20.46\% & \underline{23.78\%} & 16.52\% & \textbf{25.04\%} \\
& \Large 3 & 26.15\% & \textbf{32.31\%} & 29.54\% & \underline{30.37\%} & \textbf{32.31\%} &  19.00\% & 24.96\% & 17.87\% & \underline{25.89\%} & \textbf{30.31\%} &  21.54\% & 20.52\% & 24.32\% & \underline{26.99\%} & \textbf{28.33\%} \\
& \Large 4 & 24.31\% & 21.78\% & \underline{25.19\%} & 20.95\% & \textbf{25.62\%} &  23.76\% & \underline{27.32\%} & 26.24\% & 25.65\% & \textbf{29.59\%} &  23.38\% & 21.52\% & 19.30\% & \underline{24.03\%} & \textbf{24.77\%} \\
& \Large 5 & \textbf{29.97\%} & 26.62\% & 24.73\% & 27.46\% & \underline{27.66\%} &  \textbf{32.70\%} & 30.25\% & 27.85\% & 31.45\% & \underline{32.19\%} &  \textbf{23.88\%} & 21.94\% & 20.26\% & 21.41\% & \underline{23.24\%} \\
& \Large 6 & \textbf{26.42\%} & 25.55\% & 24.24\% & \underline{26.19\%} & \underline{26.19\%} &  \textbf{27.84\%} & \underline{27.08\%} & 26.71\% & 23.61\% & \underline{27.08\%} &  \textbf{21.14\%} & 19.33\% & 18.65\% & 17.73\% & \underline{20.65\%} \\
\cmidrule{1-17}
\Large \multirow{6}{*}{\textbf{STRATEGYQA}}
& \Large 1 & 32.59\% & 26.37\% & 27.62\% & \underline{36.10\%} & \textbf{36.40\%} &  28.99\% & 23.25\% & 24.10\% & \underline{31.00\%} & \textbf{32.33\%} &  34.14\% & 27.40\% & 33.74\% & \underline{43.66\%} & \textbf{43.93\%} \\
& \Large 2 & 29.98\% & 24.01\% & 31.35\% & \underline{31.75\%} & \textbf{35.05\%} &  \underline{28.19\%} & \textbf{30.93\%} & 27.14\% & 26.77\% & \textbf{30.93\%} &  \underline{49.22\%} & 39.53\% & 48.61\% & 47.75\% & \textbf{52.29\%} \\
& \Large 3 & 41.88\% & 42.30\% & 41.56\% & \underline{43.55\%} & \textbf{45.56\%} &  40.16\% & 37.98\% & \underline{41.71\%} & 40.60\% & \textbf{45.56\%} &  \underline{45.56\%} & 43.96\% & \textbf{55.56\%} & 44.01\% & \textbf{55.56\%} \\
& \Large 4 & \textbf{42.02\%} & 38.05\% & 37.42\% & 37.41\% & \underline{38.31\%} &  \underline{40.70\%} & 32.32\% & 34.13\% & 34.84\% & \textbf{42.19\%} &  \textbf{60.01\%} & 40.90\% & 50.44\% & 55.21\% & \underline{59.05\%} \\
& \Large 5 & 40.19\% & 47.83\% & \underline{48.95\%} & 36.27\% & \textbf{52.80\%} &  \underline{42.00\%} & 35.07\% & \textbf{53.41\%} & 34.36\% & \textbf{53.41\%} &  \underline{61.18\%} & 49.76\% & \textbf{61.83\%} & 50.44\% & \textbf{61.83\%} \\
& \Large 6 & 40.10\% & \textbf{50.18\%} & 42.04\% & \underline{45.30\%} & \textbf{50.18\%} &  \underline{40.41\%} & 39.07\% & 38.43\% & 38.21\% & \textbf{43.18\%} &  57.79\% & 57.50\% & \underline{62.07\%} & 50.71\% & \textbf{66.09\%} \\
\cmidrule{1-17}
\Large \multirow{6}{*}{\textbf{AQUA}}
& \Large 1 & 7.87\% & 7.72\% & 12.45\% & \underline{15.78\%} & \textbf{16.12\%} &  8.21\% & 9.41\% & 7.49\% & \underline{11.29\%} & \textbf{12.78\%} &  8.60\% & \underline{12.79\%} & 7.13\% & 7.54\% & \textbf{13.67\%} \\
& \Large 2 & 8.89\% & \underline{13.38\%} & 10.37\% & 12.24\% & \textbf{14.90\%} &  7.98\% & 7.49\% & \underline{11.88\%} & 11.87\% & \textbf{12.74\%} &  11.43\% & 12.61\% & 10.37\% & \underline{13.48\%} & \textbf{13.51\%} \\
& \Large 3 & 11.11\% & \underline{19.08\%} & 11.10\% & 16.71\% & \textbf{21.11\%} &  11.03\% & \textbf{18.11\%} & \underline{12.81\%} & 11.51\% & \textbf{18.11\%} &  10.00\% & \textbf{17.19\%} & 11.44\% & \underline{14.14\%} & \textbf{17.19\%} \\
& \Large 4 & 14.16\% & 10.50\% & \underline{15.60\%} & 10.54\% & \textbf{16.43\%} &  11.56\% & 10.31\% & \underline{13.77\%} & 10.28\% & \textbf{14.21\%} &  12.59\% & 11.68\% & \underline{14.84\%} & 9.65\% & \textbf{15.61\%} \\
& \Large 5 & 14.05\% & 12.48\% & \textbf{18.84\%} & \underline{14.77\%} & \textbf{18.84\%} &  \underline{16.58\%} & 16.52\% & 16.18\% & 13.86\% & \textbf{18.44\%} &  14.63\% & 12.66\% & \underline{17.05\%} & \textbf{20.55\%} & \textbf{20.55\%} \\
& \Large 6 & \underline{18.60\%} & \textbf{19.68\%} & 15.34\% & 15.17\% & \textbf{19.68\%} &  \underline{18.95\%} & 18.76\% & 16.26\% & 16.17\% & \textbf{20.20\%} &  \underline{16.45\%} & 13.79\% & 14.41\% & 13.71\% & \textbf{18.36\%} \\
\Large \multirow{6}{*}{\textbf{QED}}
& \Large 1 & 53.62\% & 46.10\% & 57.36\% & \underline{58.32\%} & \textbf{63.80\%} &  44.35\% & 59.84\% & \underline{62.89\%} & 44.67\% & \textbf{66.59\%} &  \underline{55.00\%} & 50.19\% & 50.66\% & 53.57\% & \textbf{63.59\%} \\
& \Large 2 & 55.90\% & \underline{57.08\%} & 56.46\% & 54.97\% & \textbf{58.84\%} &  45.85\% & \underline{57.09\%} & 53.91\% & \textbf{69.42\%} & \textbf{69.42\%} &  56.16\% & \underline{66.59\%} & 49.43\% & 65.82\% & \textbf{69.37\%} \\
& \Large 3 & 68.89\% & \underline{89.44\%} & \textbf{90.00\%} & 89.36\% & \textbf{90.00\%} &  61.67\% & \textbf{88.00\%} & 84.55\% & \underline{85.48\%} & \textbf{88.00\%} &  66.67\% & \underline{84.02\%} & 83.81\% & 66.43\% & \textbf{88.33\%} \\
& \Large 4 & \textbf{66.57\%} & 58.45\% & 59.82\% & 59.35\% & \underline{65.84\%} &  61.42\% & 62.08\% & 49.79\% & \underline{63.44\%} & \textbf{80.00\%} &  \textbf{80.00\%} & \underline{71.75\%} & 69.72\% & 70.50\% & \textbf{80.00\%} \\
& \Large 5 & 65.90\% & \underline{76.34\%} & 75.86\% & 66.05\% & \textbf{78.17\%} &  \underline{78.00\%} & 66.58\% & 66.65\% & \textbf{78.48\%} & \textbf{78.48\%} &  \textbf{80.00\%} & 72.87\% & 69.75\% & \underline{77.51\%} & \underline{77.51\%} \\
& \Large 6 & 63.41\% & \underline{63.61\%} & 57.39\% & 61.98\% & \textbf{65.27\%} &  \underline{71.80\%} & 68.54\% & 58.23\% & 57.57\% & \textbf{80.00\%} &  \underline{72.53\%} & 64.59\% & \textbf{74.20\%} & 65.33\% & \textbf{74.20\%} \\
\cmidrule{1-17}
\Large \multirow{6}{*}{\textbf{CREAK}}
& \Large 1 & 46.76\% & 46.82\% & \underline{48.41\%} & 44.57\% & \textbf{51.24\%} &  43.83\% & \underline{49.13\%} & 42.18\% & 48.97\% & \textbf{51.57\%} &  46.06\% & 38.61\% & 46.30\% & \underline{49.60\%} & \textbf{54.05\%} \\
& \Large 2 & 56.11\% & 49.11\% & 48.41\% & \underline{61.29\%} & \textbf{62.15\%} &  43.01\% & 36.57\% & \underline{51.64\%} & 36.06\% & \textbf{53.90\%} &  51.95\% & 65.30\% & \underline{68.52\%} & 63.73\% & \textbf{69.09\%} \\
& \Large 3 & \underline{70.00\%} & 66.32\% & 68.76\% & 66.32\% & \textbf{71.11\%} &  60.23\% & 60.07\% & 59.63\% & \underline{60.34\%} & \textbf{67.33\%} &  63.33\% & \textbf{69.67\%} & \underline{66.52\%} & 65.47\% & \textbf{69.67\%} \\
& \Large 4 & \textbf{63.84\%} & 57.05\% & 58.51\% & 61.11\% & \underline{63.04\%} &  \textbf{56.64\%} & 51.78\% & 50.42\% & 51.51\% & \underline{51.87\%} &  \underline{57.45\%} & 51.95\% & 55.36\% & 51.20\% & \textbf{63.19\%} \\
& \Large 5 & 62.74\% & \textbf{80.00\%} & 51.81\% & \underline{79.94\%} & \textbf{80.00\%} &  \underline{53.45\%} & 49.41\% & \textbf{64.49\%} & 52.89\% & \textbf{64.49\%} &  \underline{55.38\%} & 48.90\% & 47.43\% & 53.87\% & \textbf{80.00\%} \\
& \Large 6 & 57.85\% & \underline{65.66\%} & \textbf{69.45\%} & 60.41\% & \textbf{69.45\%} &  51.49\% & 55.31\% & \underline{58.54\%} & \textbf{69.55\%} & \textbf{69.55\%} &  73.56\% & \underline{79.66\%} & 76.21\% & 60.47\% & \textbf{80.00\%} \\
\cmidrule{1-17}
\Large \multirow{6}{*}{\textbf{ECQA}}
& \Large 1 & 34.92\% & 34.93\% & \underline{41.59\%} & 36.08\% & \textbf{41.87\%} &  28.37\% & 23.37\% & \underline{31.39\%} & 29.86\% & \textbf{35.67\%} &  33.88\% & 29.35\% & 28.08\% & \underline{38.58\%} & \textbf{40.90\%} \\
& \Large 2 & \underline{41.31\%} & 34.29\% & 39.27\% & \textbf{45.44\%} & \textbf{45.44\%} &  43.61\% & 38.91\% & 39.92\% & \underline{45.69\%} & \textbf{46.44\%} &  36.50\% & 42.67\% & 32.84\% & \underline{45.29\%} & \textbf{48.63\%} \\
& \Large 3 & 46.67\% & \underline{57.56\%} & 43.55\% & \textbf{58.89\%} & \textbf{58.89\%} &  40.12\% & 39.15\% & 43.29\% & \underline{46.54\%} & \textbf{47.78\%} &  47.78\% & 56.54\% & \underline{57.50\%} & 47.13\% & \textbf{57.78\%} \\
& \Large 4 & \textbf{45.85\%} & 43.76\% & 37.42\% & 44.80\% & \underline{44.87\%} &  38.57\% & 41.99\% & 41.12\% & \underline{46.57\%} & \textbf{46.60\%} &  \underline{50.51\%} & 46.52\% & 48.22\% & 41.28\% & \textbf{50.89\%} \\
& \Large 5 & 44.42\% & \underline{65.85\%} & 42.76\% & 49.84\% & \textbf{66.04\%} &  \underline{40.83\%} & 37.23\% & \textbf{43.57\%} & 39.28\% & \textbf{43.57\%} &  \textbf{67.57\%} & 62.91\% & 62.94\% & 55.19\% & \underline{64.83\%} \\
& \Large 6 & 41.05\% & 41.18\% & \underline{58.32\%} & 49.07\% & \textbf{59.20\%} &  \underline{38.73\%} & 31.63\% & \textbf{39.64\%} & 32.26\% & \textbf{39.64\%} &  64.71\% & \textbf{67.69\%} & 64.92\% & \underline{67.25\%} & \textbf{67.69\%} \\
\cmidrule{1-17}
\Large \multirow{6}{*}{\textbf{ESNLI}}
& \Large 1 & 37.11\% & 39.54\% & 33.14\% & \underline{40.52\%} & \textbf{42.96\%} &  36.77\% & 41.62\% & \underline{41.87\%} & 41.26\% & \textbf{42.26\%} &  42.09\% & \underline{45.60\%} & 37.32\% & 35.75\% & \textbf{46.88\%} \\
& \Large 2 & 50.43\% & 41.64\% & \underline{50.69\%} & 49.06\% & \textbf{51.75\%} &  41.87\% & \underline{42.86\%} & 41.07\% & \textbf{47.15\%} & \textbf{47.15\%} &  42.90\% & \underline{53.65\%} & 46.94\% & 50.99\% & \textbf{54.87\%} \\
& \Large 3 & 50.00\% & \textbf{60.00\%} & 55.31\% & \underline{57.32\%} & \textbf{60.00\%} &  44.64\% & \underline{59.96\%} & 56.37\% & \textbf{60.00\%} & \textbf{60.00\%} &  54.44\% & 55.62\% & \underline{57.04\%} & \textbf{60.00\%} & \textbf{60.00\%} \\
& \Large 4 & \underline{53.14\%} & 42.08\% & 52.03\% & 40.98\% & \textbf{54.37\%} &  43.10\% & \underline{49.89\%} & 36.03\% & 44.63\% & \textbf{51.23\%} &  55.86\% & 44.41\% & \underline{58.07\%} & 52.90\% & \textbf{58.18\%} \\
& \Large 5 & \textbf{60.21\%} & 52.27\% & \underline{56.03\%} & 50.32\% & \underline{56.03\%} &  41.83\% & \underline{54.14\%} & \textbf{65.10\%} & 44.06\% & \textbf{65.10\%} &  \underline{54.67\%} & \textbf{56.46\%} & 44.89\% & 45.97\% & \textbf{56.46\%} \\
& \Large 6 & \textbf{53.97\%} & 48.61\% & 48.76\% & 51.38\% & \underline{53.06\%} &  44.99\% & \underline{58.31\%} & 45.69\% & 36.42\% & \textbf{64.64\%} &  \underline{49.53\%} & 46.86\% & 48.09\% & \textbf{50.36\%} & \textbf{50.36\%} \\
\cmidrule{1-17}
\Large \multirow{6}{*}{\textbf{GSM8K}}
& \Large 1 & 11.96\% & \underline{13.23\%} & 12.60\% & 12.32\% & \textbf{15.00\%} &  \underline{12.63\%} & 11.38\% & 10.75\% & 11.01\% & \textbf{14.36\%} &  9.62\% & 8.91\% & \underline{11.33\%} & 9.11\% & \textbf{15.66\%} \\
& 2 & \underline{13.25\%} & 11.38\% & 13.13\% & \textbf{13.90\%} & \textbf{13.90\%} &  12.26\% & 11.75\% & \underline{13.66\%} & \textbf{13.88\%} & \textbf{13.88\%} &  11.07\% & 9.18\% & 12.67\% & \underline{13.57\%} & \textbf{14.44\%} \\
& \Large 3 & 15.33\% & \textbf{20.67\%} & \underline{18.05\%} & 16.29\% & \textbf{20.67\%} &  12.33\% & \textbf{19.33\%} & \underline{16.15\%} & 14.34\% & \textbf{19.33\%} &  12.67\% & 17.44\% & 18.06\% & \underline{18.28\%} & \textbf{21.33\%} \\
& 4 & 15.48\% & \underline{20.21\%} & 18.61\% & 14.13\% & \textbf{22.55\%} &  11.24\% & 11.09\% & \underline{13.39\%} & 12.06\% & \textbf{13.58\%} &  \underline{14.66\%} & 11.28\% & 14.11\% & 12.42\% & \textbf{16.51\%} \\
& 5 & 17.42\% & 15.56\% & 20.01\% & \underline{20.43\%} & \textbf{21.61\%} &  13.22\% & \textbf{18.23\%} & 13.97\% & \underline{17.55\%} & \textbf{18.23\%} &  13.48\% & \textbf{18.72\%} & \underline{15.53\%} & 12.31\% & \textbf{18.72\%} \\
& \Large 6 & 17.11\% & \underline{17.66\%} & 16.24\% & 13.88\% & \textbf{19.04\%} &  \underline{14.49\%} & 14.12\% & 12.16\% & 11.53\% & \textbf{15.59\%} &  12.57\% & 13.94\% & 13.05\% & \underline{16.97\%} & \textbf{18.11\%} \\
\bottomrule
\end{tabular}
} 
\end{sc}
\end{large}
\end{center}
\vskip -0.1in
\end{table}

\begin{table}[htbp]
\setlength{\abovecaptionskip}{-0.1cm}
\setlength{\belowcaptionskip}{0cm}
    \caption{Fitting Capability Experiments of Rule-based Gate Architecture.}
    \label{tab:app_exp_ind_rb}
    \vskip 0.15in
    \renewcommand{\arraystretch}{1.2}
    \begin{center}
    \begin{small}
    \begin{sc}
    \scalebox{1}{
    \begin{tabular}{l|c|ccc} 
    \hline
    \fontsize{10pt}{12pt}\selectfont
    \textbf{Task} & \textbf{direct lora} & \textbf{Lorahub}& \textbf{Arrow}& \textbf{PEMs} \\
    \hline
         \textbf{SENSEMAKING} & \textbf{45.99\%} & 34.24\% & 29.56\% & \underline{41.33\%} \\
    
          \textbf{QASC} & \textbf{30.01\%} & \underline{22.34\%} & 19.62\% & 20.40\% \\
    
          \textbf{STRATEGYQA} & \textbf{35.34\%} & 23.95\% & \underline{27.11\%} & 15.42\% \\
    
          \textbf{AUQA} & \textbf{19.55\%} & \underline{14.02\%} & 13.92\% & 9.21\% \\
    
          \textbf{QED} & \textbf{39.63\%} & 21.85\% & \underline{35.59\%} & 32.61\% \\
    
          \textbf{CREAK} & \textbf{46.00\%} & 27.53\% & 28.52\% & \underline{39.42\%} \\
    
          \textbf{ECQA} & \textbf{33.89\%} & \underline{26.57\%} & 19.67\% & 20.52\% \\
    
          \textbf{ESNLI} & \textbf{41.76\%} & \underline{28.57\%} & 19.67\% & 22.13\% \\
    
          \textbf{GSM8K} & \textbf{17.14\%} & 10.57\% & 8.67\% & \underline{11.12\%} \\
    \hline
    \end{tabular}
   }
    \end{sc}
    \end{small}
    \end{center}
    \vskip -0.1in
    \end{table}

\section{Generalization Experiments}
\label{app:exp_gen}
This section presents the results of two key experiments in the Generalization study, focusing on the independence and generalization capabilities of LoRA blocks. Tables \ref{tab:app_exp_ind_rb} and \ref{tab:app_exp_ind_le} correspond to the experiments on independence, showcasing the performance of the rule-based and learnable methods, respectively. Tables \ref{tab:app_exp_gen1} and \ref{tab:app_exp_gen2} report the generalization experiments related to the learnable gate, while Table \ref{tab:app_exp_gen_rb1} and \ref{tab:app_exp_gen_rb2} present the results for the rule-based gate.

Tables \ref{tab:app_exp_gen1} and \ref{tab:app_exp_gen2} present the generalization performance of five different gating structures within the three learnable gating architectures—HydraLoRA, MoLE, and OMoE—when the training and test sets are from different distributions. The experiments are conducted on a base model consisting of five LoRA blocks. Overall, our approach significantly outperforms existing methods across nearly all tasks, demonstrating consistently strong performance across different top-k settings. In over 80\% of cases, our method achieves the best results. Table \ref{tab:app_exp_gen_rb1} and \ref{tab:app_exp_gen_rb2} report the generalization performance of existing rule-based methods, which remain substantially inferior to learnable approaches. Based on the results from all tables, it is evident that \textit{RadarGate} exhibits the strongest generalization capability across a wide range of experiments.
\begin{table}[htbp]
\setlength{\abovecaptionskip}{-0.1cm}
\setlength{\belowcaptionskip}{0cm}
\caption{Generalization Experiments of Learnable Gate Architecture.}
\label{tab:app_exp_gen1}
\vskip 0.15in
\renewcommand{\arraystretch}{1.2}
\begin{center}
\begin{large}
\begin{sc}
\resizebox{\textwidth}{!}{ 
\begin{tabular}{lcp{1.7cm}<{\centering}ccccccccccccccc} 
\toprule
{\fontsize{15}{16}\selectfont  \textbf{\textsc{\multirow{2}{*}[-1.2ex]{Benchmark}}}} &   {\fontsize{15}{16}\selectfont \multirow{2}{*}[-1.2ex]{\textbf{Task}}} & {\fontsize{14}{16}\selectfont \multirow{2}{*}[-1.2ex]{\textbf{TopK}}}& \multicolumn{5}{c}{\textbf{HydraLoRa}} & \multicolumn{5}{c}{\textbf{MoLE}} & \multicolumn{5}{c}{\textbf{OMoE}}  \\
\cmidrule(r){4-8}\cmidrule(r){9-13}\cmidrule(r){14-18}
& & & \textbf{\makecell{Stretch-\\Only Gate}} & \textbf{\makecell{Rotation-\\Only Gate}} & \textbf{Nexus} & \textbf{Tutel} & \textbf{\makecell{\textit{RadarGate}\\(ours)}} & \textbf{\makecell{Stretch-\\Only Gate}} & \textbf{\makecell{Rotation-\\Only Gate}} & \textbf{Nexus} & \textbf{Tutel} & \textbf{\makecell{\textit{RadarGate}\\(ours)}} & \textbf{\makecell{Stretch-\\Only Gate}} & \textbf{\makecell{Rotation-\\Only Gate}} & \textbf{Nexus} & \textbf{Tutel} & \textbf{\makecell{\textit{RadarGate}\\(ours)}} \\
\toprule
{\fontsize{15}{16}\selectfont\ \textbf{\multirow{35}{*}{GLUE}}}
& \Large \multirow{5}{*}{SST-2}
& \Large 1 & \underline{46.00\%} & 42.35\% & 42.29\% & 45.07\% & \textbf{56.00\%} &  35.33\% & 31.20\% & 35.86\% & \underline{46.38\%} & \textbf{50.00\%}& 31.82\% & 33.33\% & 30.24\% & \underline{35.37\%} & \textbf{36.68\%} \\
&  & \Large 2 & 54.67\% & 54.83\% & 53.35\% & \underline{58.33\%} & \textbf{70.67\%} &  \underline{55.77\%} & 51.84\% & 48.69\% & 55.45\% & \textbf{57.33\%} &  \underline{37.29\%} & 33.68\% & \textbf{42.57\%} & 32.07\% & \textbf{42.57\%} \\
&  & \Large 3 & 52.00\% & 51.59\% & \underline{55.82\%} & 50.87\% & \textbf{72.00\%} &  \textbf{56.67\%} & \underline{53.33\%} & 52.25\% & 51.35\% & \underline{53.33\%} &  43.67\% & \underline{44.21\%} & 39.83\% & 43.83\% & \textbf{48.77\%} \\
&  & \Large 4 & 56.67\% & 52.66\% & \underline{60.47\%} & 56.78\% & \textbf{67.33\%} &  \textbf{56.00\%} & 49.03\% & \underline{55.33\%} & 53.83\% & \underline{55.33\%} &  \underline{61.86\%} & 55.81\% & 52.84\% & \textbf{62.95\%} & 58.47\% \\
&  & \Large 5 & \underline{52.00\%} & 48.32\% & 51.57\% & 49.29\% & \textbf{58.00\%} &  \textbf{55.33\%} & 46.73\% & 45.33\% & 45.29\% & \underline{49.33\%} &  \underline{64.08\%} & 63.21\% & 60.02\% & 55.78\% & \textbf{65.22\%} \\
\cmidrule{2-18}
& \Large \multirow{5}{*}{WNLI}
& 1 & 34.74\% & 35.83\% & \underline{39.38\%} & 33.98\% & \textbf{49.47\%} &  29.47\% & \underline{47.04\%} & 27.22\% & 35.51\% & \textbf{50.53\%}& 37.99\% & 38.04\% & 37.42\% & \underline{39.88\%} & \textbf{39.89\%} \\
&  & \Large 2 & 48.42\% & \underline{50.46\%} & 44.89\% & 41.68\% & \textbf{51.58\%} &  \textbf{52.63\%} & 47.76\% & 42.16\% & 48.11\% & \underline{51.58\%} &  \underline{41.04\%} & 35.41\% & \textbf{42.28\%} & 36.85\% & \textbf{42.28\%} \\
&  & \Large 3 & \textbf{52.63\%} & 47.74\% & 46.95\% & 45.39\% & \underline{49.47\%} &  \underline{48.42\%} & \textbf{50.53\%} & 48.32\% & 44.06\% & \textbf{50.53\%} &  47.19\% & 46.75\% & 42.19\% & \textbf{48.85\%} & \underline{47.75\%} \\
&  & \Large 4 & \textbf{54.74\%} & 46.23\% & \underline{49.69\%} & 48.17\% & 47.37\% &  50.53\% & \underline{53.26\%} & 43.12\% & 51.43\% & \textbf{55.79\%} &  46.53\% & 52.66\% & 42.34\% & \underline{52.94\%} & \textbf{55.19\%} \\
&  & \Large 5 & 47.37\% & 48.87\% & \underline{50.62\%} & 47.32\% & \textbf{53.68\%} &  \textbf{53.68\%} & 41.62\% & 47.43\% & 48.22\% & \underline{52.63\%} &  \underline{56.32\%} & 52.91\% & 53.64\% & \underline{56.32\%} & \textbf{57.27\%} \\
\cmidrule{2-18}
& \Large \multirow{5}{*}{QNLI}
& \Large 1 & 25.33\% & \underline{28.67\%} & 28.02\% & 24.75\% & \textbf{50.00\%} &  29.33\% & 31.19\% & \underline{32.71\%} & 24.73\% & \textbf{46.67\%}& 26.16\% & 27.71\% & 22.70\% & \underline{29.54\%} & \textbf{30.28\%} \\
&  & \Large 2 & \underline{47.33\%} & 41.94\% & 46.16\% & 40.74\% & \textbf{54.00\%} &  \underline{45.33\%} & \textbf{47.33\%} & 36.92\% & 37.52\% & \textbf{47.33\%} &  28.53\% & \underline{30.47\%} & 26.85\% & 29.34\% & \textbf{34.93\%} \\
&  & \Large 3 & \underline{44.00\%} & 43.58\% & 43.57\% & 41.32\% & \textbf{56.67\%} &  \underline{49.33\%} & \textbf{50.00\%} & 39.01\% & 39.75\% & \textbf{50.00\%} &  37.63\% & 42.98\% & \underline{44.98\%} & \textbf{45.53\%} & 43.17\% \\
&  & \Large 4 & 44.67\% & \underline{48.22\%} & 42.33\% & 44.53\% & \textbf{55.33\%} &  \underline{46.00\%} & 44.37\% & 43.75\% & 45.05\% & \textbf{46.57\%} &  47.75\% & 42.60\% & 48.45\% & \textbf{54.21\%} & \underline{53.28\%} \\
&  & 5 & 45.33\% & \underline{47.45\%} & 45.17\% & 44.09\% & \textbf{54.67\%} &  \textbf{50.00\%} & 44.26\% & \underline{49.92\%} & 40.54\% & \textbf{50.00\%} &  \textbf{58.11\%} & 53.35\% & 50.23\% & 52.11\% & \underline{53.66\%} \\
\cmidrule{2-18}
& \Large \multirow{5}{*}{QQP}
& \Large 1 & 28.00\% & \underline{30.77\%} & 29.58\% & 30.03\% & \textbf{46.00\%} &  14.67\% & 18.19\% & \underline{35.16\%} & 20.05\% & \textbf{39.33\%}& 21.71\% & 22.57\% & \underline{35.18\%} & 32.73\% & \textbf{35.23\%} \\
&  & \Large 2 & 42.67\% & \underline{45.70\%} & 45.47\% & 45.41\% & \textbf{64.67\%} &  \textbf{57.33\%} & 45.88\% & 46.84\% & 46.87\% & \underline{50.00\%} &  25.49\% & \underline{39.22\%} & 34.11\% & 27.25\% & \textbf{40.75\%} \\
&  & \Large 3 & 42.00\% & 42.00\% & 43.69\% & \underline{45.02\%} & \textbf{63.33\%} &  \underline{57.33\%} & 50.88\% & 53.28\% & \textbf{61.33\%} & \textbf{61.33\%} &  35.03\% & 40.56\% & \underline{42.09\%} & \textbf{42.84\%} & 40.14\% \\
&  & \Large 4 & \underline{45.33\%} & 41.22\% & 44.10\% & 48.71\% & \textbf{54.67\%} &  48.00\% & 46.55\% & \underline{54.07\%} & 53.28\% & \textbf{56.00\%} &  34.07\% & \underline{44.39\%} & 37.87\% & \textbf{46.75\%} & 43.16\% \\
&  & \Large 5 & \underline{48.00\%} & 46.65\% & 43.21\% & 47.20\% & \textbf{62.67\%} &  \textbf{54.67\%} & 40.06\% & 48.39\% & 41.88\% & \underline{52.00\%} &  \textbf{45.50\%} & 41.24\% & 39.40\% & 44.23\% & \underline{45.11\%} \\
\cmidrule{2-18}
& \Large \multirow{5}{*}{MNLI}
& \Large 1 & 12.00\% & 9.50\% & \underline{16.05\%} & 12.24\% & \textbf{27.33\%} &  10.00\% & 18.76\% & 10.65\% & \underline{23.10\%} & \textbf{25.33\%}& \underline{17.07\%} & 16.49\% & 15.43\% & 16.14\% & \textbf{17.28\%} \\
&  & \Large 2 & 23.33\% & \underline{28.11\%} & 22.98\% & 24.06\% & \textbf{38.00\%} &  28.33\% & \textbf{31.33\%} & 24.74\% & \underline{30.37\%} & \textbf{31.33\%} &  \underline{18.93\%} & 18.18\% & 17.28\% & 18.58\% & \textbf{19.02\%} \\
&  & \Large 3 & 24.67\% & 22.72\% & \underline{27.40\%} & 27.06\% & \textbf{37.33\%} &  \textbf{28.67\%} & 26.42\% & 26.61\% & 25.91\% & \underline{27.33\%} &  18.81\% & 19.27\% & \underline{21.99\%} & \textbf{23.04\%} & 21.71\% \\
&  & \Large 4 & 26.00\% & \underline{29.47\%} & 23.77\% & 26.84\% & \textbf{44.00\%} &  25.33\% & \textbf{28.67\%} & 25.94\% & \underline{27.84\%} & \textbf{28.67\%} &  20.72\% & \underline{22.02\%} & 20.99\% & \textbf{22.68\%} & 21.45\% \\
&  & \Large 5 & \underline{25.33\%} & 24.85\% & 24.45\% & 25.08\% & \textbf{32.00\%} &  \underline{26.00\%} & 24.48\% & 23.81\% & 21.73\% & \textbf{31.33\%} &  \textbf{22.81\%} & 19.61\% & 19.79\% & 20.85\% & \underline{20.91\%} \\
\cmidrule{2-18}
& \Large \multirow{5}{*}{RTE}
& \Large 1 & \underline{43.33\%} & 41.18\% & 41.81\% & 39.18\% & \textbf{45.33\%} &  \textbf{51.33\%} & 44.95\% & 42.32\% & 42.75\% & \underline{45.33\%}& 28.26\% & 26.02\% & \underline{38.74\%} & 27.75\% & \textbf{43.74\%} \\
&  &\Large  2 & \underline{51.33\%} & 51.27\% & 48.68\% & 49.82\% & \textbf{52.00\%} &  46.00\% & 46.71\% & 40.32\% & \underline{46.77\%} & \textbf{49.33\%} &  31.51\% & \underline{44.71\%} & 37.51\% & 37.85\% & \textbf{46.23\%} \\
&  & \Large 3 & \textbf{45.33\%} & 43.18\% & 43.71\% & \underline{44.70\%} & \textbf{45.33\%} &  43.33\% & \underline{50.10\%} & 48.09\% & 46.65\% & \textbf{50.67\%} &  43.02\% & 36.72\% & \underline{45.84\%} & 41.64\% & \textbf{45.95\%} \\
&  & \Large 4 & \underline{43.44\%} & 43.08\% & 41.87\% & 41.85\% & \textbf{45.33\%} &  \textbf{54.00\%} & 50.68\% & 50.39\% & \underline{51.33\%} & \underline{51.33\%} &  42.38\% & 36.56\% & 44.81\% & \underline{52.93\%} & \textbf{54.92\%} \\
&  & 5 & \textbf{46.67\%} & 41.93\% & \underline{44.51\%} & 43.54\% & \textbf{46.67\%} &  \underline{49.33\%} & 44.99\% & 44.99\% & 48.30\% & \textbf{50.67\%} &  49.08\% & \underline{53.29\%} & 41.86\% & 50.05\% & \textbf{53.34\%} \\
\cmidrule{2-18}
& \Large \multirow{5}{*}{CoLA}
& \Large 1 & \textbf{58.00\%} & 39.91\% & 42.19\% & 38.72\% & \underline{44.67\%} &  53.33\% & 44.38\% & 54.11\% & \underline{55.44\%} & \textbf{58.00\%}& 49.96\% & 46.14\% & 42.95\% & \underline{53.19\%} & \textbf{61.98\%} \\
&  & \Large 2 & \underline{55.33\%} & 53.33\% & 50.34\% & 53.72\% & \textbf{56.00\%} &  54.00\% & 48.30\% & \underline{55.18\%} & 49.12\% & \textbf{59.33\%} &  \underline{56.19\%} & 51.31\% & \textbf{67.28\%} & 51.22\% & \textbf{67.28\%} \\
&  & \Large 3 & \textbf{58.00\%} & 54.36\% & \underline{55.46\%} & 45.26\% & \textbf{58.00\%} &  \textbf{58.00\%} & 53.99\% & 52.34\% & \underline{57.33\%} & \underline{57.33\%} &  65.00\% & 60.64\% & 59.93\% & \textbf{70.18\%} & \underline{67.63\%} \\
&  & \Large 4 & \textbf{53.33\%} & 38.14\% & 45.89\% & 41.50\% & \underline{45.33\%} &  \textbf{57.33\%} & 51.92\% & \underline{56.32\%} & 47.49\% & \textbf{57.33\%} &  66.18\% & 64.23\% & \textbf{75.52\%} & 57.06\% & \underline{71.55\%} \\
&  & \Large 5 & \textbf{56.67\%} & 44.91\% & 41.63\% & 46.69\% & \underline{47.33\%} &  \underline{58.00\%} & 48.88\% & 54.16\% & 46.44\% & \textbf{59.33\%} &  65.67\% & 66.52\% & 69.36\% & \underline{75.00\%} & \textbf{78.83\%} \\
\bottomrule
\end{tabular}
} 
\end{sc}
\end{large}
\end{center}
\vskip -0.1in
\end{table}

\begin{table}
\setlength{\abovecaptionskip}{-0.1cm}
\setlength{\belowcaptionskip}{0cm}
\caption{Generalization Experiments of Learnable Gate Architecture. (Continue)}
\label{tab:app_exp_gen2}
\vskip 0.15in
\renewcommand{\arraystretch}{1.2}
\begin{center}
\begin{large}
\begin{sc}
\resizebox{\textwidth}{!}{ 
\begin{tabular}{lcp{2cm}<{\centering}ccccccccccccccc} 
\toprule
{\fontsize{15}{16}\selectfont  \textbf{\textsc{\multirow{2}{*}[-1.2ex]{Benchmark}}}} &   {\fontsize{15}{16}\selectfont \multirow{2}{*}[-1.2ex]{\textbf{Task}}} & {\fontsize{14}{16}\selectfont \multirow{2}{*}[-1.2ex]{\textbf{TopK}}}& \multicolumn{5}{c}{\textbf{HydraLoRa}} & \multicolumn{5}{c}{\textbf{MoLE}} & \multicolumn{5}{c}{\textbf{OMoE}}  \\
\cmidrule(r){4-8}\cmidrule(r){9-13}\cmidrule(r){14-18}
& & & \textbf{\makecell{Stretch-\\Only Gate}} & \textbf{\makecell{Rotation-\\Only Gate}} & \textbf{Nexus} & \textbf{Tutel} & \textbf{\makecell{\textit{RadarGate}\\(ours)}} & \textbf{\makecell{Stretch-\\Only Gate}} & \textbf{\makecell{Rotation-\\Only Gate}} & \textbf{Nexus} & \textbf{Tutel} & \textbf{\makecell{\textit{RadarGate}\\(ours)}} & \textbf{\makecell{Stretch-\\Only Gate}} & \textbf{\makecell{Rotation-\\Only Gate}} & \textbf{Nexus} & \textbf{Tutel} & \textbf{\makecell{\textit{RadarGate}\\(ours)}} \\
\toprule
{\fontsize{15}{16}\selectfont\ \textbf{\multirow{35}{*}{MMLU}}}
& \Large \multirow{5}{*}{ARC-HARD}
& \Large 1 & 29.59\% & 24.47\% & \underline{30.33\%} & 27.37\% & \textbf{30.49\%} &  \underline{25.12\%} & 20.68\% & 21.52\% & 20.42\% & \textbf{26.39\%}& 25.33\% & \underline{33.31\%} & 31.65\% & 27.81\% & \textbf{33.91\%} \\
& & \Large 2 & 38.67\% & \underline{39.11\%} & 37.24\% & 39.04\% & \textbf{41.33\%} &  \underline{35.33\%} & 33.20\% & \textbf{40.00\%} & 33.68\% & \textbf{40.00\%} &  27.67\% & 28.28\% & 26.63\% & \underline{30.29\%} & \textbf{39.00\%} \\
& & \Large 3 & \underline{41.90\%} & 40.14\% & 37.37\% & 37.46\% & \textbf{43.30\%} &  33.02\% & \textbf{42.25\%} & 38.30\% & \underline{38.82\%} & \textbf{42.25\%} &  40.82\% & \underline{49.60\%} & \textbf{51.19\%} & 40.37\% & 49.29\% \\
& & \Large 4 & \textbf{44.77\%} & 36.34\% & 39.00\% & 37.31\% & \underline{43.97\%} &  31.21\% & 37.02\% & 37.57\% & \underline{40.82\%} & \textbf{47.04\%} &  \textbf{58.98\%} & 53.52\% & 52.62\% & \underline{54.53\%} & 54.40\% \\
& & \Large 5 & 52.05\% & 45.92\% & \underline{52.59\%} & 42.62\% & \textbf{53.08\%} &  38.39\% & \underline{44.46\%} & 31.43\% & 35.44\% & \textbf{47.82\%} &  \textbf{58.34\%} & 53.65\% & 51.02\% & 52.15\% & \underline{54.13\%} \\
\cmidrule{2-18}
& \Large \multirow{5}{*}{SCIENCE-MIDDLE}
& \Large 1 & 36.03\% & 30.23\% & 32.97\% & \underline{42.42\%} & \textbf{44.52\%} &  27.11\% & \underline{33.61\%} & 32.37\% & 26.60\% & \textbf{39.49\%}& 30.46\% & 30.17\% & \underline{35.11\%} & 34.88\% & \textbf{35.22\%} \\
& & \Large 2 & 55.56\% & 55.13\% & \underline{57.76\%} & \textbf{58.89\%} & \textbf{58.89\%} &  41.11\% & \underline{45.82\%} & 37.16\% & \textbf{50.00\%} & \textbf{50.00\%} &  32.67\% & \underline{37.91\%} & \textbf{40.00\%} & 33.15\% & \textbf{40.00\%} \\
& & \Large 3 & 53.92\% & 43.28\% & \underline{54.19\%} & 51.18\% & \textbf{54.20\%} &  40.93\% & \underline{43.65\%} & 36.24\% & 40.37\% & \textbf{45.07\%} &  \underline{35.28\%} & 31.12\% & 33.71\% & 33.41\% & \textbf{39.47\%} \\
& & \Large 4 & 58.59\% & \underline{59.50\%} & 53.26\% & \textbf{61.70\%} & \textbf{61.70\%} &  39.32\% & 32.47\% & \underline{42.13\%} & \textbf{50.69\%} & \textbf{50.69\%} &  40.87\% & \textbf{43.92\%} & 35.99\% & 41.49\% & \underline{43.89\%} \\
& & \Large 5 & \underline{56.52\%} & 48.24\% & 48.72\% & 53.33\% & \textbf{58.65\%} &  40.12\% & 35.30\% & 35.09\% & \underline{41.68\%} & \textbf{57.06\%} &  46.09\% & 42.18\% & \underline{46.53\%} & 40.29\% & \textbf{47.51\%} \\
\cmidrule{2-18}
& \Large \multirow{5}{*}{RACE}
& \Large 1 & 21.43\% & 19.48\% & 20.79\% & \underline{24.09\%} & \textbf{26.68\%} &  \underline{22.88\%} & 22.42\% & 19.27\% & 21.66\% & \textbf{24.45\%}& 23.71\% & \underline{35.27\%} & 27.74\% & 33.68\% & \textbf{43.93\%} \\
& & \Large 2 & 32.00\% & 38.10\% & 37.67\% & \underline{38.20\%} & \textbf{40.00\%} &  34.67\% & 32.32\% & \textbf{40.00\%} & \underline{34.92\%} & \textbf{40.00\%} &  26.33\% & 41.57\% & \underline{44.14\%} & 25.48\% & \textbf{45.67\%} \\
& & \Large 3 & 33.36\% & 31.85\% & \underline{33.81\%} & 28.47\% & \textbf{41.96\%} &  38.72\% & 37.16\% & \underline{40.39\%} & 37.84\% & \textbf{41.73\%} &  43.87\% & \underline{44.61\%} & 44.42\% & 41.49\% & \textbf{53.84\%} \\
& & \Large 4 & \textbf{51.61\%} & 46.87\% & 46.94\% & 44.97\% & \underline{47.05\%} &  \textbf{42.85\%} & 38.50\% & 36.40\% & \underline{40.11\%} & \underline{40.11\%} &  50.14\% & 42.94\% & \underline{51.92\%} & 51.05\% & \textbf{52.07\%} \\
& & \Large 5 & 47.86\% & 39.52\% & \underline{48.19\%} & 44.80\% & \textbf{48.22\%} &  \textbf{50.60\%} & 41.78\% & 43.07\% & 41.70\% & \underline{46.87\%} &  \textbf{50.74\%} & 49.70\% & 47.08\% & 43.21\% & \underline{49.85\%} \\
\cmidrule{2-18}
& \Large \multirow{5}{*}{OBQA}
& \Large 1 & 26.33\% & 24.37\% & 23.08\% & \underline{26.80\%} & \textbf{28.78\%} &  20.90\% & \underline{24.50\%} & 21.50\% & 17.33\% & \textbf{27.04\%}& 21.40\% & 18.54\% & 18.77\% & \underline{29.29\%} & \textbf{30.76\%} \\
& & \Large 2 & 34.67\% & 39.27\% & \textbf{41.33\%} & \underline{39.28\%} & \textbf{41.33\%} &  30.00\% & \underline{34.29\%} & 31.88\% & 33.65\% & \textbf{35.33\%} &  23.67\% & 27.06\% & \underline{30.13\%} & \textbf{34.33\%} & \textbf{34.33\%} \\
& & \Large 3 & 32.89\% & 34.23\% & 27.44\% & \underline{36.17\%} & \textbf{37.33\%} &  \underline{29.61\%} & 26.10\% & \textbf{37.30\%} & 26.44\% & \textbf{37.30\%} &  24.61\% & \textbf{35.41\%} & 29.38\% & 30.68\% & \underline{34.76\%} \\
& & \Large 4 & 37.81\% & \textbf{43.09\%} & 35.43\% & \underline{40.46\%} & \textbf{43.09\%} &  31.82\% & 32.62\% & 26.88\% & \underline{33.38\%} & \textbf{34.74\%} &  33.18\% & \textbf{35.24\%} & 32.23\% & \underline{35.14\%} & 34.47\% \\
& & \Large 5 & 42.48\% & \underline{47.21\%} & 46.83\% & 39.20\% & \textbf{50.50\%} &  \textbf{42.78\%} & 36.89\% & 34.56\% & 35.38\% & \underline{39.44\%} &  31.77\% & \underline{33.44\%} & 30.09\% & 28.69\% & \textbf{33.68\%} \\
\cmidrule{2-18}
& \Large \multirow{5}{*}{MC-TEST}
& \Large 1 & 34.98\% & 34.62\% & \underline{43.05\%} & 30.05\% & \textbf{50.93\%} &  36.06\% & 30.99\% & \underline{36.84\%} & 34.55\% & \textbf{38.32\%}& 36.32\% & 33.64\% & \underline{43.03\%} & 32.42\% & \textbf{45.37\%} \\
& & \Large 2 & 57.33\% & 58.34\% & 70.89\% & \underline{71.76\%} & \textbf{78.00\%} &  \underline{58.67\%} & \textbf{59.33\%} & 54.51\% & 58.66\% & \textbf{59.33\%} &  41.33\% & 47.01\% & \underline{49.85\%} & 47.79\% & \textbf{51.67\%} \\
& & \Large 3 & \underline{57.66\%} & 46.51\% & 54.19\% & 48.75\% & \textbf{72.48\%} &  \underline{53.90\%} & 53.68\% & 48.50\% & \textbf{53.99\%} & \textbf{53.99\%} &  48.22\% & 45.79\% & 58.40\% & \underline{60.35\%} & \textbf{64.33\%} \\
& & \Large 4 & \underline{60.00\%} & 59.59\% & 56.77\% & \textbf{68.58\%} & \textbf{68.58\%} &  \textbf{51.34\%} & 44.86\% & \underline{49.61\%} & 42.15\% & \underline{49.61\%} &  67.21\% & \textbf{69.91\%} & 66.76\% & 59.22\% & \underline{69.35\%} \\
& & \Large 5 & 60.00\% & 53.17\% & 58.61\% & \underline{63.41\%} & \textbf{67.78\%} &  \underline{60.00\%} & 51.57\% & 48.28\% & 54.24\% & \textbf{67.37\%} &  65.67\% & \underline{68.59\%} & 63.58\% & 63.23\% & \textbf{74.14\%} \\
\cmidrule{2-18}
& \Large \multirow{5}{*}{AUX-LAW-90S}
& \Large 1 & \underline{21.13\%} & 17.53\% & 20.20\% & 18.09\% & \textbf{24.84\%} &  \underline{20.77\%} & 17.57\% & 17.23\% & 20.07\% & \textbf{22.44\%}& 24.72\% & 22.17\% & 21.93\% & \underline{25.73\%} & \textbf{29.60\%} \\
& & \Large 2 & 26.00\% & \underline{31.19\%} & 31.01\% & 29.48\% & \textbf{33.33\%} &  32.67\% & 32.06\% & \underline{35.06\%} & 30.38\% & \textbf{36.67\%} &  29.00\% & 29.61\% & \textbf{34.67\%} & \underline{33.19\%} & \textbf{34.67\%} \\
& & \Large 3 & 26.16\% & \underline{28.68\%} & 28.12\% & 27.02\% & \textbf{31.21\%} &  32.33\% & \underline{34.36\%} & 30.94\% & 32.89\% & \textbf{35.92\%} &  27.90\% & \textbf{37.54\%} & 29.23\% & 25.65\% & \underline{36.39\%} \\
& & \Large 4 & \textbf{29.59\%} & 25.94\% & 24.34\% & 24.53\% & \underline{28.16\%} &  \underline{40.01\%} & 33.14\% & 38.38\% & 37.83\% & \textbf{40.32\%} &  \textbf{38.77\%} & 33.04\% & \underline{37.47\%} & 36.28\% & 35.84\% \\
& & \Large 5 & \textbf{30.80\%} & 26.12\% & 25.38\% & 30.12\% & \underline{30.62\%} &  45.20\% & \underline{45.27\%} & 41.17\% & 39.00\% & \textbf{47.36\%} &  \textbf{39.24\%} & 33.56\% & 35.08\% & 36.10\% & \underline{36.36\%} \\
\cmidrule{2-18}
& \Large \multirow{5}{*}{ARC-EASY}
& \Large 1 & \textbf{35.90\%} & \underline{34.05\%} & 29.32\% & 31.45\% & \textbf{35.90\%} &  \underline{27.90\%} & 23.91\% & 22.76\% & 25.03\% & \textbf{28.48\%}& \underline{25.19\%} & 21.92\% & 22.35\% & 23.17\% & \textbf{32.47\%} \\
& & \Large 2 & \textbf{56.67\%} & \underline{52.67\%} & 50.47\% & 48.28\% & \underline{52.67\%} &  36.67\% & \underline{40.46\%} & 37.14\% & 36.88\% & \textbf{42.00\%} &  27.67\% & 31.02\% & \underline{32.04\%} & 30.99\% & \textbf{35.00\%} \\
& & \Large 3 & \underline{55.07\%} & 54.27\% & 53.69\% & 49.65\% & \textbf{57.80\%} &  35.69\% & \underline{39.05\%} & \textbf{44.29\%} & 36.22\% & \textbf{44.29\%} &  37.89\% & 36.75\% & 38.64\% & \textbf{40.72\%} & \underline{40.33\%} \\
& & \Large 4 & \underline{57.98\%} & 57.50\% & 52.16\% & 52.21\% & \textbf{61.29\%} &  \textbf{47.44\%} & 39.98\% & 42.69\% & 39.52\% & \underline{43.84\%} &  40.56\% & 37.90\% & \textbf{51.06\%} & 41.84\% & \underline{48.20\%} \\
& & \Large 5 & \textbf{57.68\%} & 50.84\% & 54.98\% & 47.51\% & \underline{55.19\%} &  \underline{46.04\%} & 40.76\% & 44.85\% & 40.84\% & \textbf{46.71\%} &  \underline{48.81\%} & 41.63\% & 44.30\% & 47.13\% & \textbf{51.26\%} \\
\cmidrule{2-18}
& \Large \multirow{5}{*}{SCIENCE-ELEMENTARY}
& \Large 1 & 32.47\% & 32.23\% & \underline{35.95\%} & 26.75\% & \textbf{45.78\%} &  \textbf{33.05\%} & 32.53\% & 27.93\% & 29.67\% & \underline{32.78\%}& 39.35\% & 45.72\% & \underline{47.42\%} & 42.38\% & \textbf{47.77\%} \\
& & \Large 2 & 52.17\% & \underline{55.69\%} & 54.59\% & \textbf{60.87\%} & \textbf{60.87\%} &  \textbf{55.43\%} & \underline{52.17\%} & 50.09\% & 50.51\% & \underline{52.17\%} &  45.33\% & 47.50\% & \underline{49.07\%} & 43.61\% & \textbf{51.00\%} \\
& & \Large 3 & \underline{55.90\%} & 47.84\% & 53.52\% & 53.26\% & \textbf{65.22\%} &  \underline{54.65\%} & 49.79\% & \textbf{56.07\%} & 51.56\% & \textbf{56.07\%} &  52.74\% & 45.71\% & \textbf{58.58\%} & 50.23\% & \underline{54.95\%} \\
& & \Large 4 & 60.00\% & \textbf{63.76\%} & 61.79\% & \underline{61.96\%} & \textbf{63.76\%} &  \textbf{56.78\%} & 51.41\% & 49.95\% & \underline{55.97\%} & \underline{55.97\%} &  55.51\% & 47.69\% & 59.27\% & \underline{60.57\%} & \textbf{66.52\%} \\
& & \Large 5 & 55.19\% & 49.60\% & 61.95\% & \underline{62.57\%} & \textbf{63.33\%} &  \underline{60.00\%} & 51.17\% & 57.97\% & 55.43\% & \textbf{65.13\%} &  65.67\% & \underline{78.59\%} & 70.97\% & 58.03\% & \textbf{80.67\%} \\
\cmidrule{1-18}
{\fontsize{15}{16}\selectfont\ \textbf{\multirow{15}{*}{WMT14}}}
& \Large \multirow{5}{*}{EN-CS}
& \Large 1 & \underline{51.99\%} & 47.79\% & 42.59\% & 45.30\% & \textbf{52.37\%} &  50.00\% & \underline{55.14\%} & 43.03\% & 48.58\% & \textbf{56.40\%}& 54.97\% & 52.13\% & \underline{55.99\%} & 47.17\% & \textbf{56.98\%} \\
& & \Large 2 & 60.67\% & 61.84\% & \textbf{63.33\%} & \underline{63.04\%} & \textbf{63.33\%} &  59.33\% & 57.96\% & \underline{59.91\%} & 56.72\% & \textbf{62.67\%} &  58.33\% & 60.47\% & \underline{61.46\%} & \textbf{65.67\%} & \textbf{65.67\%} \\
& & \Large 3 & 59.04\% & 51.77\% & \underline{59.58\%} & 49.75\% & \textbf{59.76\%} &  55.61\% & 62.66\% & \underline{66.65\%} & 66.34\% & \textbf{68.33\%} &  65.85\% & \textbf{73.71\%} & \underline{71.61\%} & 69.40\% & 70.63\% \\
& & \Large 4 & \underline{69.65\%} & 57.58\% & 68.55\% & \textbf{71.06\%} & \textbf{71.06\%} &  70.67\% & 65.66\% & \underline{71.56\%} & \textbf{74.00\%} & \textbf{74.00\%} &  70.67\% & 75.08\% & 63.42\% & \textbf{79.99\%} & \underline{75.33\%} \\
& & \Large 5 & 68.54\% & 67.66\% & \underline{70.74\%} & 65.87\% & \textbf{75.00\%} &  \underline{66.20\%} & 55.59\% & 65.41\% & 62.87\% & \textbf{75.00\%} &  70.74\% & 73.88\% & 76.25\% & \underline{82.56\%} & \textbf{85.84\%} \\
\cmidrule{2-18}
& \Large \multirow{5}{*}{EN-RU}
& \Large 1 & \underline{53.24\%} & 51.50\% & 52.46\% & 44.81\% & \textbf{55.64\%} &  47.63\% & \underline{54.51\%} & 49.47\% & 40.89\% & \textbf{55.20\%}& \underline{57.20\%} & 51.76\% & 52.09\% & 51.69\% & \textbf{58.24\%} \\
& & \Large 2 & 57.33\% & \underline{65.31\%} & 57.37\% & 57.95\% & \textbf{65.67\%} &  56.67\% & \textbf{61.33\%} & 55.41\% & \underline{60.55\%} & \textbf{61.33\%} &  60.33\% & \underline{61.14\%} & 59.87\% & 56.73\% & \textbf{64.33\%} \\
& & \Large 3 & 56.34\% & \textbf{59.16\%} & 56.24\% & \underline{57.09\%} & \textbf{59.16\%} &  60.30\% & \textbf{63.46\%} & 55.68\% & \underline{61.61\%} & \textbf{63.46\%} &  59.22\% & \textbf{74.83\%} & 60.67\% & 68.18\% & \underline{70.47\%} \\
& & \Large 4 & \underline{67.39\%} & \textbf{75.00\%} & 57.21\% & 61.12\% & \textbf{75.00\%} &  \underline{70.67\%} & 66.10\% & 61.88\% & \textbf{74.00\%} & \textbf{74.00\%} &  70.67\% & 70.66\% & \textbf{76.84\%} & \underline{76.07\%} & 75.33\% \\
& & 5 & \underline{70.00\%} & 64.49\% & 62.24\% & 56.48\% & \textbf{75.00\%} &  \underline{68.81\%} & 59.22\% & 62.70\% & 61.13\% & \textbf{75.00\%} &  75.82\% & 75.04\% & \underline{81.85\%} & 73.46\% & \textbf{85.84\%} \\
\cmidrule{2-18}
& \Large \multirow{5}{*}{EN-DE}
& \Large 1 & \underline{58.22\%} & 56.86\% & 53.58\% & 48.45\% & \textbf{58.39\%} &  46.60\% & \underline{52.11\%} & 39.72\% & 42.74\% & \textbf{58.50\%}& \underline{51.35\%} & 50.18\% & 47.43\% & 46.93\% & \textbf{54.06\%} \\
& & \Large 2 & 61.00\% & 61.88\% & \textbf{68.33\%} & \underline{68.22\%} & \textbf{68.33\%} &  57.00\% & \underline{64.69\%} & \textbf{65.00\%} & 62.35\% & \textbf{65.00\%} &  59.67\% & \textbf{63.00\%} & \underline{60.79\%} & 55.97\% & \textbf{63.00\%} \\
& & \Large 3 & \underline{61.92\%} & 61.82\% & \textbf{70.00\%} & 56.23\% & \textbf{70.00\%} &  \underline{61.48\%} & 54.38\% & 54.16\% & 51.08\% & \textbf{63.72\%} &  61.19\% & \textbf{65.42\%} & \underline{64.73\%} & 58.35\% & 62.70\% \\
& & \Large 4 & \textbf{70.67\%} & 59.72\% & 64.15\% & 61.70\% & \underline{69.25\%} &  55.78\% & \textbf{73.46\%} & 57.19\% & \underline{71.75\%} & \textbf{73.46\%} &  69.40\% & \textbf{75.63\%} & 74.20\% & 74.59\% & \underline{75.14\%} \\
& & \Large 5 & \textbf{68.67\%} & 57.95\% & 55.48\% & 63.26\% & \underline{64.04\%} &  50.68\% & \underline{67.32\%} & 58.44\% & 67.13\% & \textbf{75.00\%} &  72.06\% & 67.49\% & \underline{73.91\%} & 72.63\% & \textbf{78.75\%} \\
\cmidrule{1-18}

{\fontsize{15}{16}\selectfont\ \textbf{\multirow{5}{*}{GPQA}}}
& \Large \multirow{5}{*}{GPQA}
& \Large 1 & 7.91\% & 8.91\% & 7.79\% & \underline{13.22\%} & \textbf{13.70\%} &  11.38\% & 9.12\% & \underline{12.44\%} & 9.35\% & \textbf{16.12\%} & 6.90\% & 11.29\% & \underline{13.59\%} & 10.95\% & \textbf{15.08\%} \\
& & \Large 2 & 9.67\% & 11.10\% & \underline{13.10\%} & 12.15\% & \textbf{15.33\%} &  11.00\% & 15.62\% & \underline{16.20\%} & 14.77\% & \textbf{18.67\%} & 7.67\% & \underline{13.34\%} & 7.57\% & 8.94\% & \textbf{16.33\%} \\
& & \Large 3 & 13.21\% & 12.09\% & \textbf{14.57\%} & \underline{14.39\%} & \textbf{14.57\%} &  \underline{17.12\%} & 16.64\% & 14.05\% & 13.85\% & \textbf{17.69\%} & 11.08\% & 13.95\% & 15.60\% & \textbf{20.08\%} & \underline{18.35\%} \\
& & \Large 4 & 12.02\% & \underline{14.25\%} & \textbf{15.90\%} & 11.39\% & \textbf{15.90\%} &  \underline{19.68\%} & 18.64\% & 15.92\% & \textbf{20.37\%} & \textbf{20.37\%} & 12.10\% & \textbf{20.88\%} & 16.43\% & 15.65\% & \underline{19.61\%} \\
& & \Large 5 & 17.30\% & \underline{18.46\%} & 18.20\% & 15.10\% & \textbf{18.54\%} &  \textbf{19.35\%} & 17.27\% & 18.20\% & 15.99\% & \underline{18.47\%} & 20.97\% & 21.70\% & 21.26\% & \underline{21.98\%} & \textbf{22.55\%} \\

\cmidrule{1-18}
{\fontsize{15}{16}\selectfont\ \textbf{\multirow{5}{*}{MATH}}}
& \Large \multirow{5}{*}{MATH}
& \Large 1 & 5.95\% & \underline{8.76\%} & 4.98\% & 8.50\% & \textbf{11.35\%} &  6.29\% & 5.48\% & \underline{7.81\%} & 6.07\% & \textbf{9.49\%}& 6.15\% & \underline{9.82\%} & 7.37\% & 5.53\% & \textbf{10.00\%} \\
& & \Large 2 & 7.33\% & \underline{10.61\%} & 7.82\% & 8.80\% & \textbf{12.67\%} &  6.33\% & 6.56\% & \underline{10.02\%} & \textbf{11.67\%} & \textbf{11.67\%} &  7.00\% & 6.36\% & \underline{8.63\%} & \textbf{11.00\%} & \textbf{11.00\%} \\
& & \Large 3 & 11.52\% & 13.15\% & \underline{13.54\%} & \textbf{14.74\%} & \textbf{14.74\%} &  10.31\% & 8.58\% & \underline{11.07\%} & 9.43\% & \textbf{11.37\%} &  9.77\% & \underline{10.97\%} & 9.33\% & 9.22\% & \textbf{11.88\%} \\
& & \Large 4 & \underline{15.23\%} & 14.77\% & 13.07\% & 12.73\% & \textbf{15.32\%} &  \underline{13.43\%} & 11.71\% & 11.52\% & \textbf{14.84\%} & \textbf{14.84\%} &  12.39\% & 11.53\% & \underline{12.97\%} & 12.71\% & \textbf{13.14\%} \\
& & 5 & 15.87\% & 16.69\% & \underline{16.74\%} & 15.27\% & \textbf{17.05\%} &  \textbf{15.49\%} & 13.49\% & 14.47\% & 13.00\% & \underline{14.83\%} &  \textbf{13.17\%} & 12.46\% & 11.55\% & 12.63\% & \underline{12.92\%} \\
\cmidrule{1-18}
{\fontsize{15}{16}\selectfont\ \textbf{\multirow{5}{*}{GSM8K}}}
& \Large \multirow{5}{*}{GSM8K}
& \Large 1 & 21.99\% & 20.17\% & 23.26\% & \underline{24.59\%} & \textbf{25.74\%} &  \underline{18.78\%} & 16.26\% & 15.37\% & 15.27\% & \textbf{25.49\%}& 18.73\% & \underline{24.38\%} & 23.13\% & 16.06\% & \textbf{26.92\%} \\
& &\Large 2 & 23.33\% & 28.02\% & \underline{29.47\%} & 26.08\% & \textbf{31.00\%} &  21.58\% & 22.55\% & 21.40\% & \underline{26.92\%} & \textbf{28.47\%} &  22.00\% & \textbf{32.00\%} & \underline{29.72\%} & 29.67\% & \textbf{32.00\%} \\
& & \Large 3 & 25.49\% & \underline{26.11\%} & \textbf{28.28\%} & 24.02\% & \textbf{28.28\%} &  25.44\% & 27.04\% & 22.38\% & \underline{29.57\%} & \textbf{30.01\%} &  24.45\% & 22.81\% & 19.83\% & \underline{25.23\%} & \textbf{31.31\%} \\
& & \Large 4 & 31.94\% & 27.87\% & 31.14\% & \underline{31.97\%} & \textbf{33.32\%} &  26.22\% & 27.03\% & \textbf{36.67\%} & \underline{32.28\%} & \textbf{36.67\%} &  \underline{24.62\%} & 20.16\% & \textbf{28.32\%} & 23.81\% & \textbf{28.32\%} \\
& & \Large 5 & 31.34\% & 30.36\% & 33.36\% & \underline{37.10\%} & \textbf{40.00\%} &  \textbf{35.00\%} & 33.31\% & 29.11\% & 29.23\% & \underline{34.96\%} &  \textbf{27.87\%} & 23.06\% & 25.08\% & 23.56\% & \underline{25.76\%} \\
\bottomrule
\end{tabular}
} 
\end{sc}
\end{large}
\end{center}
\vskip -0.1in
\end{table}

\begin{table}
\setlength{\abovecaptionskip}{-0.1cm}
\setlength{\belowcaptionskip}{0cm}
    \caption{Generalization Experiments of Rule-based Gate Architecture.}
    \label{tab:app_exp_gen_rb1}
    \vskip 0.15in
    \renewcommand{\arraystretch}{1.2}
    \begin{center}
    \begin{small}
    \begin{sc}
    \fontsize{10pt}{12pt}\selectfont 
    \resizebox{0.5\textwidth}{!}{ 
    \begin{tabular}{l|c|ccc} 
    \hline
    \textbf{Benchmark} & \textbf{Task} & \textbf{Lorahub}&\textbf{Arrow} & \textbf{PEMs} \\
    \hline
    \multirow{7}{*}{\textbf{glue}} 
        & SST-2 & 16.67\% & \underline{18.33\%} & \textbf{20.24\%} \\
         & WNLI & \textbf{22.62\%} & \underline{20.40\%} & 15.34\% \\
         & QNLI & \underline{17.11\%} & \textbf{19.42\%} & 14.95\% \\
         & QQP & \textbf{21.92\%} & \underline{17.21\%} & 16.02\% \\
         & MNLI & \underline{10.59\%} & 9.61\% & \textbf{11.85\%} \\
         & RTE & 18.67\% & \underline{21.33\%} & \textbf{24.53\%} \\
         & CoLA & \underline{19.67\%} & \textbf{22.33\%} & 16.57\% \\
    \hline
    \end{tabular}
    } 
    \end{sc}
    \end{small}
    \end{center}
    \vskip -0.1in
    \end{table}
\begin{table}
\setlength{\abovecaptionskip}{-0.1cm}
\setlength{\belowcaptionskip}{0cm}
    \caption{Generalization Experiments of Rule-based Gate Architecture. (Continue)}
    \label{tab:app_exp_gen_rb2}
    \vskip 0.15in
    \renewcommand{\arraystretch}{1.2}
    \begin{center}
    \begin{small}
    \begin{sc}
    \fontsize{10pt}{12pt}\selectfont 
    \resizebox{0.7\textwidth}{!}{ 
    \begin{tabular}{l|c|ccc} 
    \hline
    \textbf{Benchmark} & \textbf{Task} & \textbf{Lorahub}&\textbf{Arrow} & \textbf{PEMs} \\
    \hline
    \multirow{8}{*}{\textbf{mmlu}} 
        & ARC-HARD & 12.33\% & \underline{19.67\%} & \textbf{21.10\%} \\
         & SCIENCE\_MIDDLE & \underline{21.33\%} & 19.33\% & \textbf{25.00\%} \\
         & RACE & \underline{13.67\%} & 13.00\% & \textbf{15.55\%} \\
         & OBQA& \textbf{14.67}\% & 10.21\% & \underline{12.02\%} \\
         & MC-TEST & \textbf{25.33\%} & \underline{21.67\%} & 20.00\% \\
         & AUX-LAW-90s & 11.67\% & \textbf{13.00\%} & 12.33\% \\
         & ARC-EASY & 11.67\% & \underline{12.33\%} & \textbf{16.17\%} \\
         & SCIENCE-ELEMENTARY & \underline{15.33\%} & \textbf{16.00\%} & 12.67\% \\
    \cline{1-5}
    \multirow{3}{*}{\textbf{wmt14}}
        & EN-CS & 35.33\% & \underline{39.11\%} & \textbf{41.67\%} \\
         & EN-RU & \underline{41.67\%} & \textbf{40.00\%} & 35.00\% \\
         & EN-DE & \underline{35.00\%} & 34.47\% & \textbf{39.55\%} \\
    \cline{1-5}
    \multirow{1}{*}{\textbf{gpqa}}
        & GPQA & 3.58\% & \underline{4.39\%} & \textbf{5.00\%} \\
    \cline{1-5}
    \multirow{1}{*}{\textbf{math}}
        & MATH & 3.35\% & \underline{4.11\%} & \textbf{4.89\%} \\
    \cline{1-5}
    \multirow{1}{*}{\textbf{gsm8k}}
        & GSM8K & \textbf{18.67\%} & \underline{15.11\%} & 14.45\% \\
    \hline
    \end{tabular}
    } 
    \end{sc}
    \end{small}
    \end{center}
    \vskip -0.1in
    \end{table}

\section{Scaling Experiments}
\label{app:exp_sca} 

This section presents the results of two key experiments in the Scaling study: module number scaling and model parameter scaling. Figures \ref{fig:app_exp_module_sca_hyd}, \ref{fig:app_exp_module_sca_mole}, and \ref{fig:app_exp_module_sca_omoe} illustrate the module-related experiments conducted within three different architectures, while Figures \ref{fig:app_exp_module_sca_hyd}, \ref{fig:app_exp_module_sca_mole}, and \ref{fig:app_exp_module_sca_omoe} depict the model-related experiments across the same architectures.

\subsection{Module Scaling}

Figures \ref{fig:app_exp_module_sca_hyd}, \ref{fig:app_exp_module_sca_mole}, and \ref{fig:app_exp_module_sca_omoe} illustrate the performance variations of different gating structures within the three learnable gating architectures—HydraLoRA, MoLE, and OMoE—as the number of modules increases from 5 to 40. Each figure represents performance on a specific benchmark, where accuracy is computed as the mean across all tasks within the benchmark. It can be observed that our approach exhibits a consistently increasing trend in most cases, whereas baseline methods generally show a slight fluctuation followed by a downward trend. Moreover, our method achieves the best performance across all module configurations. Based on these results, it is further demonstrated that \textit{RadarGate} is more effective in large-scale modular settings and exhibits superior module scalability.
\begin{figure}[htpb]
\vskip 0.2in
\begin{center}
\begin{minipage}{\textwidth}
    \centering
    
    \begin{subfigure}[t]{0.32\textwidth}
        \includegraphics[width=\textwidth]{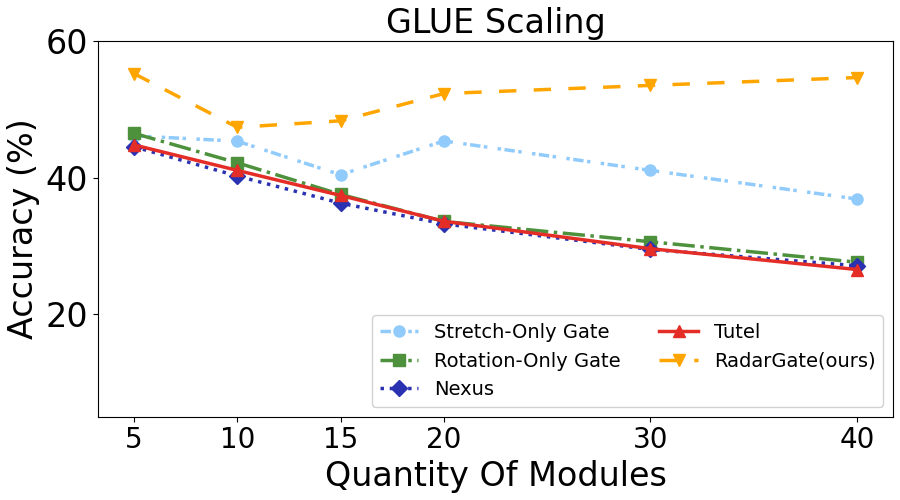}
        \caption{}
        \label{fig:app_exp_module_sca_hyd_subfig11}
    \end{subfigure}
    \hfill
    \begin{subfigure}[t]{0.32\textwidth}
        \includegraphics[width=\textwidth]{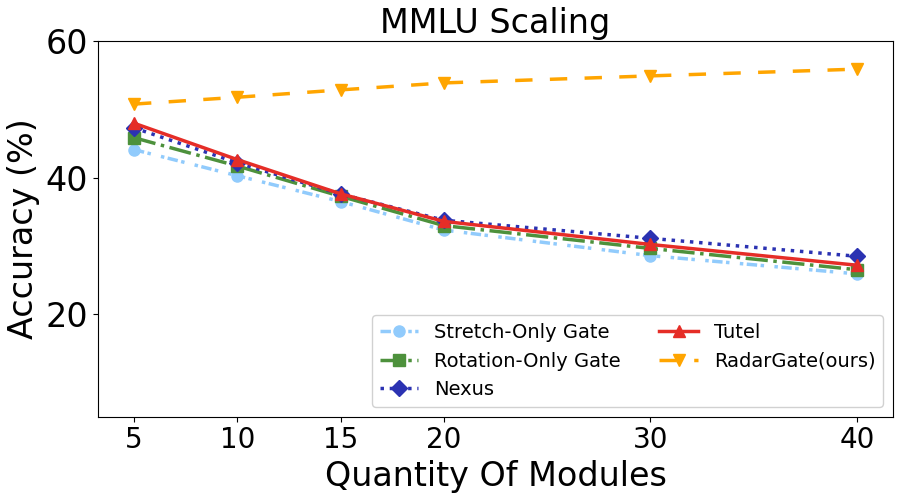}
        \caption{}
        \label{fig:app_exp_module_sca_hyd_subfig12}
    \end{subfigure}
    \hfill
    \begin{subfigure}[t]{0.32\textwidth}
        \includegraphics[width=\textwidth]{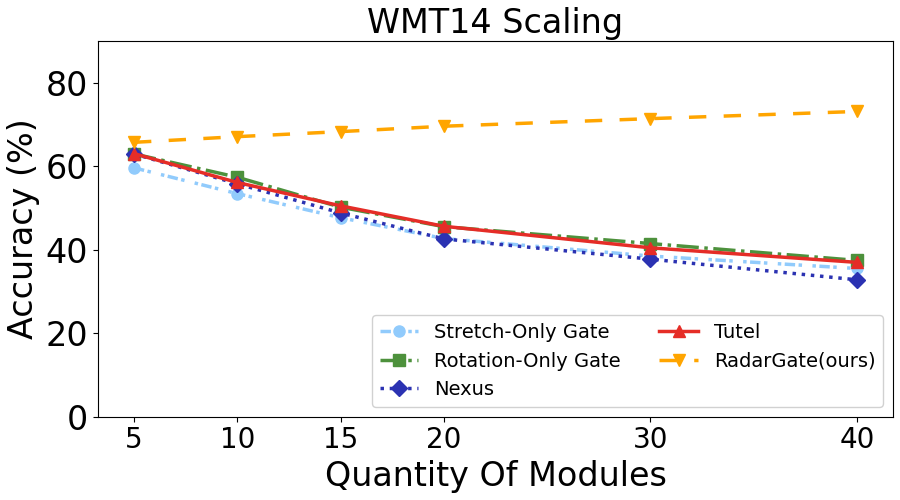}
        \caption{}
        \label{fig:app_exp_module_sca_hyd_subfig13}
    \end{subfigure}
    \hfill
    \begin{subfigure}[t]{0.32\textwidth}
        \includegraphics[width=\textwidth]{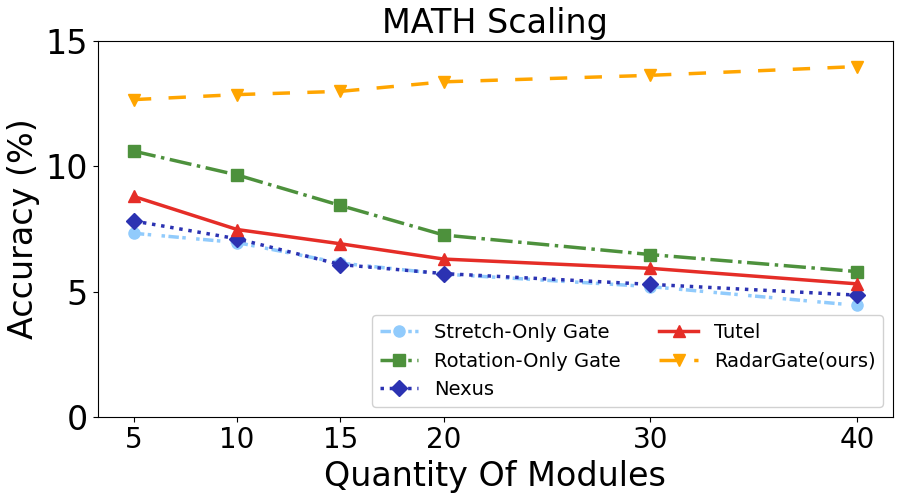}
        \caption{}
        \label{fig:app_exp_module_sca_hyd_subfig14}
    \end{subfigure}
    \hfill
    \begin{subfigure}[t]{0.32\textwidth}
        \includegraphics[width=\textwidth]{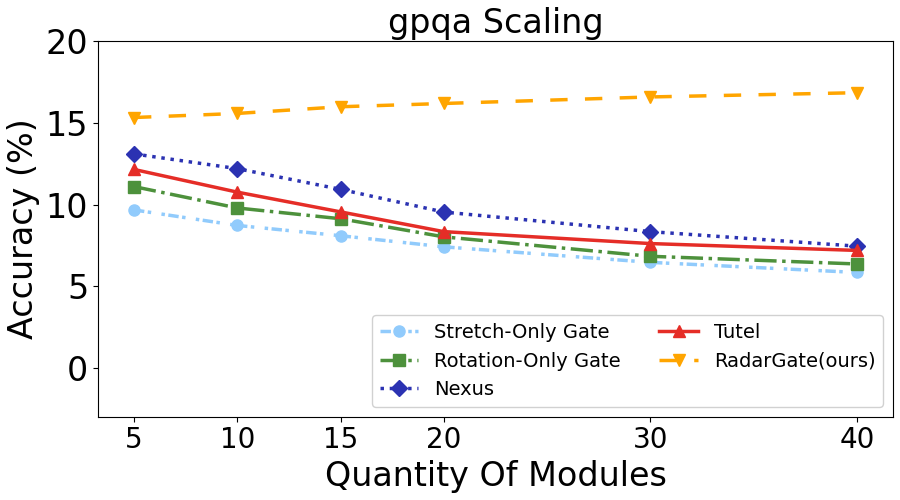}
        \caption{}
        \label{fig:app_exp_module_sca_hyd_subfig15}
    \end{subfigure}
    \hfill
    \begin{subfigure}[t]{0.32\textwidth}
        \includegraphics[width=\textwidth]{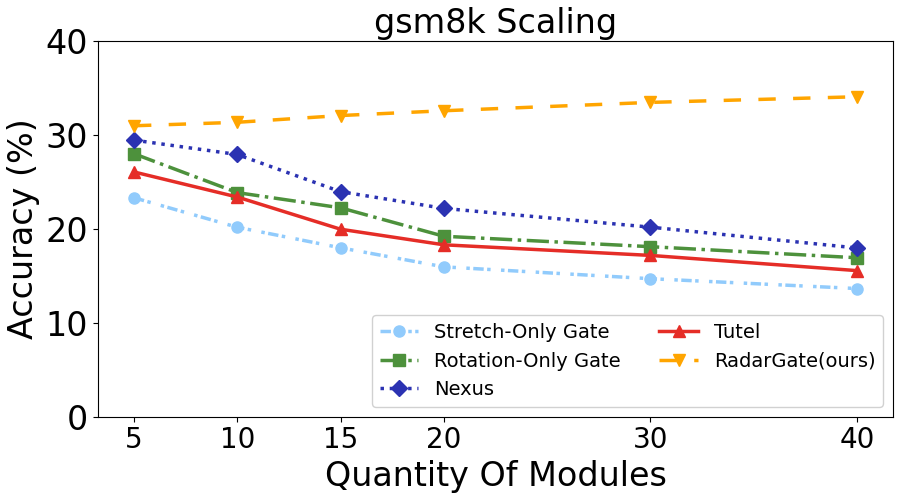}
        \caption{}
        \label{fig:app_exp_module_sca_hyd_subfig16}
    \end{subfigure}
\end{minipage}

\caption{The performance variations of different gates within the HydraLoRA architecture across six different benchmarks as the number of modules increases.} 
\label{fig:app_exp_module_sca_hyd} 
\end{center}
\vskip -0.2in
\end{figure}

\begin{figure}
\vskip 0.2in
\begin{center}
\begin{minipage}{\textwidth}
    \centering
    
    \begin{subfigure}[t]{0.32\textwidth}
        \includegraphics[width=\textwidth]{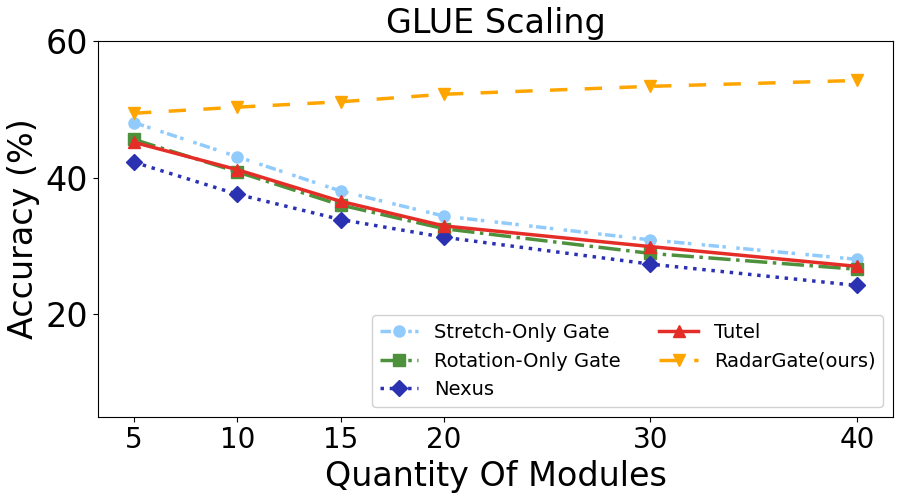}
        \caption{}
        \label{fig:app_exp_module_sca_mole_subfig11}
    \end{subfigure}
    \hfill
    \begin{subfigure}[t]{0.32\textwidth}
        \includegraphics[width=\textwidth]{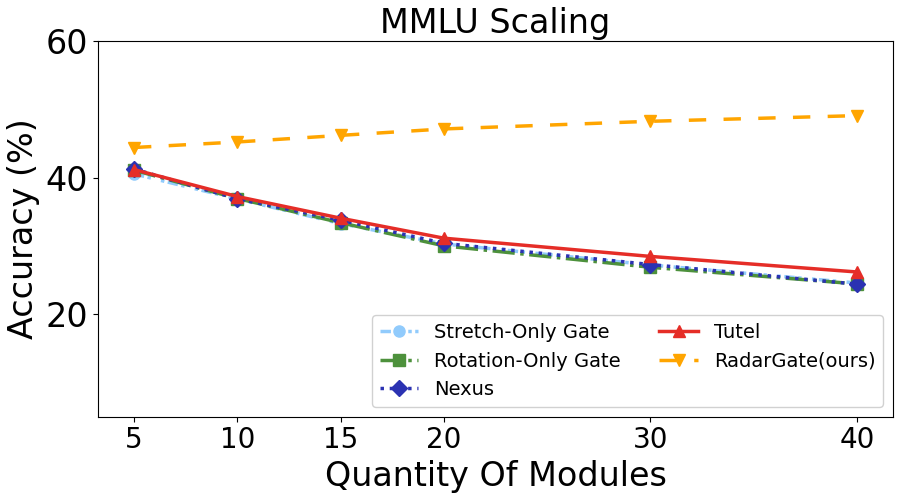}
        \caption{}
        \label{fig:app_exp_module_sca_mole_subfig12}
    \end{subfigure}
    \hfill
    \begin{subfigure}[t]{0.32\textwidth}
        \includegraphics[width=\textwidth]{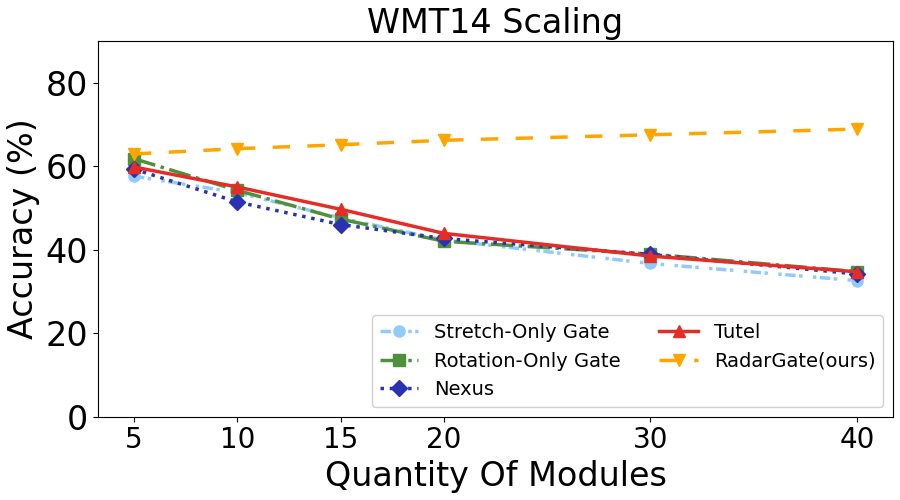}
        \caption{}
        \label{fig:app_exp_module_sca_mole_subfig13}
    \end{subfigure}
    \hfill
    \begin{subfigure}[t]{0.32\textwidth}
        \includegraphics[width=\textwidth]{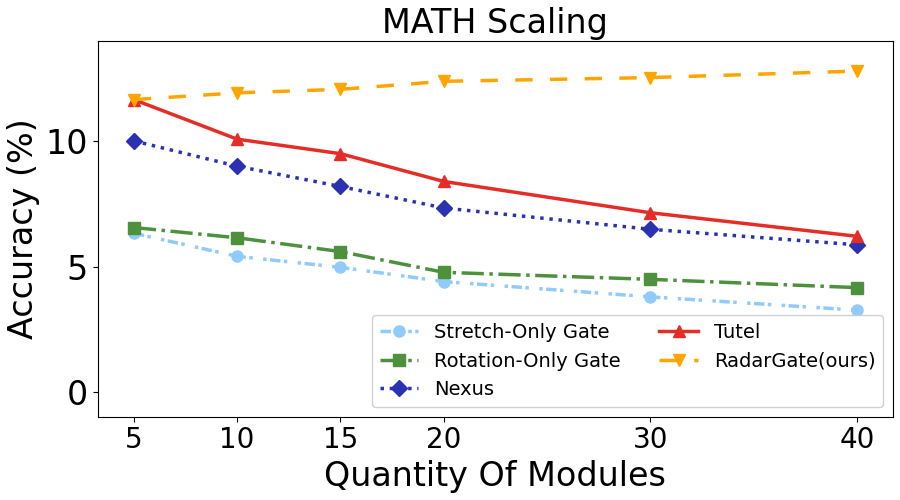}
        \caption{}
        \label{fig:app_exp_module_sca_mole_subfig14}
    \end{subfigure}
    \hfill
    \begin{subfigure}[t]{0.32\textwidth}
        \includegraphics[width=\textwidth]{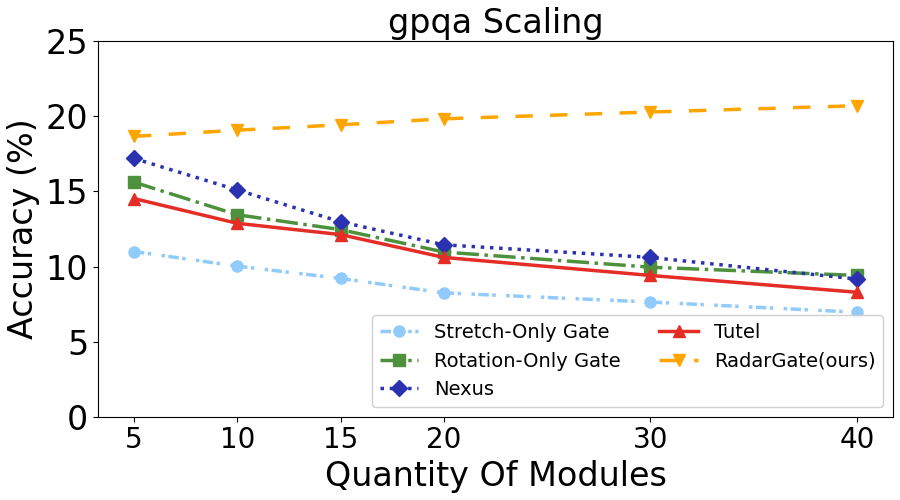}
        \caption{}
        \label{fig:app_exp_module_sca_mole_subfig15}
    \end{subfigure}
    \hfill
    \begin{subfigure}[t]{0.32\textwidth}
        \includegraphics[width=\textwidth]{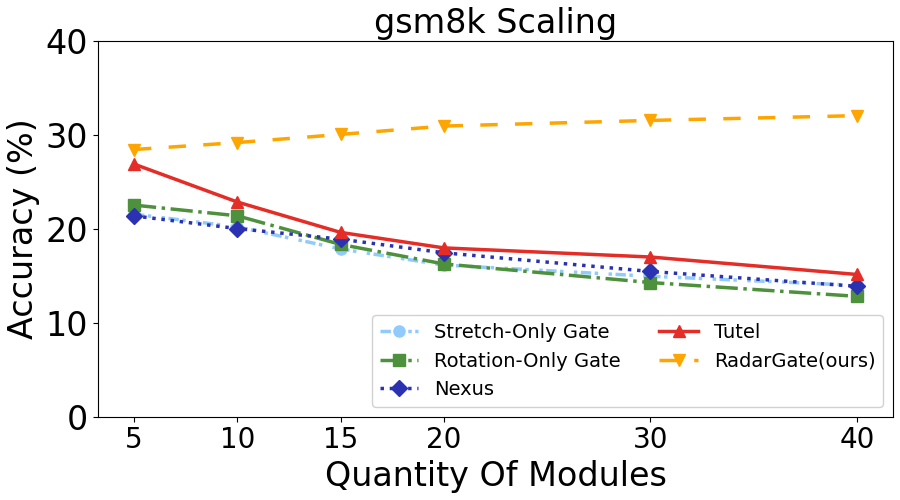}
        \caption{}
        \label{fig:app_exp_module_sca_mole_subfig16}
    \end{subfigure}
\end{minipage}

\caption{The performance variations of different gates within the MoLE architecture across six different benchmarks as the number of modules increases.} 
\label{fig:app_exp_module_sca_mole} 
\end{center}
\vskip -0.2in
\end{figure}

\begin{figure}
\vskip 0.2in
\begin{center}
\begin{minipage}{\textwidth}
    \centering
    
    \begin{subfigure}[t]{0.32\textwidth}
        \includegraphics[width=\textwidth]{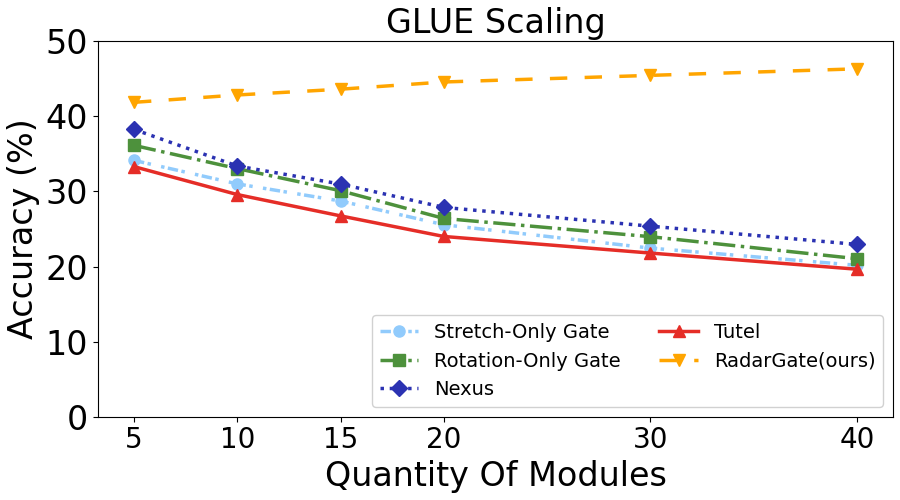}
        \caption{}
        \label{fig:app_exp_module_sca_omoe_subfig11}
    \end{subfigure}
    \hfill
    \begin{subfigure}[t]{0.32\textwidth}
        \includegraphics[width=\textwidth]{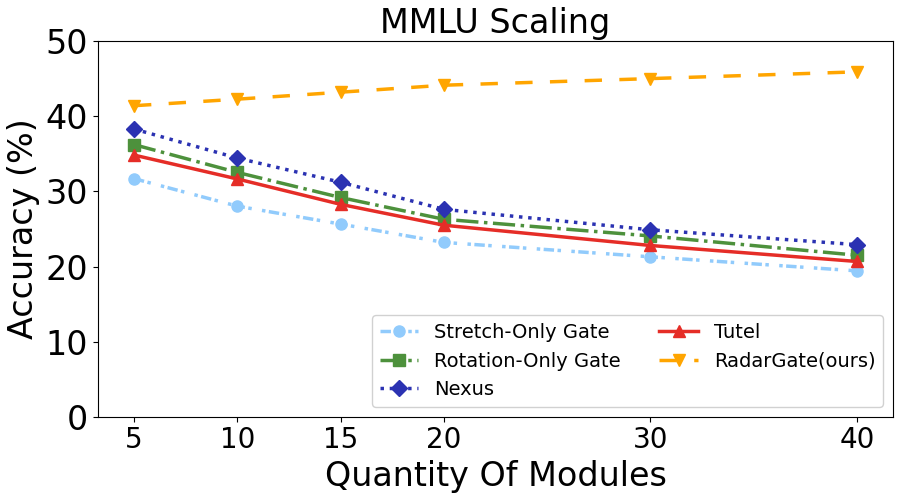}
        \caption{}
        \label{fig:app_exp_module_sca_omoe_subfig12}
    \end{subfigure}
    \hfill
    \begin{subfigure}[t]{0.32\textwidth}
        \includegraphics[width=\textwidth]{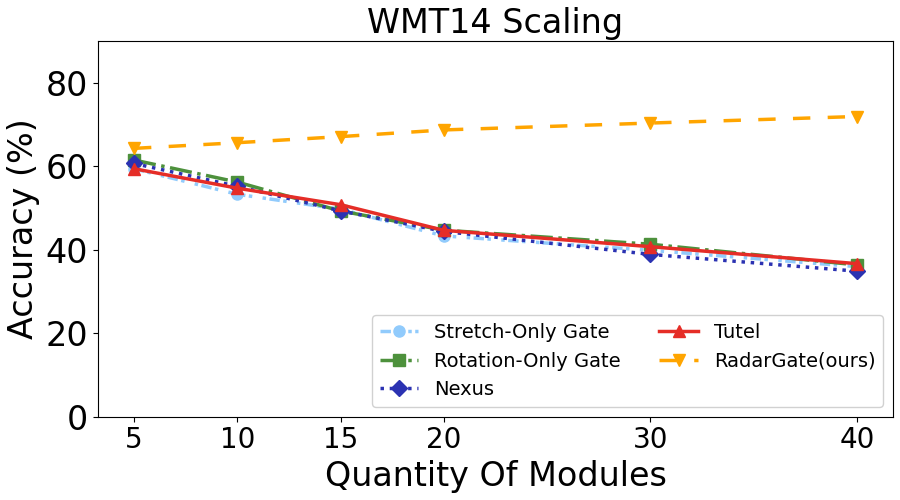}
        \caption{}
        \label{fig:app_exp_module_sca_omoe_subfig13}
    \end{subfigure}
    \hfill
    \begin{subfigure}[t]{0.32\textwidth}
        \includegraphics[width=\textwidth]{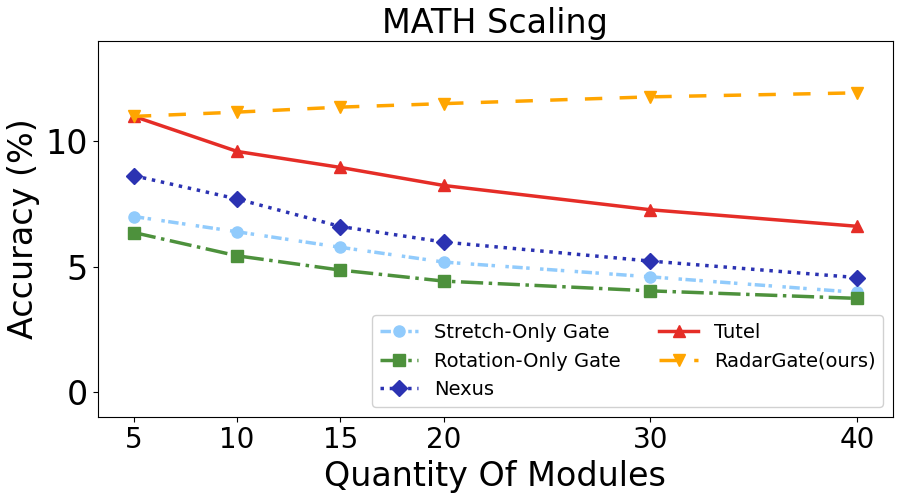}
        \caption{}
        \label{fig:app_exp_module_sca_omoe_subfig14}
    \end{subfigure}
    \hfill
    \begin{subfigure}[t]{0.32\textwidth}
        \includegraphics[width=\textwidth]{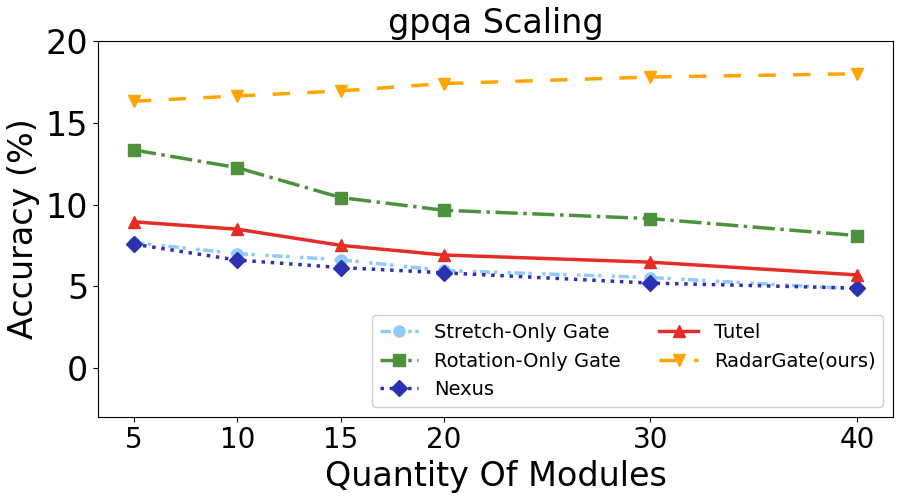}
        \caption{}
        \label{fig:app_exp_module_sca_omoe_subfig15}
    \end{subfigure}
    \hfill
    \begin{subfigure}[t]{0.32\textwidth}
        \includegraphics[width=\textwidth]{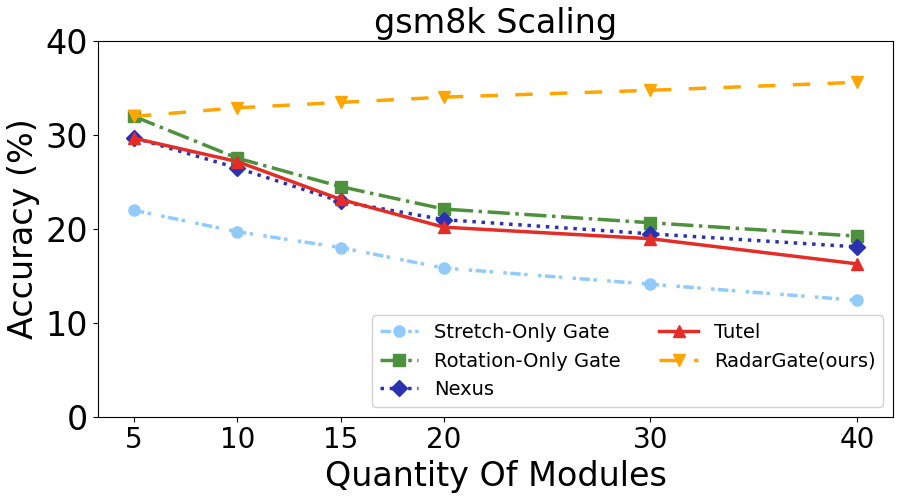}
        \caption{}
        \label{fig:app_exp_module_sca_omoe_subfig16}
    \end{subfigure}
\end{minipage}

\caption{The performance variations of different gates within the OMoE architecture across six different benchmarks as the number of modules increases.} 
\label{fig:app_exp_module_sca_omoe} 
\end{center}
\vskip -0.2in
\end{figure}

\subsection{Model Scaling}

Figure \ref{fig:app_exp_model_sca_hyd}, \ref{fig:app_exp_model_sca_mole}, and \ref{fig:app_exp_model_sca_omoe} illustrate the performance variations of different gating structures within the three learnable gating architectures—HydraLoRA, MoLE, and OMoE—as the model size increases. The five selected models have parameter sizes of 110M for IndicBART, 580M for mt0-base, 770M for Flan-T5-large, 1B for LLaMA3.2-1B, and 3B for LLaMA3.2-3B. Each figure represents performance on a specific benchmark, where accuracy is computed as the mean across all tasks within the benchmark. It can be observed that overall accuracy improves as the model size increases, and \textit{RadarGate} consistently outperforms baseline methods across nearly all cases. Furthermore, our approach achieves the best performance across different model sizes. Based on these results, it is further demonstrated that \textit{RadarGate} adapts more effectively to models of varying parameter scales and exhibits superior scalability in terms of model size.

\section{Convergence Experiments}
\label{app:exp_con}

This section presents the convergence behavior of the loss curves when all gating structures are applied to the MoLE architecture with module numbers set to 5, 15, and 40, as shown in Figure \ref{fig:app_conv_pic}. Overall, existing methods exhibit unstable convergence across all cases, with particularly severe oscillations in the early training stages and suboptimal convergence in the later stages. As the number of modules increases from 5 to 15 to 40, these issues become even more pronounced, making convergence increasingly difficult. In contrast, \textit{RadarGate} demonstrates significantly faster and more stable convergence, achieving the best final convergence performance.

\section{Sample Size Experiments}

\label{app:exp_sam}

This section presents a performance comparison between existing gating methods and our \textit{RadarGate} within the MoLE architecture across six widely used benchmarks under varying sample sizes, with a particular focus on low-data scenarios. As shown in Figures \ref{fig:app_exp_sample_sca_hyd}, \ref{fig:app_exp_sample_sca_mole}, and \ref{fig:app_exp_sample_sca_omoe}, our method consistently achieves the highest average performance across different sample sizes. Notably, when the sample size is 50 or 100, \textit{RadarGate} outperforms all baselines in over 90\% of cases. These results demonstrate that \textit{RadarGate} effectively adapts to various training sample sizes and is particularly well-suited for data-constrained scenarios.

\begin{figure}[htpb]
\vskip 0.2in
\begin{center}
\begin{minipage}{\textwidth}
    \centering
    
    \begin{subfigure}[t]{0.32\textwidth}
        \includegraphics[width=\textwidth]{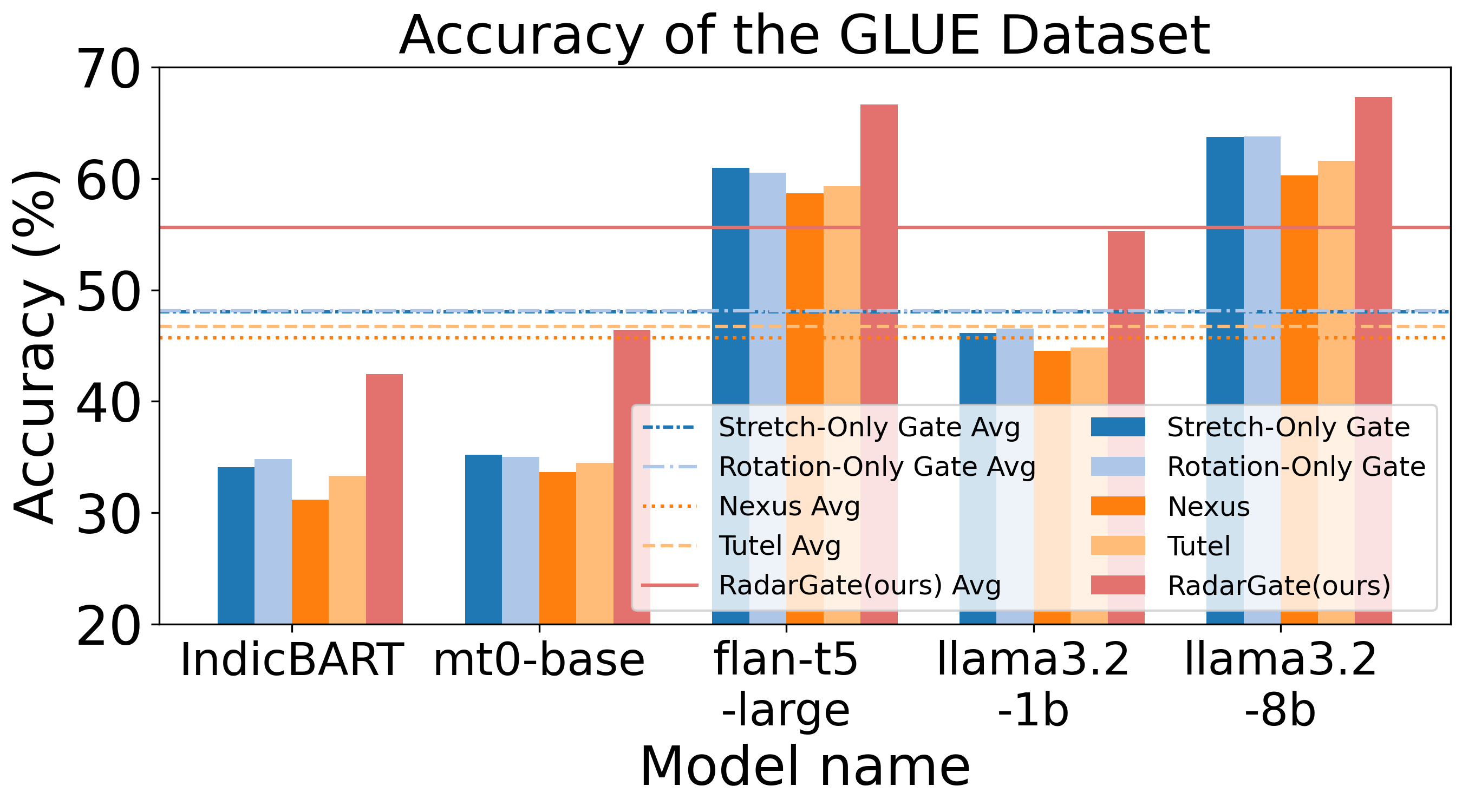}
        \caption{}
        \label{fig:app_exp_module_sca_hyd_subfig1}
    \end{subfigure}
    \hfill
    \begin{subfigure}[t]{0.32\textwidth}
        \includegraphics[width=\textwidth]{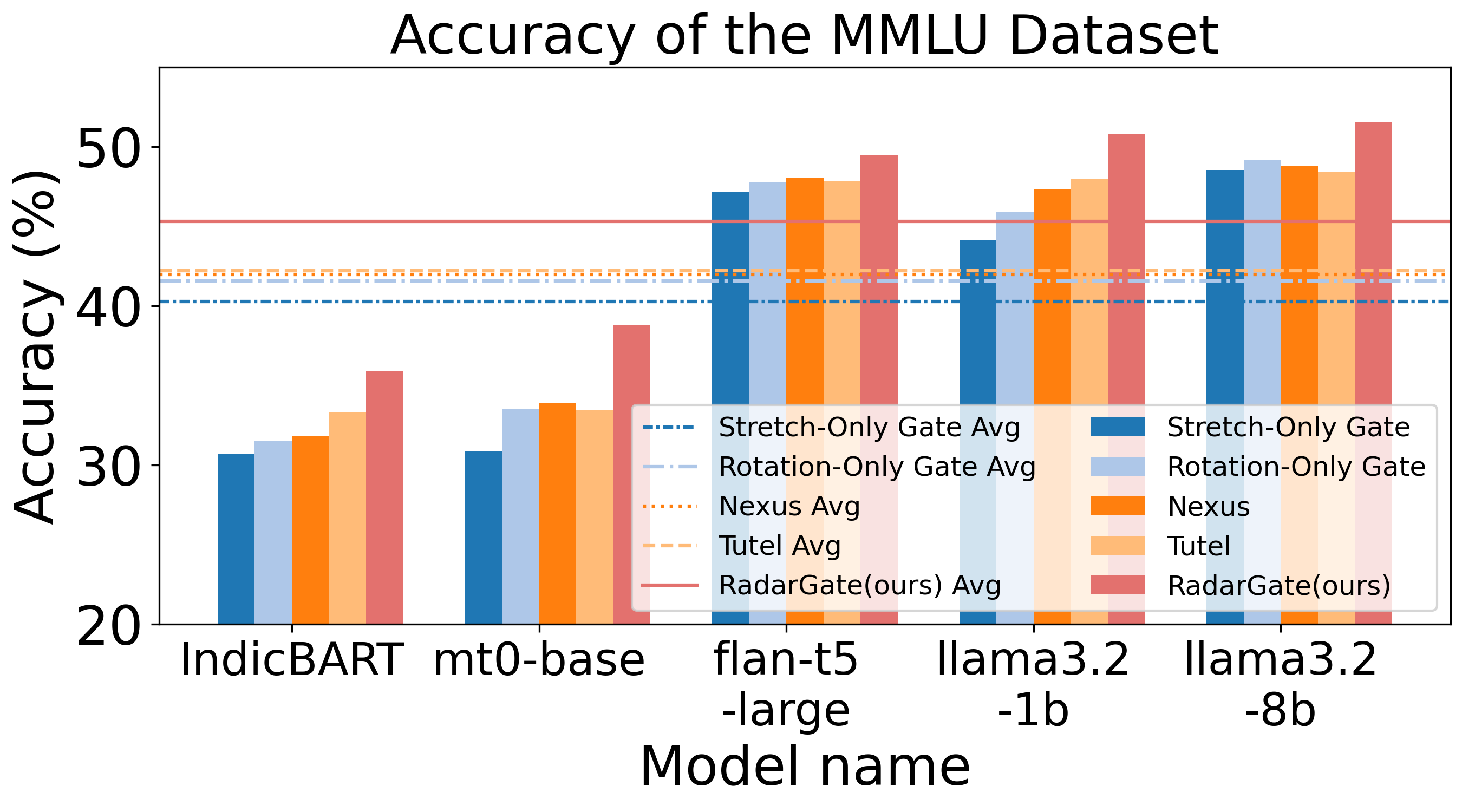}
        \caption{}
        \label{fig:app_exp_module_sca_hyd_subfig2}
    \end{subfigure}
    \hfill
    \begin{subfigure}[t]{0.32\textwidth}
        \includegraphics[width=\textwidth]{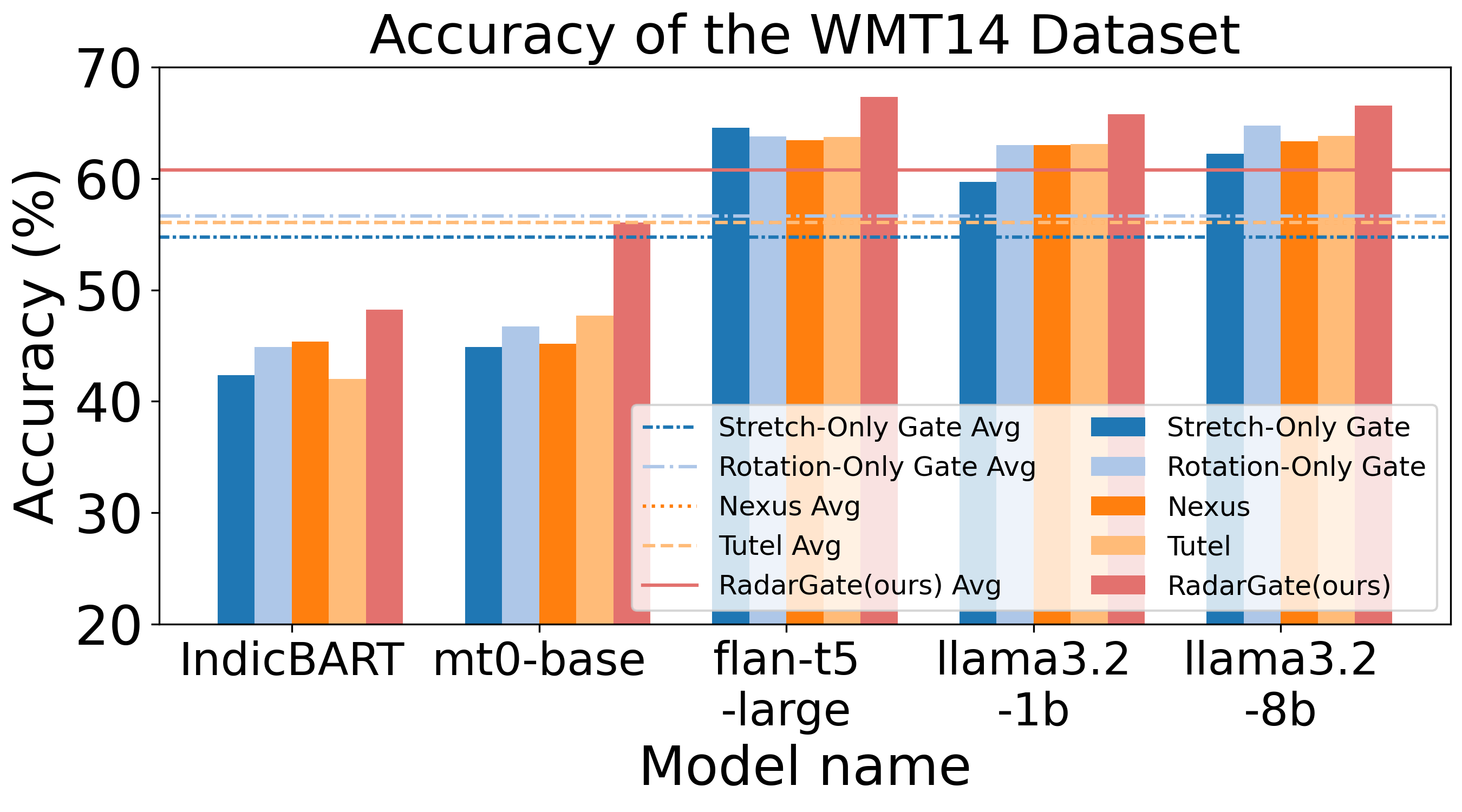}
        \caption{}
        \label{fig:app_exp_module_sca_hyd_subfig3}
    \end{subfigure}
    \hfill
    \begin{subfigure}[t]{0.32\textwidth}
        \includegraphics[width=\textwidth]{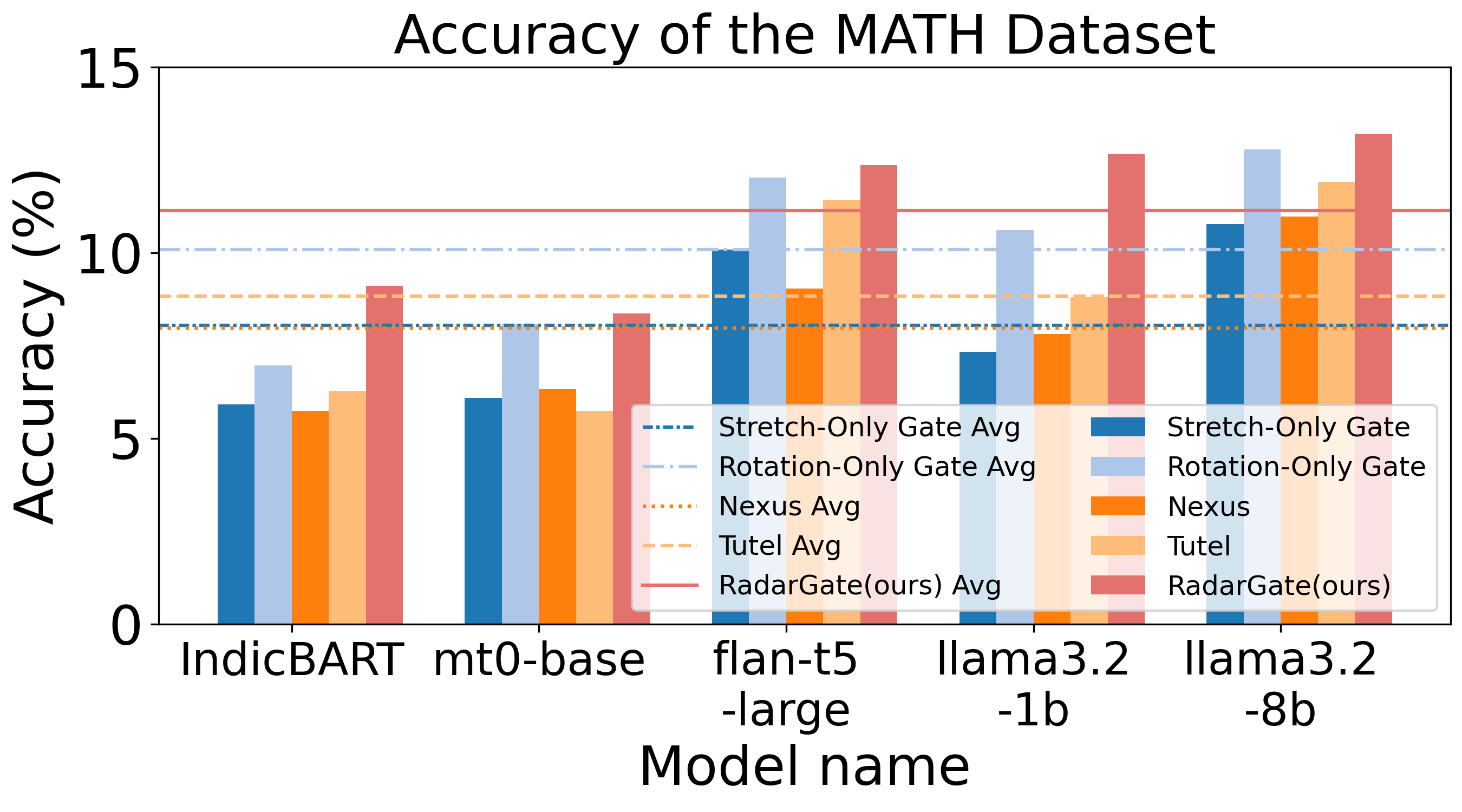}
        \caption{}
        \label{fig:app_exp_module_sca_hyd_subfig4}
    \end{subfigure}
    \hfill
    \begin{subfigure}[t]{0.32\textwidth}
        \includegraphics[width=\textwidth]{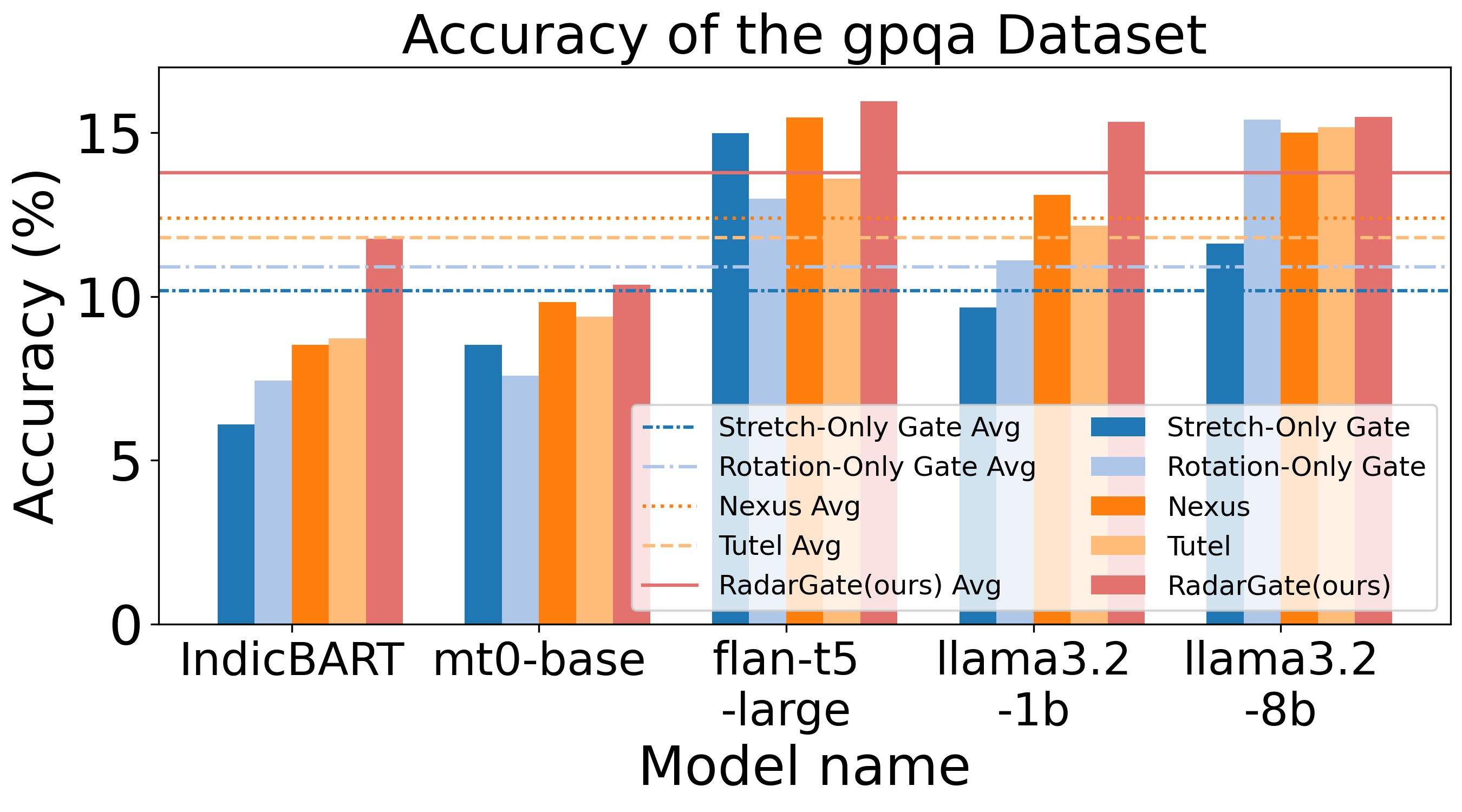}
        \caption{}
        \label{fig:app_exp_module_sca_hyd_subfig5}
    \end{subfigure}
    \hfill
    \begin{subfigure}[t]{0.32\textwidth}
        \includegraphics[width=\textwidth]{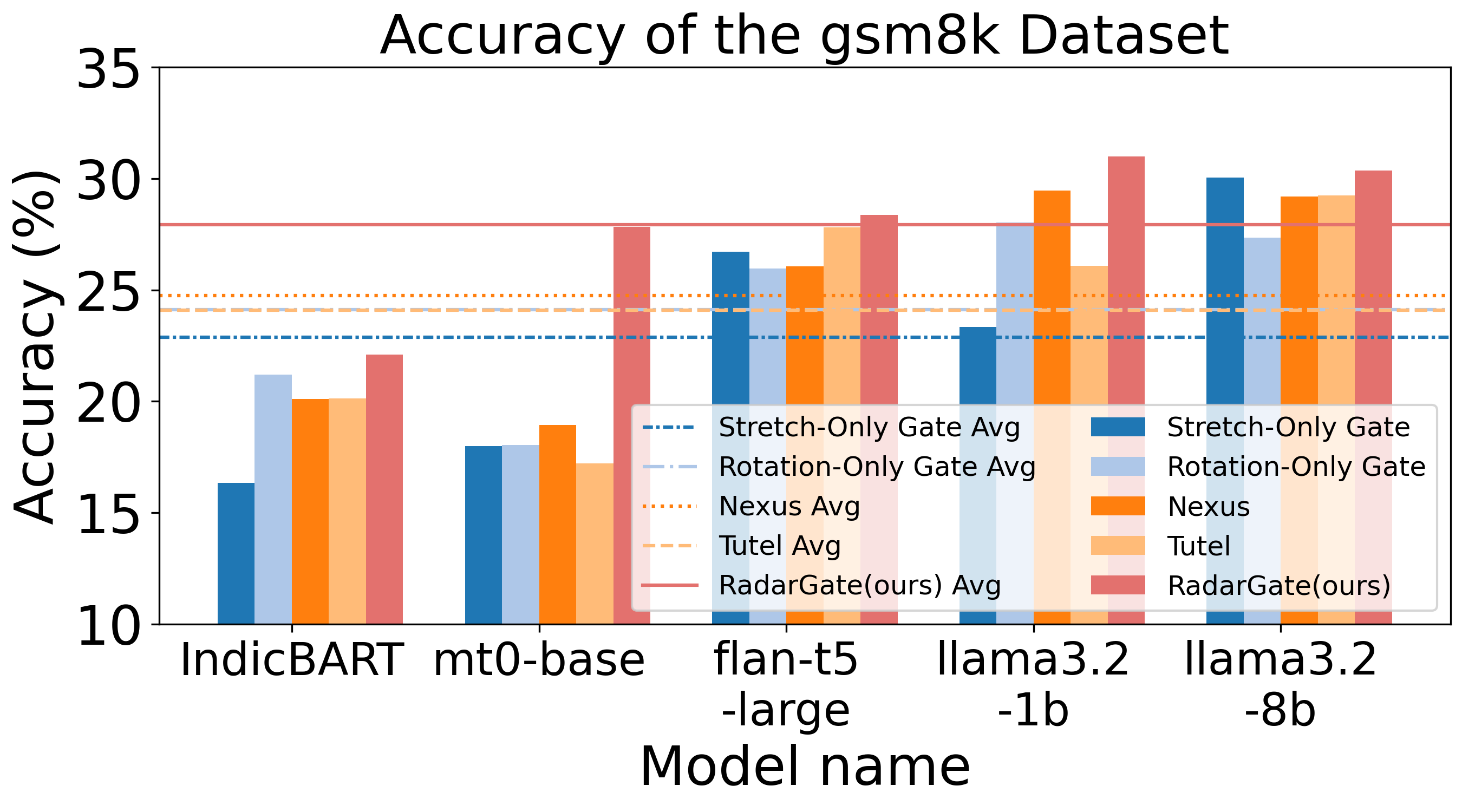}
        \caption{}
        \label{fig:app_exp_module_sca_hyd_subfig6}
    \end{subfigure}
\end{minipage}

\caption{The accuracy of different gates within the HydraLoRA architecture across six different benchmarks as the model parameters vary.} 
\label{fig:app_exp_model_sca_hyd} 
\end{center}
\vskip -0.2in
\end{figure}

\begin{figure}[htpb]
\vskip 0.2in
\begin{center}
\begin{minipage}{\textwidth}
    \centering
    
    \begin{subfigure}[t]{0.32\textwidth}
        \includegraphics[width=\textwidth]{appendix_scaling_models/Figure_mole_GLUE.png}
        \caption{}
        \label{fig:app_exp_module_sca_mole_subfig21}
    \end{subfigure}
    \hfill
    \begin{subfigure}[t]{0.32\textwidth}
        \includegraphics[width=\textwidth]{appendix_scaling_models/Figure_mole_MMLU.png}
        \caption{}
        \label{fig:app_exp_module_sca_mole_subfig2}
    \end{subfigure}
    \hfill
    \begin{subfigure}[t]{0.32\textwidth}
        \includegraphics[width=\textwidth]{appendix_scaling_models/Figure_mole_WMT14.png}
        \caption{}
        \label{fig:app_exp_module_sca_mole_subfig3}
    \end{subfigure}
    \hfill
    \begin{subfigure}[t]{0.32\textwidth}
        \includegraphics[width=\textwidth]{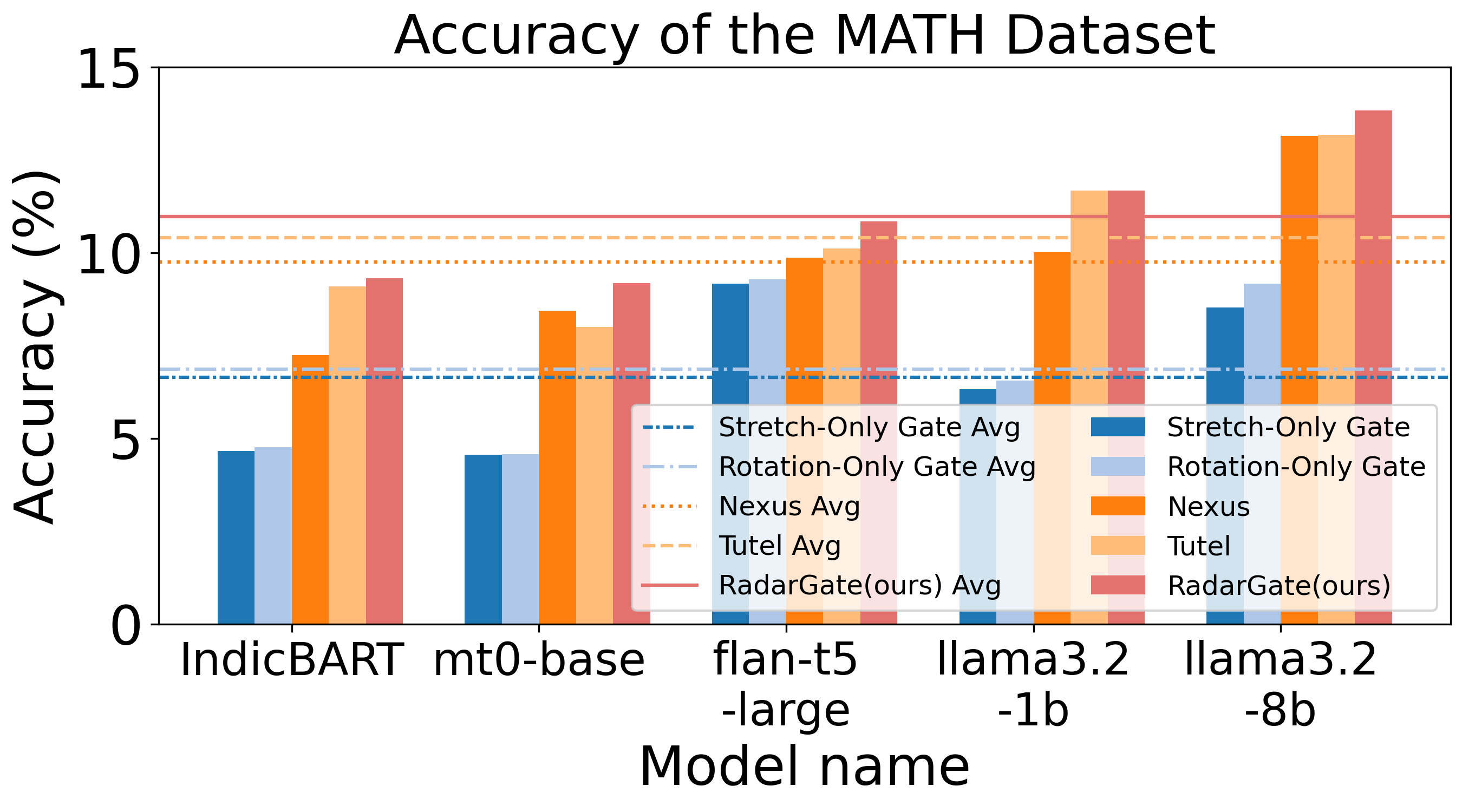}
        \caption{}
        \label{fig:app_exp_module_sca_mole_subfig4}
    \end{subfigure}
    \hfill
    \begin{subfigure}[t]{0.32\textwidth}
        \includegraphics[width=\textwidth]{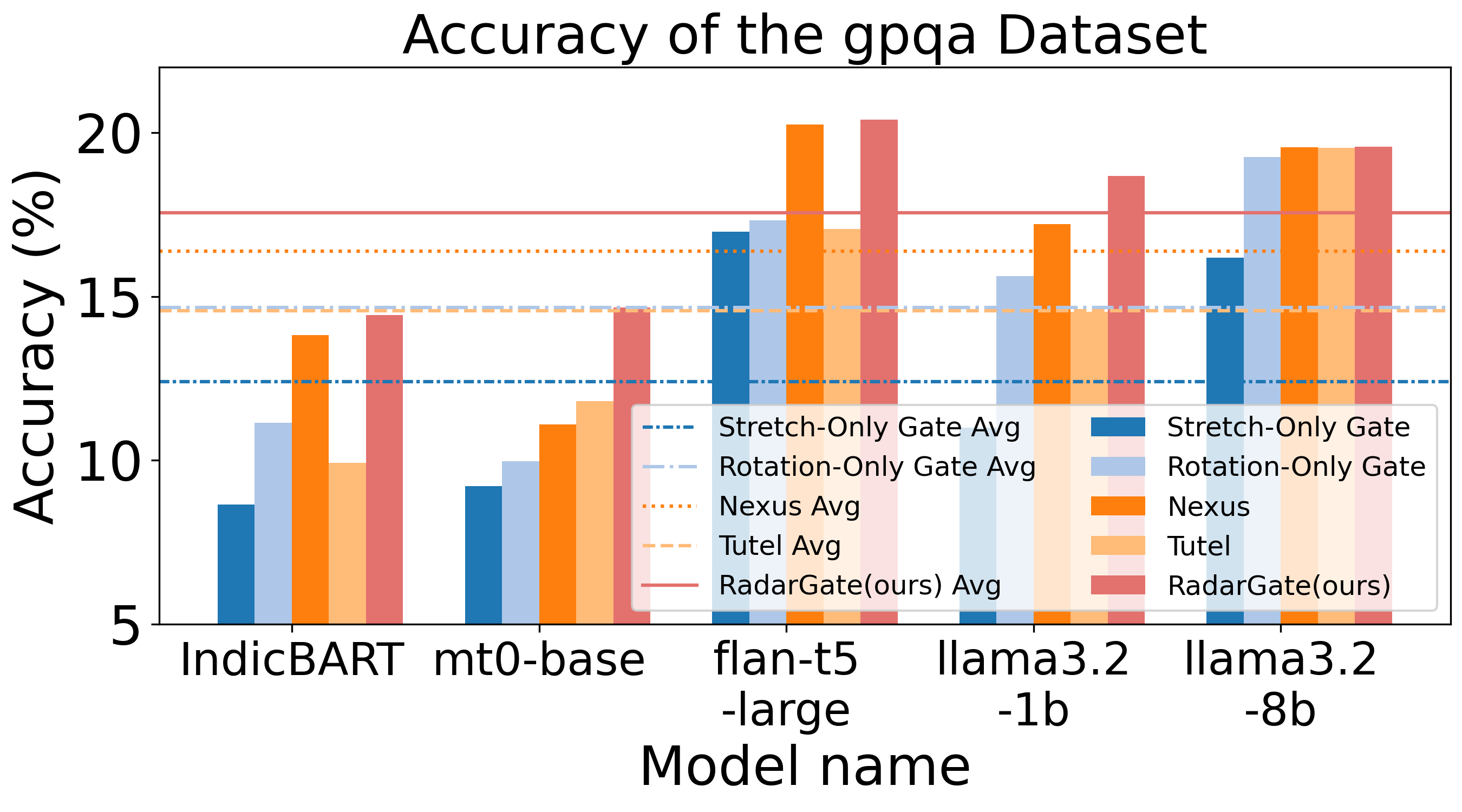}
        \caption{}
        \label{fig:app_exp_module_sca_mole_subfig5}
    \end{subfigure}
    \hfill
    \begin{subfigure}[t]{0.32\textwidth}
        \includegraphics[width=\textwidth]{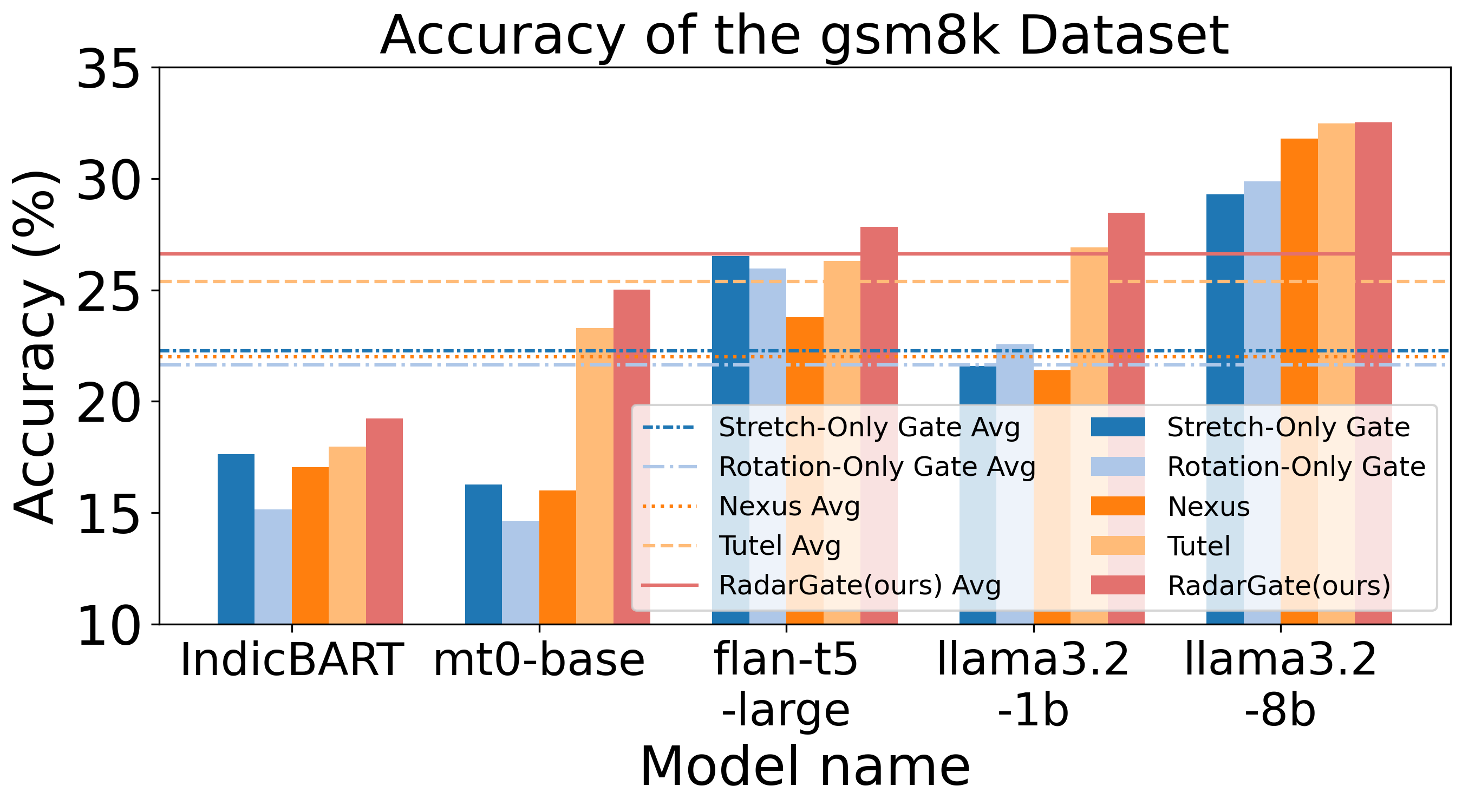}
        \caption{}
        \label{fig:app_exp_module_sca_mole_subfig6}
    \end{subfigure}
\end{minipage}

\caption{The accuracy of different gates within the MoLE architecture across six different benchmarks as the model parameters vary.} 
\label{fig:app_exp_model_sca_mole} 
\end{center}
\vskip -0.2in
\end{figure}

\begin{figure}[htpb]
\vskip 0.2in
\begin{center}
\begin{minipage}{\textwidth}
    \centering
    
    \begin{subfigure}[t]{0.32\textwidth}
        \includegraphics[width=\textwidth]{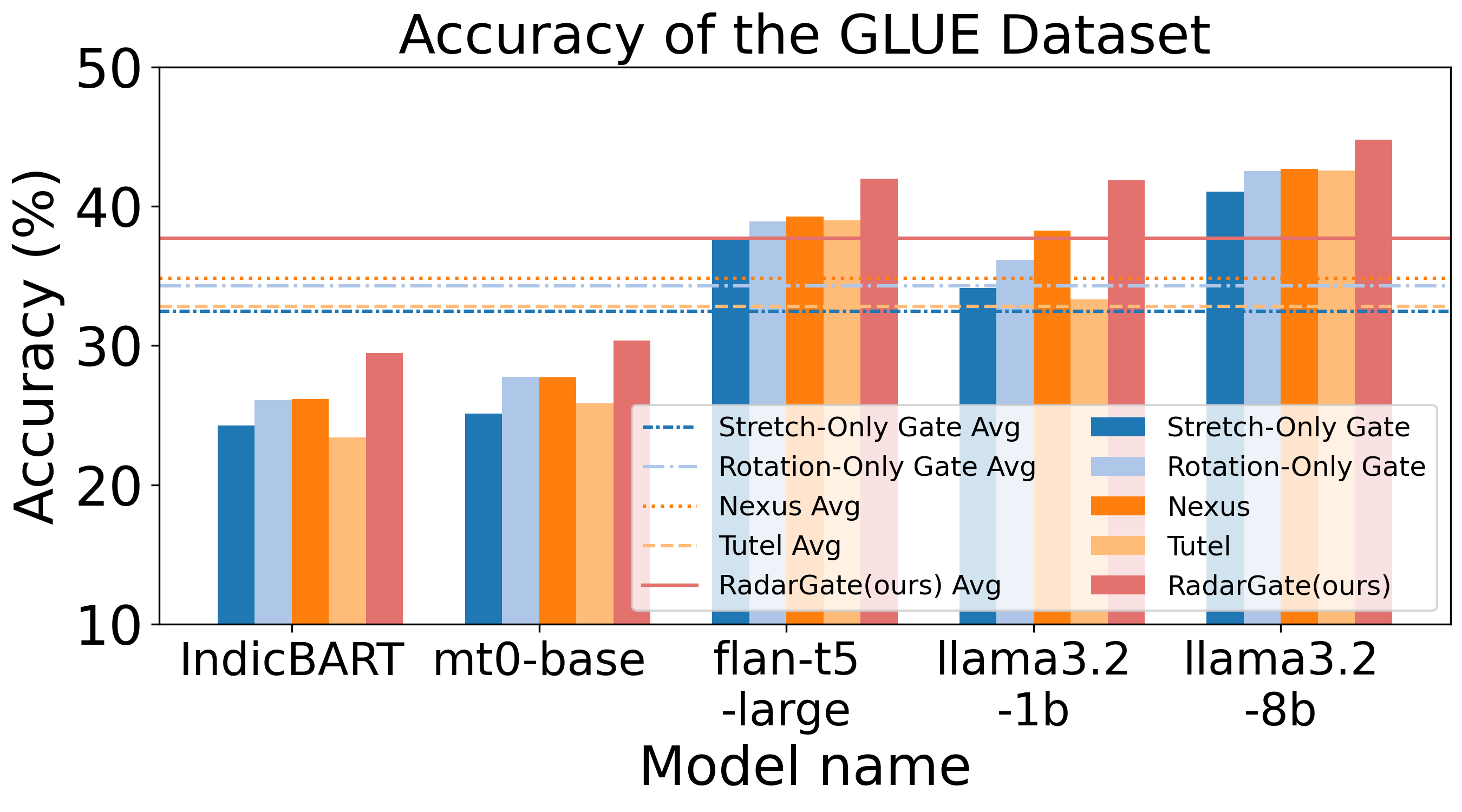}
        \caption{}
        \label{fig:app_exp_module_sca_omoe_subfig21}
    \end{subfigure}
    \hfill
    \begin{subfigure}[t]{0.32\textwidth}
        \includegraphics[width=\textwidth]{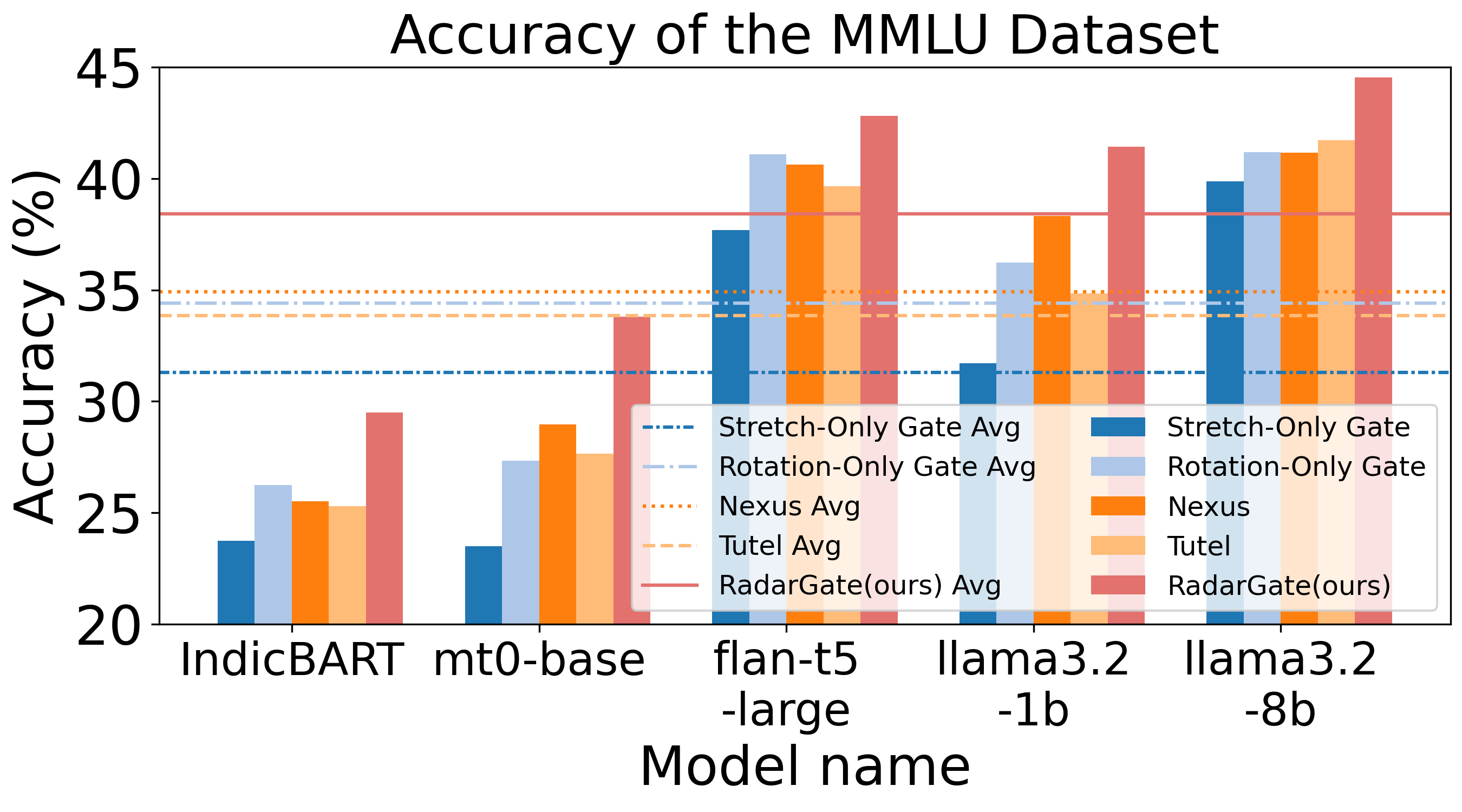}
        \caption{}
        \label{fig:app_exp_module_sca_omoe_subfig22}
    \end{subfigure}
    \hfill
    \begin{subfigure}[t]{0.32\textwidth}
        \includegraphics[width=\textwidth]{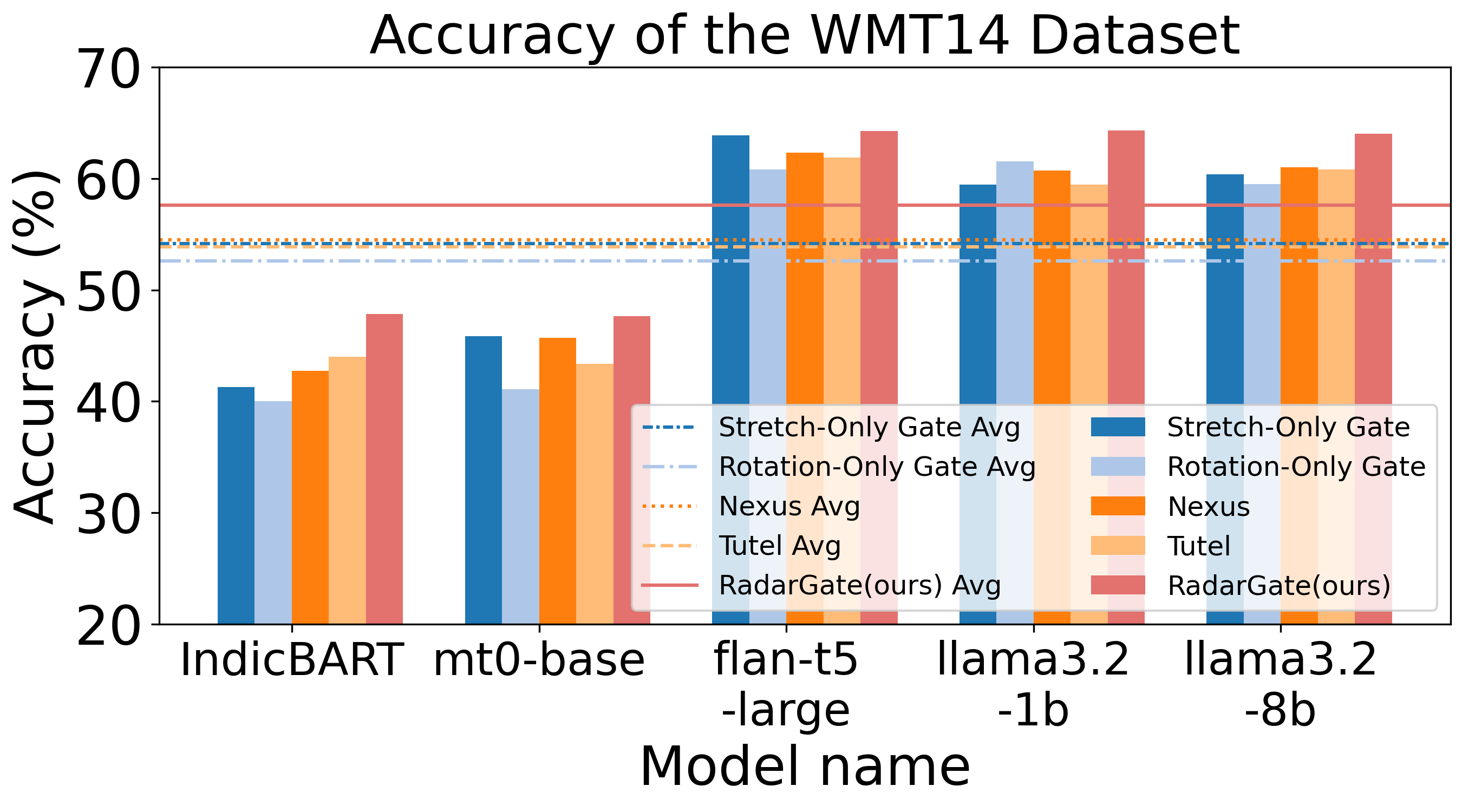}
        \caption{}
        \label{fig:app_exp_module_sca_omoe_subfig23}
    \end{subfigure}
    \hfill
    \begin{subfigure}[t]{0.32\textwidth}
        \includegraphics[width=\textwidth]{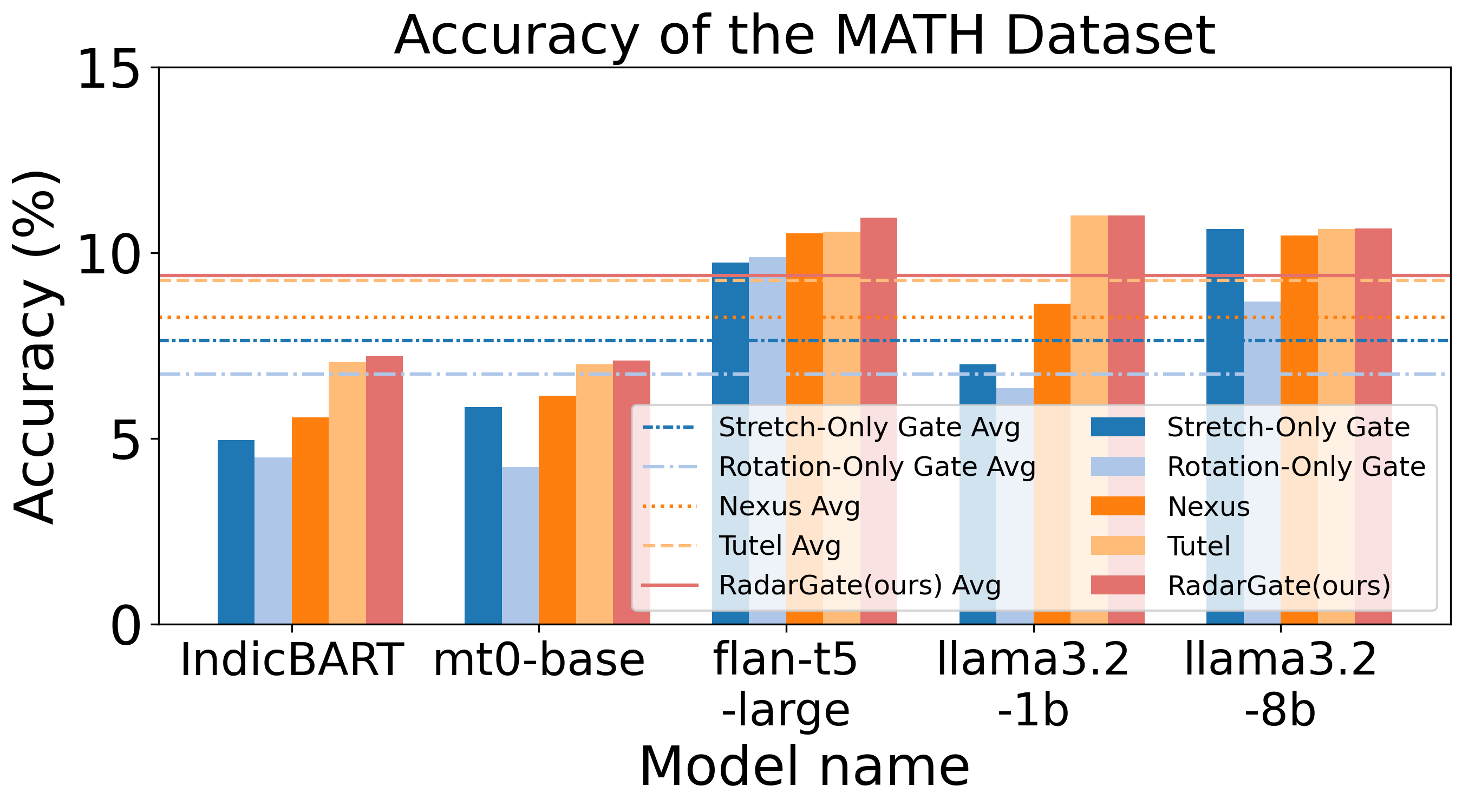}
        \caption{}
        \label{fig:app_exp_model_sca_omoe_subfig24}
    \end{subfigure}
    \hfill
    \begin{subfigure}[t]{0.32\textwidth}
        \includegraphics[width=\textwidth]{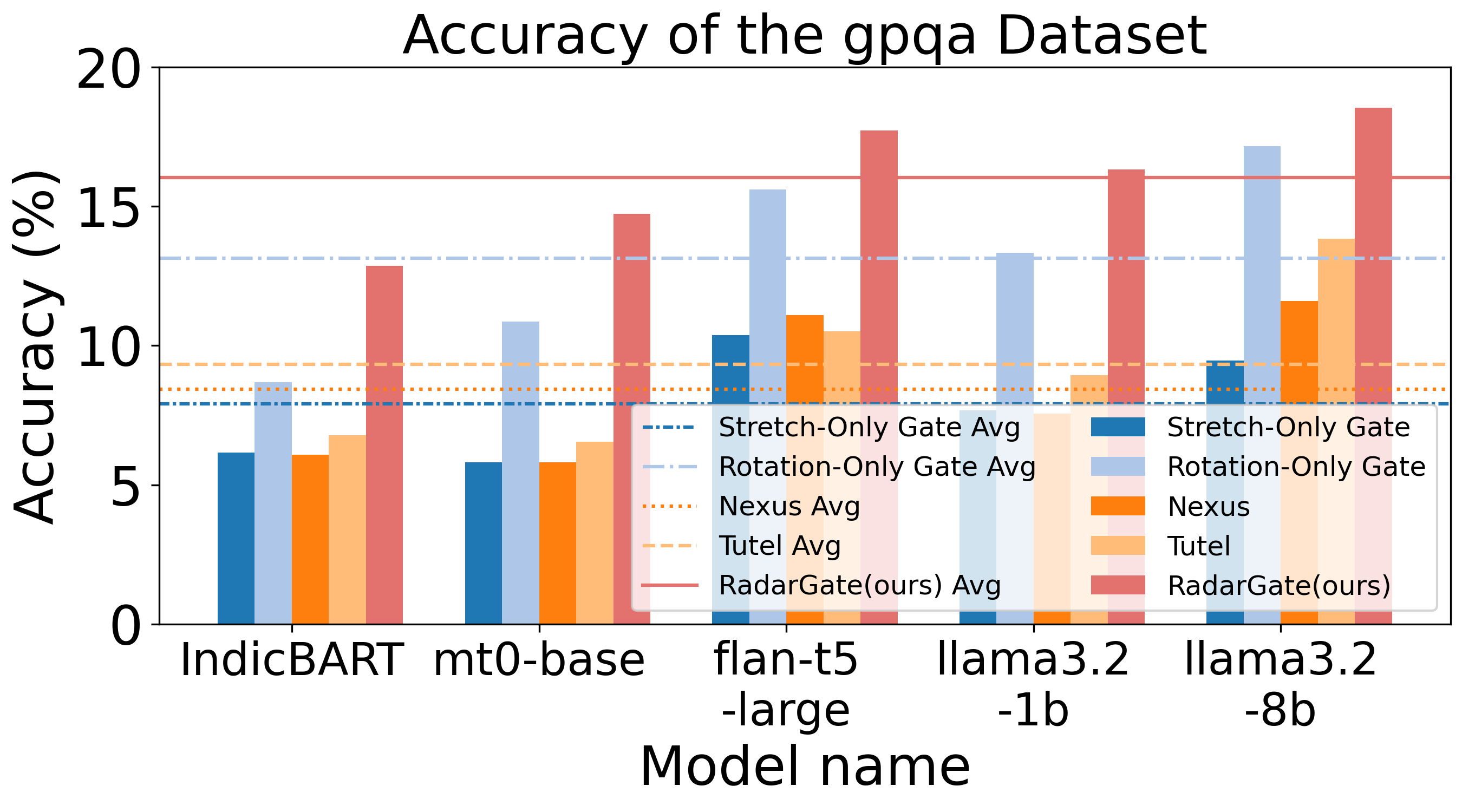}
        \caption{}
        \label{fig:app_exp_model_sca_omoe_subfig25}
    \end{subfigure}
    \hfill
    \begin{subfigure}[t]{0.32\textwidth}
        \includegraphics[width=\textwidth]{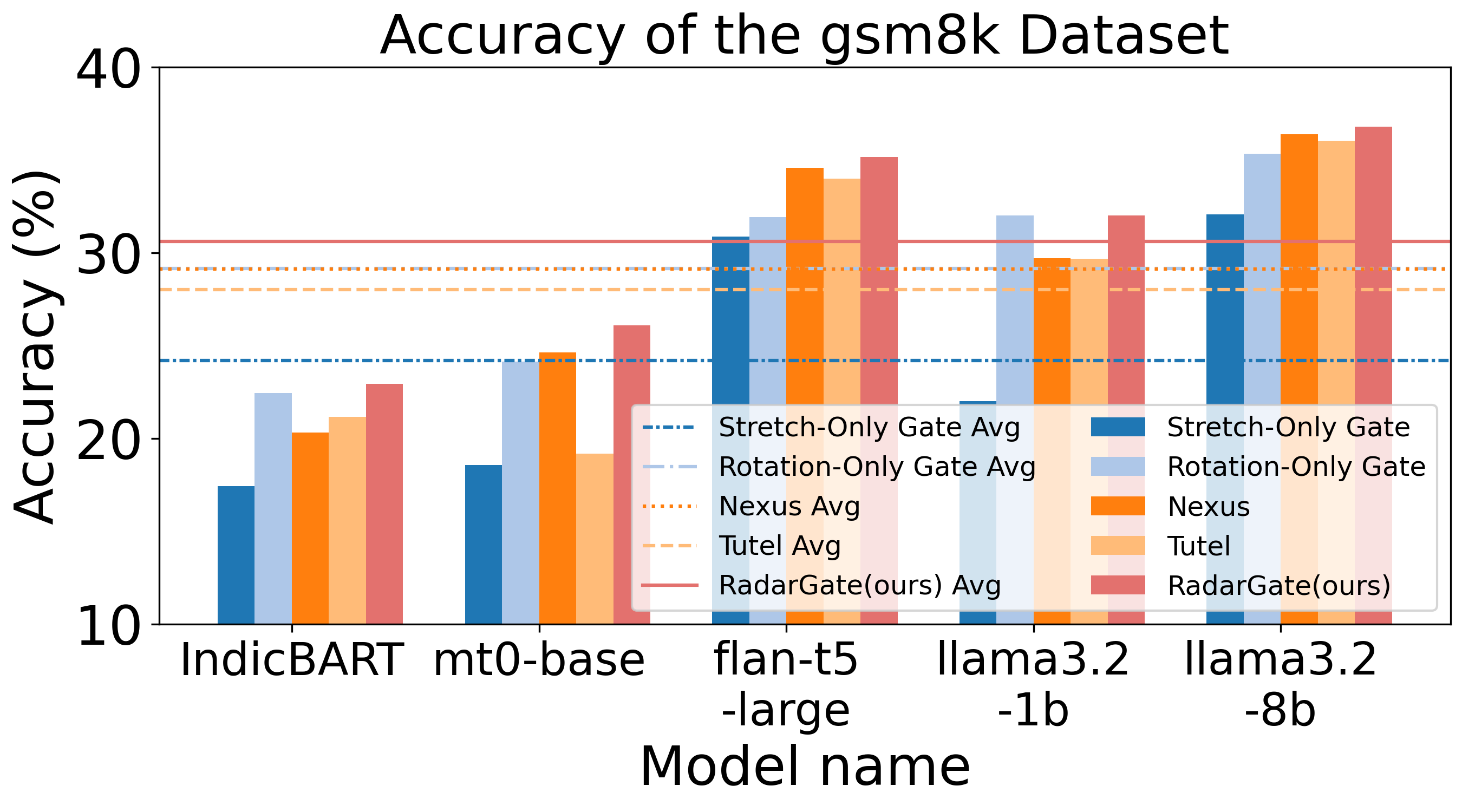}
        \label{fig:app_exp_model_sca_omoe_subfig26}
    \end{subfigure}
\end{minipage}

\caption{The accuracy of different gates within the OMoE architecture across six different benchmarks as the model parameters vary.} 
\label{fig:app_exp_model_sca_omoe} 
\end{center}
\vskip -0.2in
\end{figure}

\begin{figure}[htpb]
\vskip 0.2in
\begin{center}
\begin{minipage}{\textwidth}
    \centering
    
    \begin{subfigure}[t]{0.32\textwidth}
        \includegraphics[width=\textwidth]{Convergence_loss_5.png}
        \caption{}
        \label{fig:app_exp_module_sca_omoe_subfig31}
    \end{subfigure}
    \hfill
    \begin{subfigure}[t]{0.32\textwidth}
        \includegraphics[width=\textwidth]{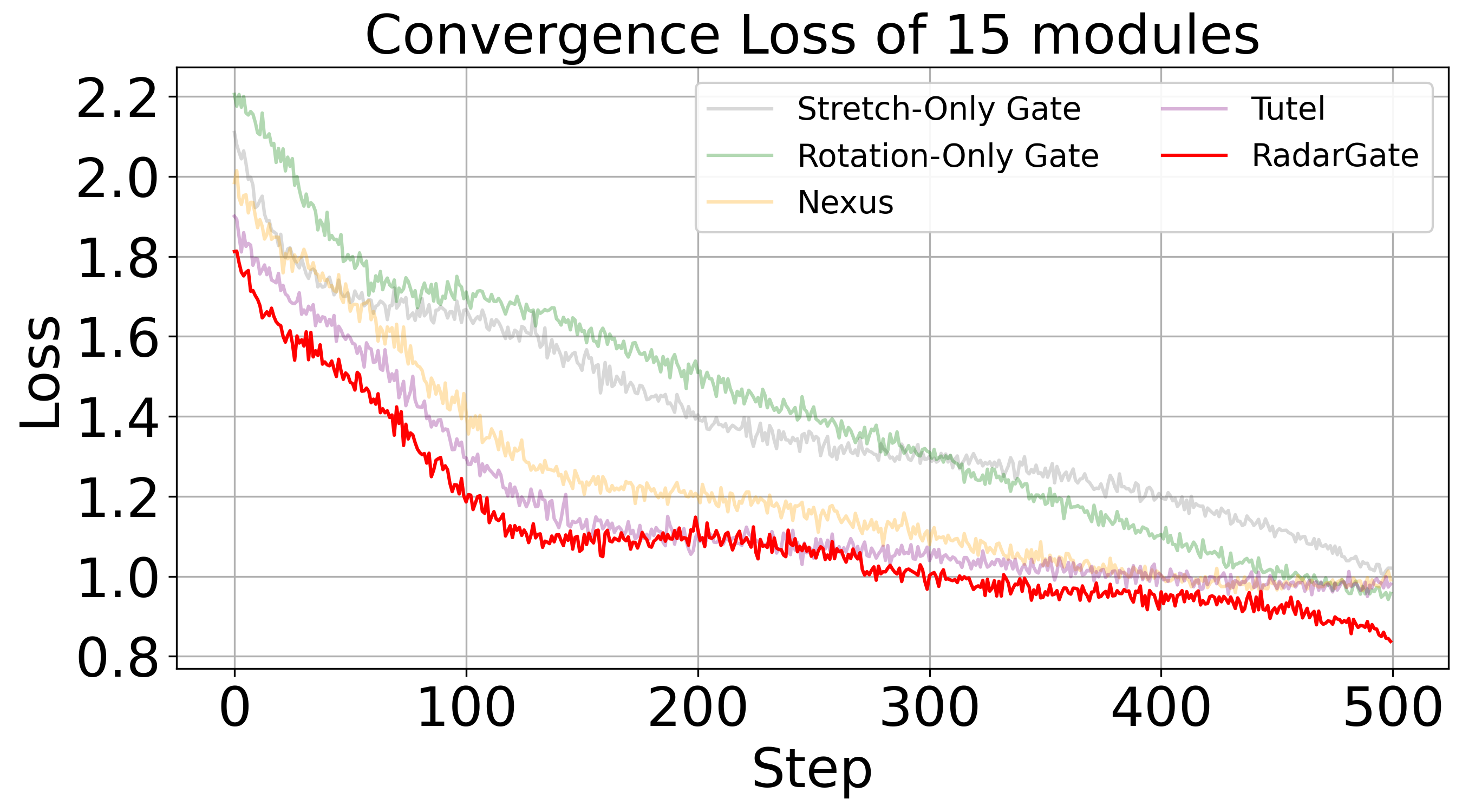}
        \caption{}
        \label{fig:app_exp_module_sca_omoe_subfig32}
    \end{subfigure}
    \hfill
    \begin{subfigure}[t]{0.32\textwidth}
        \includegraphics[width=\textwidth]{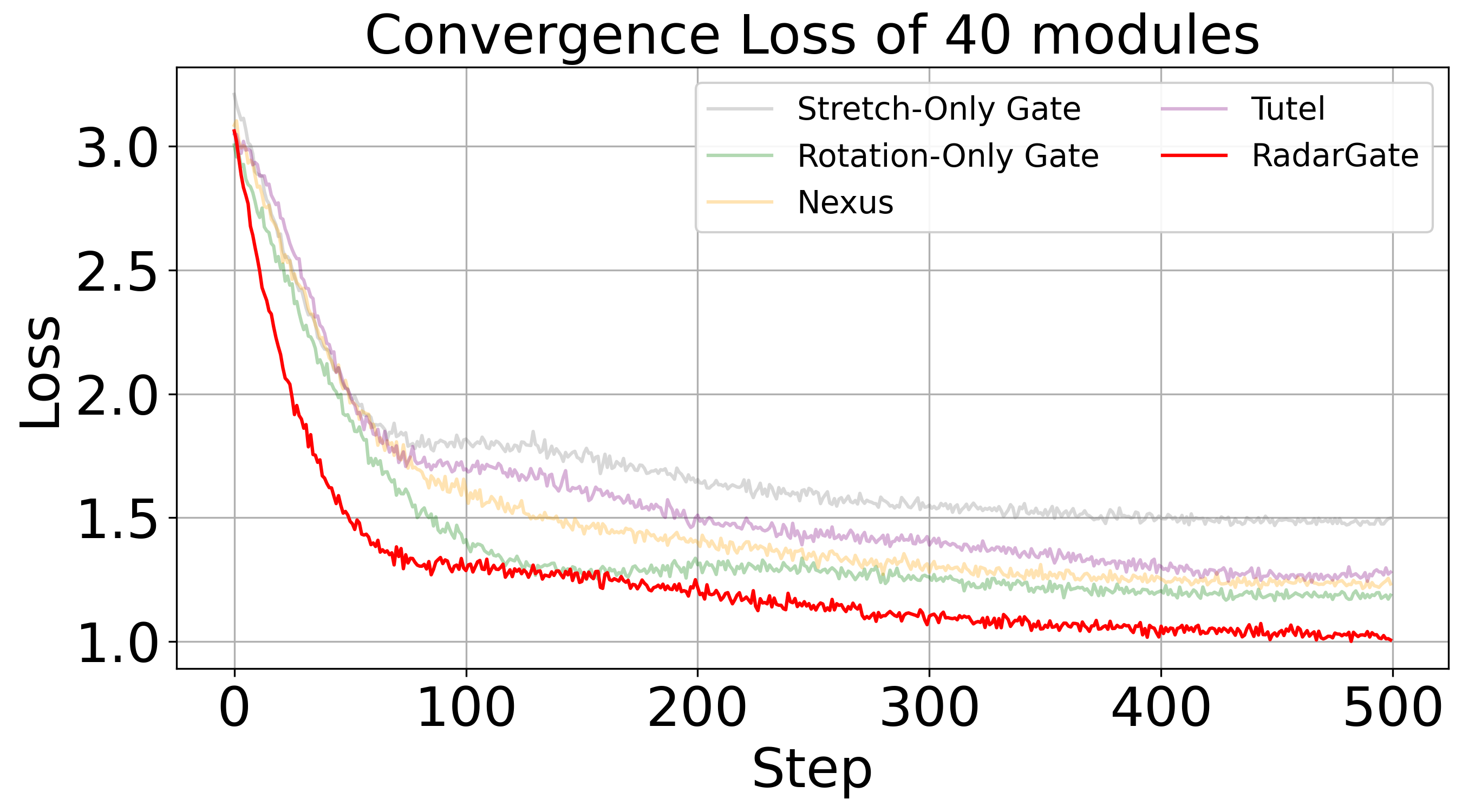}
        \caption{}
        \label{fig:app_conv_pic}
    \end{subfigure}
\end{minipage}

\caption{Figures (a), (b), and (c) show the loss convergence plots for module counts of 5, 15, and 40, respectively.} 
\label{fig:app_exp_sample_conv_omoe} 
\end{center}
\vskip -0.2in
\end{figure}

\begin{figure}[htpb]
\vskip 0.2in
\begin{center}
\begin{minipage}{\textwidth}
    \centering
    
    \begin{subfigure}[t]{0.32\textwidth}
        \includegraphics[width=\textwidth]{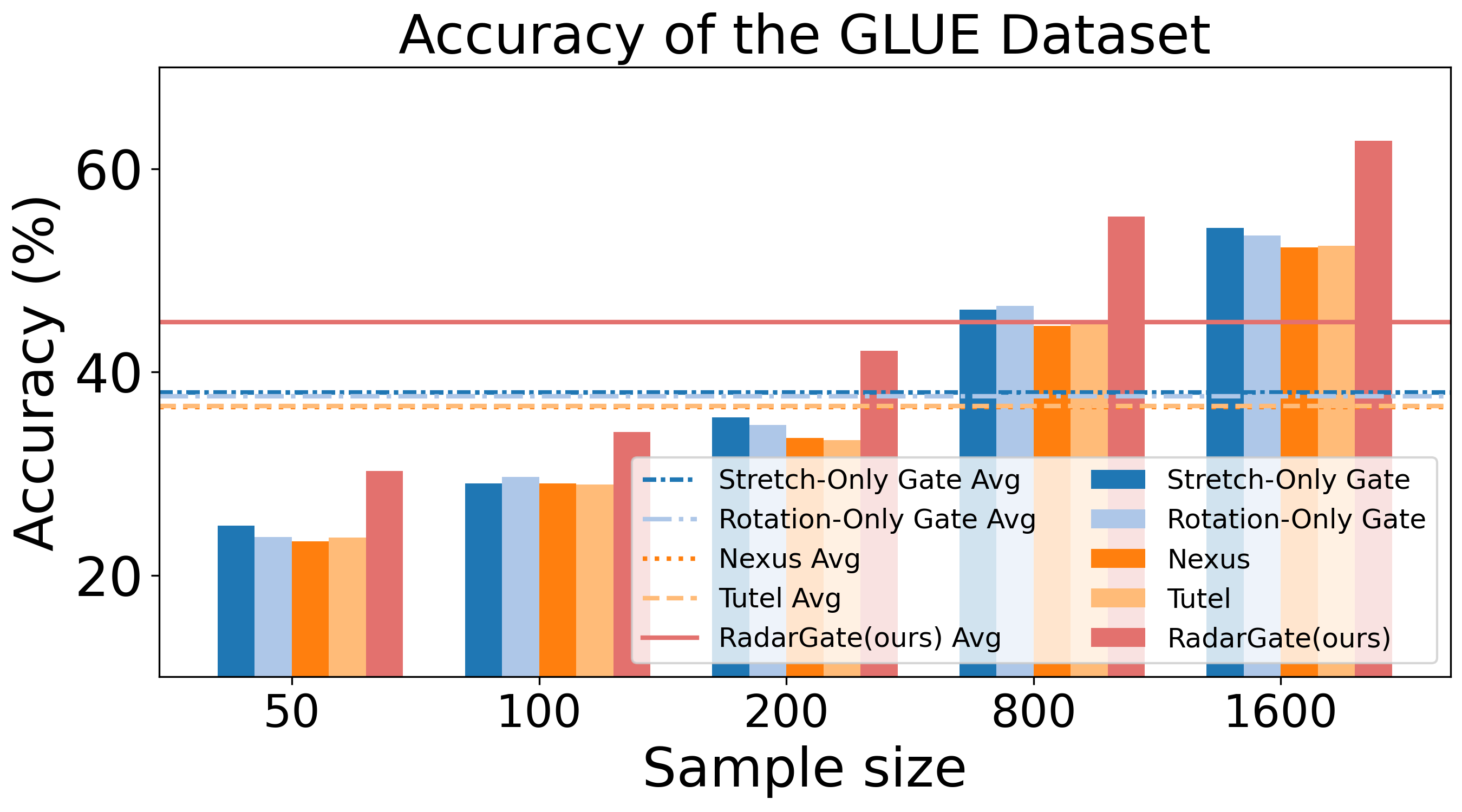}
        \caption{}
        \label{fig:app_exp_module_sca_omoe_subfig41}
    \end{subfigure}
    \hfill
    \begin{subfigure}[t]{0.32\textwidth}
        \includegraphics[width=\textwidth]{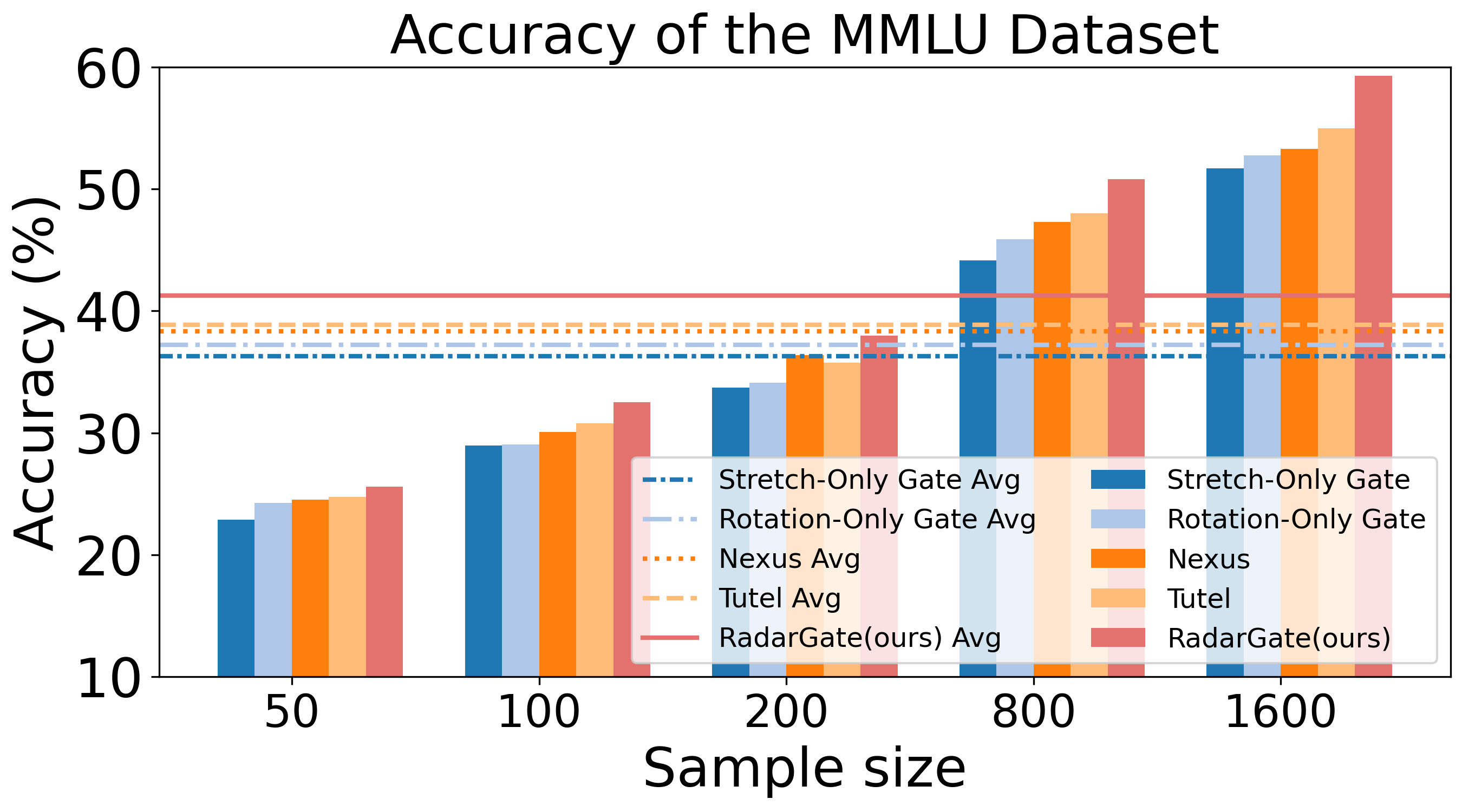}
        \caption{}
        \label{fig:app_exp_module_sca_omoe_subfig42}
    \end{subfigure}
    \hfill
    \begin{subfigure}[t]{0.32\textwidth}
        \includegraphics[width=\textwidth]{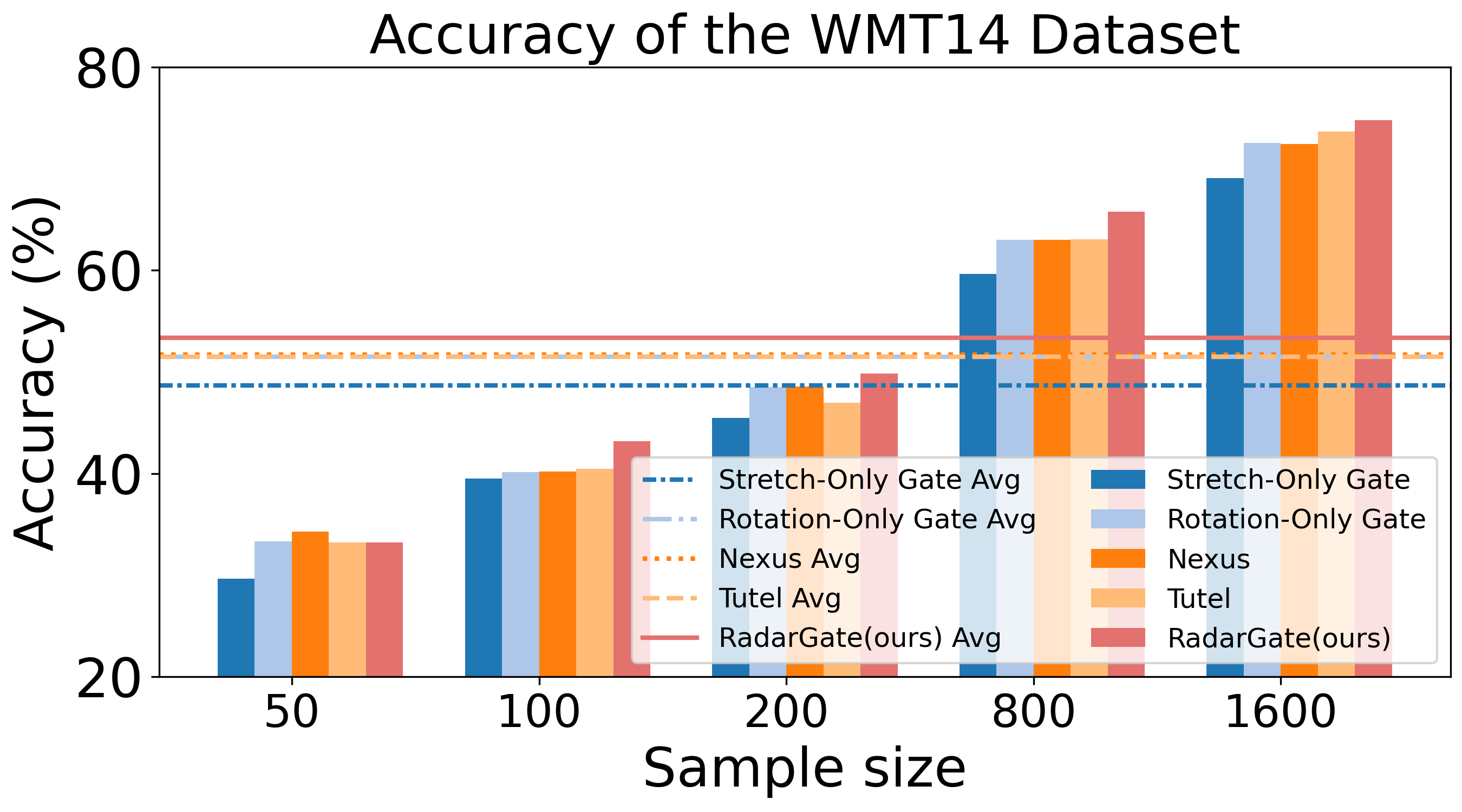}
        \caption{}
        \label{fig:app_exp_module_sca_omoe_subfig43}
    \end{subfigure}
    \hfill
    \begin{subfigure}[t]{0.32\textwidth}
        \includegraphics[width=\textwidth]{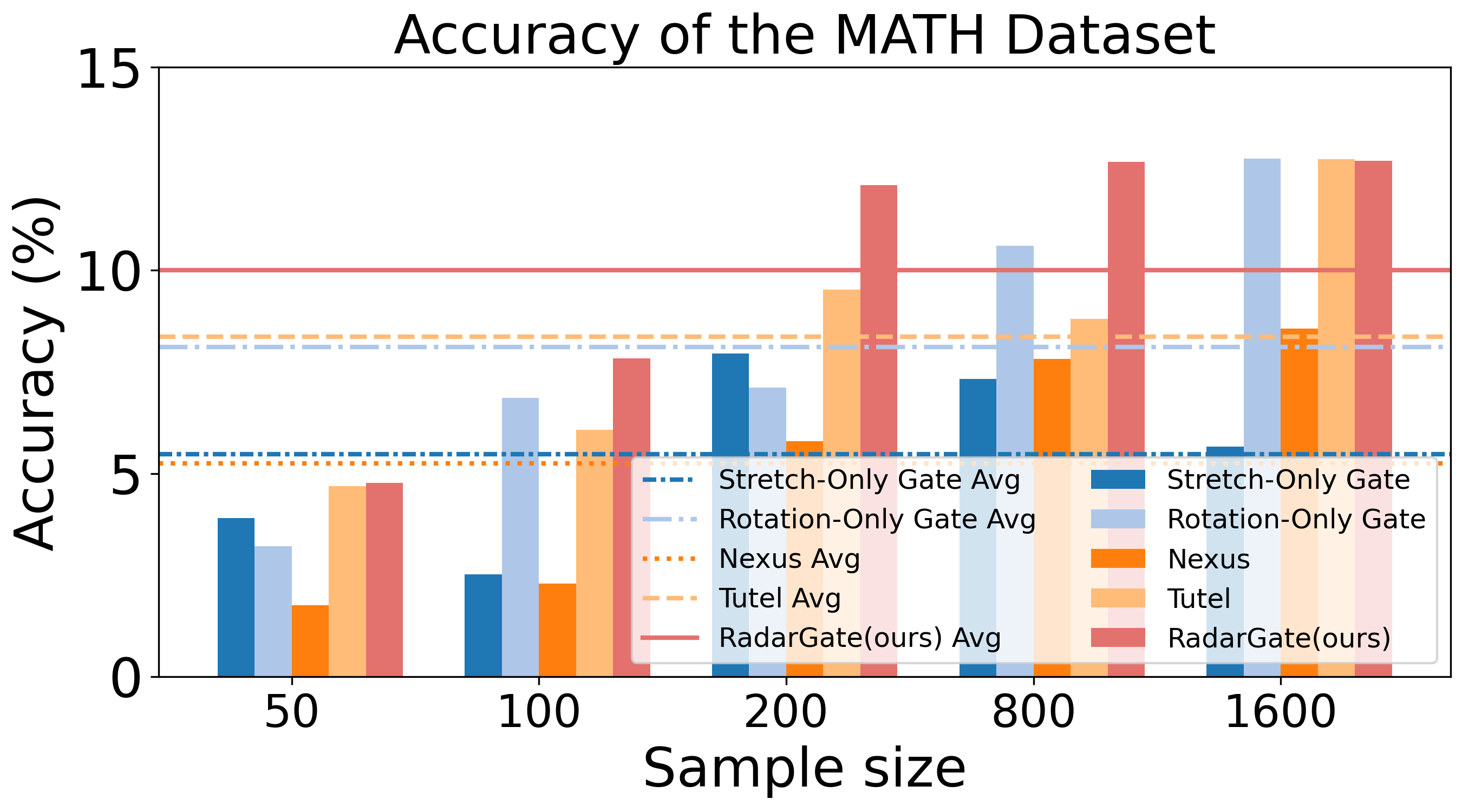}
        \caption{}
        \label{fig:app_exp_module_sca_omoe_subfig44}
    \end{subfigure}
    \hfill
    \begin{subfigure}[t]{0.32\textwidth}
        \includegraphics[width=\textwidth]{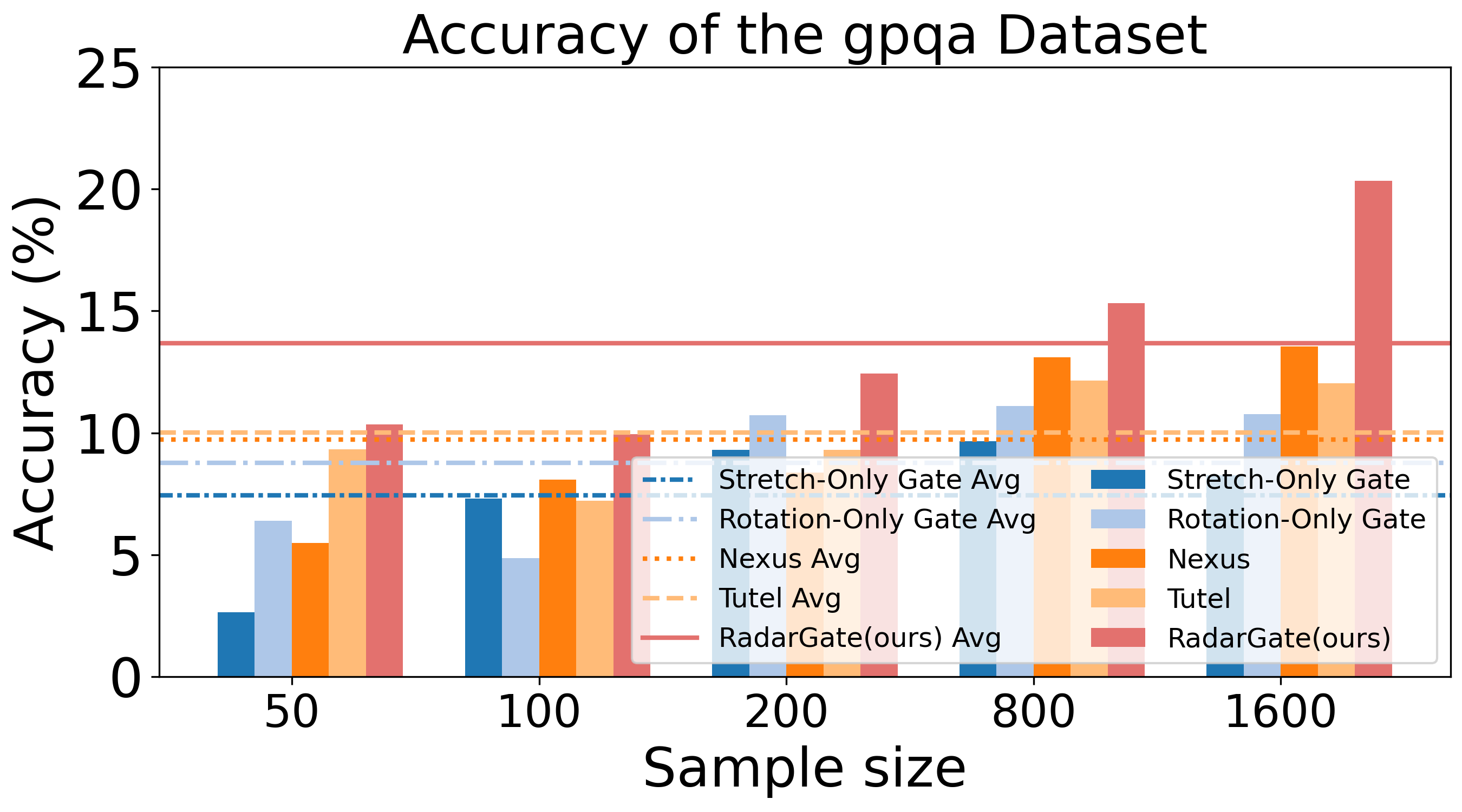}
        \caption{}
        \label{fig:app_exp_module_sca_omoe_subfig45}
    \end{subfigure}
    \hfill
    \begin{subfigure}[t]{0.32\textwidth}
        \includegraphics[width=\textwidth]{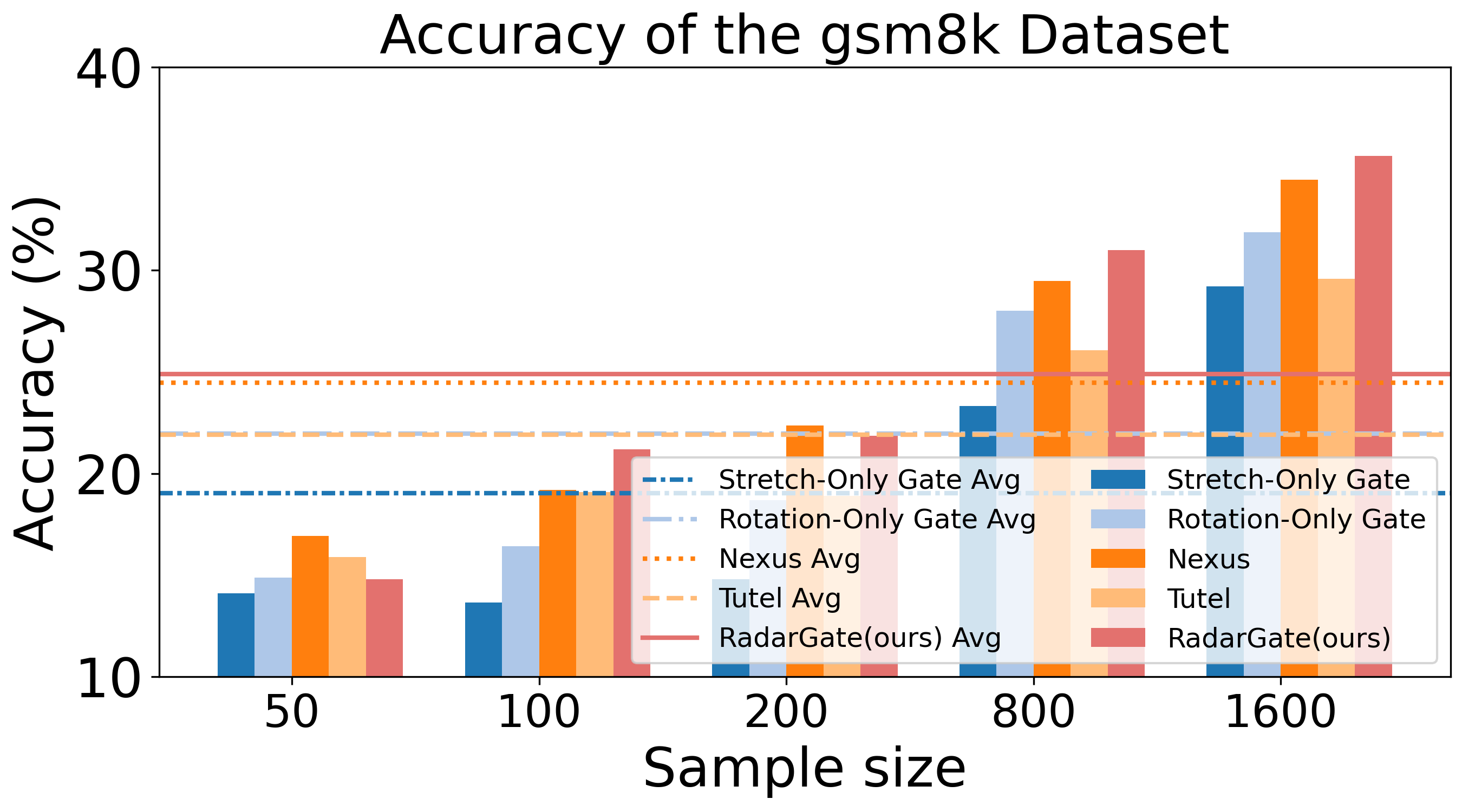}
        \caption{}
        \label{fig:app_exp_module_sca_omoe_subfig46}
    \end{subfigure}
\end{minipage}

\caption{The accuracy variations of different gates within the HydraLoRA architecture across six different benchmarks as the number of training samples increases.} 
\label{fig:app_exp_sample_sca_hyd} 
\end{center}
\vskip -0.2in
\end{figure}

\begin{figure}[htpb]
\vskip 0.2in
\begin{center}
\begin{minipage}{\textwidth}
    \centering
    
    \begin{subfigure}[t]{0.32\textwidth}
        \includegraphics[width=\textwidth]{appendix_sample_size/Sample_size_GLUE_mole.png}
        \caption{}
        \label{fig:app_exp_module_sca_omoe_subfig51}
    \end{subfigure}
    \hfill
    \begin{subfigure}[t]{0.32\textwidth}
        \includegraphics[width=\textwidth]{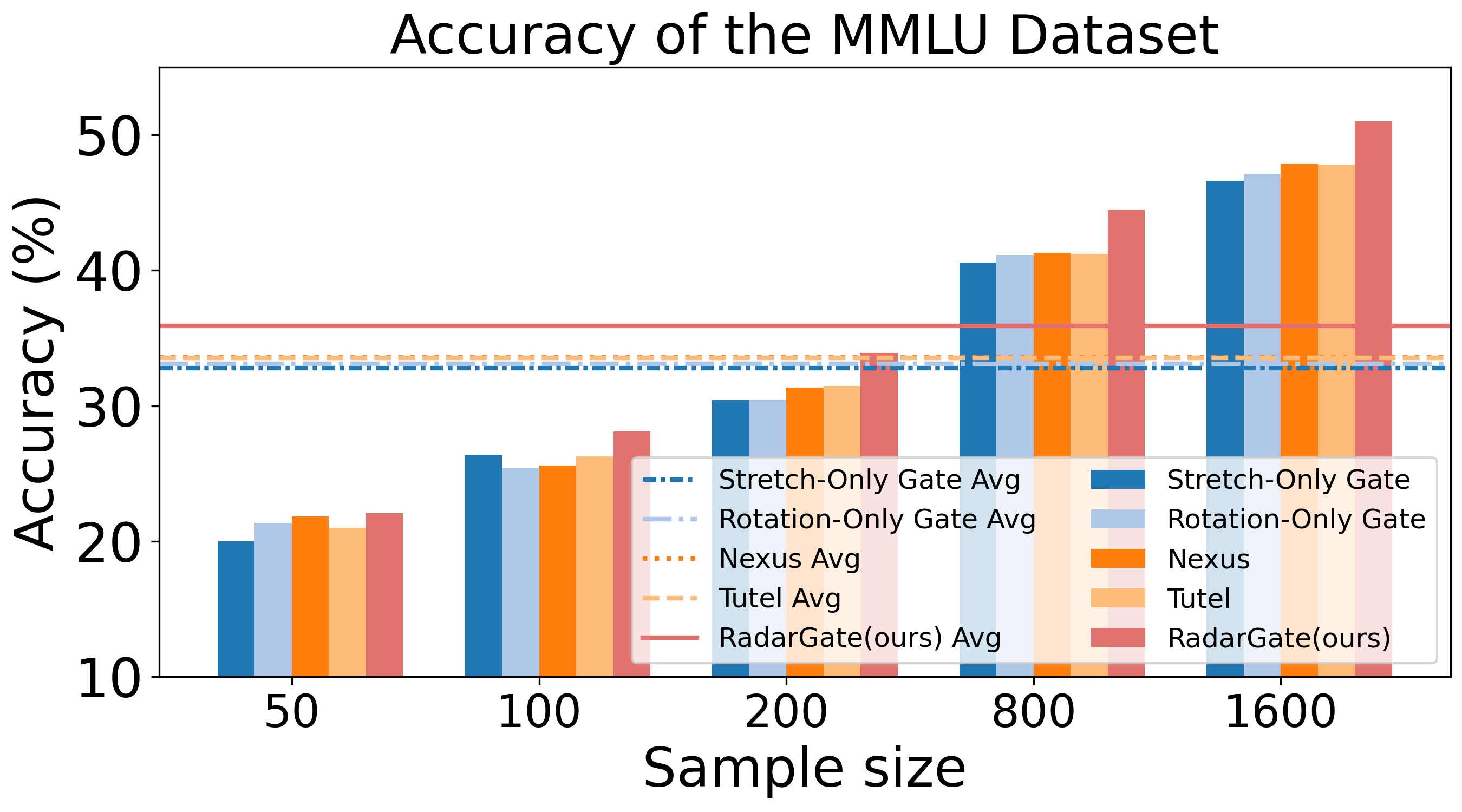}
        \caption{}
        \label{fig:app_exp_module_sca_omoe_subfig52}
    \end{subfigure}
    \hfill
    \begin{subfigure}[t]{0.32\textwidth}
        \includegraphics[width=\textwidth]{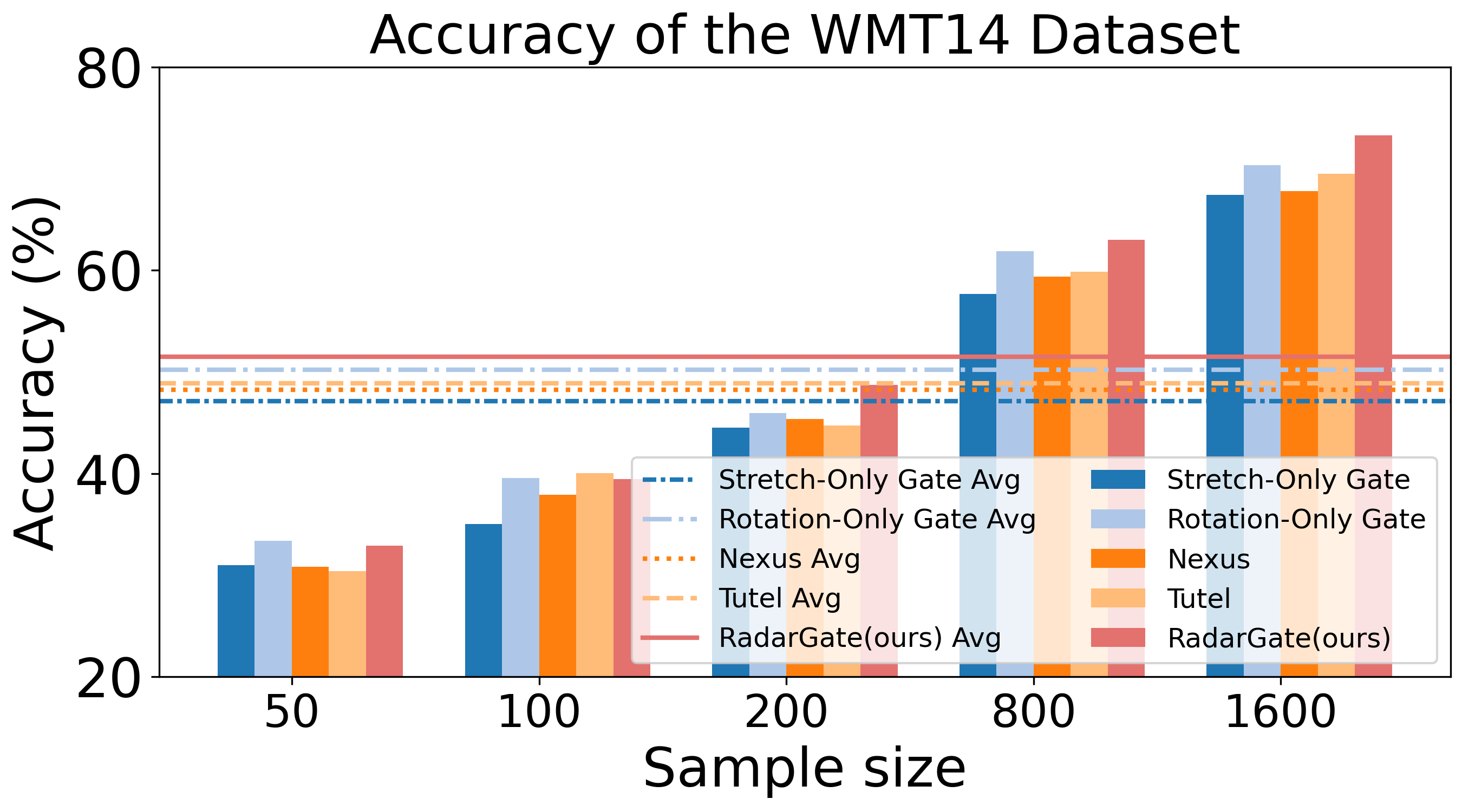}
        \caption{}
        \label{fig:app_exp_module_sca_omoe_subfig53}
    \end{subfigure}
    \hfill
    \begin{subfigure}[t]{0.32\textwidth}
        \includegraphics[width=\textwidth]{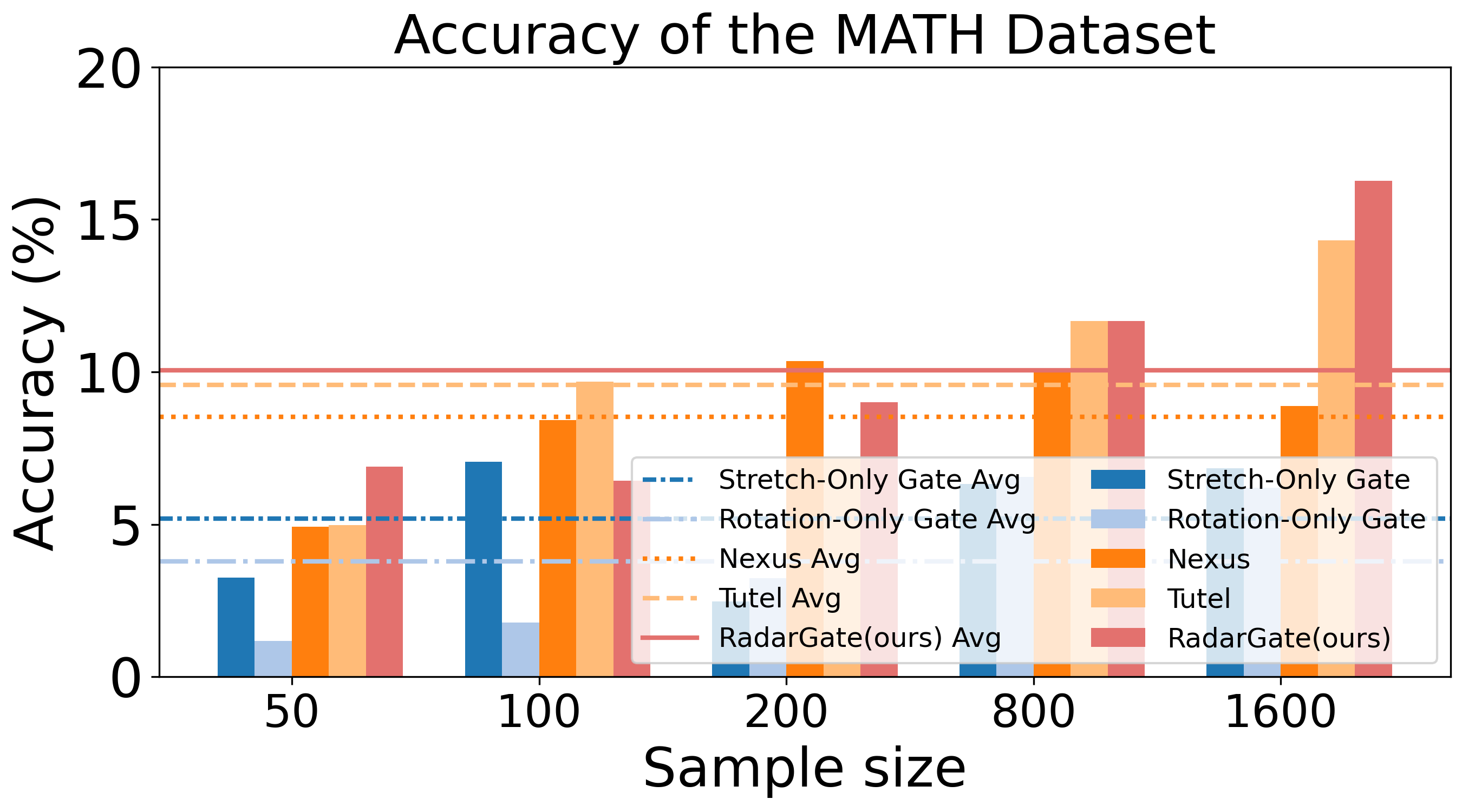}
        \caption{}
        \label{fig:app_exp_module_sca_omoe_subfig54}
    \end{subfigure}
    \hfill
    \begin{subfigure}[t]{0.32\textwidth}
        \includegraphics[width=\textwidth]{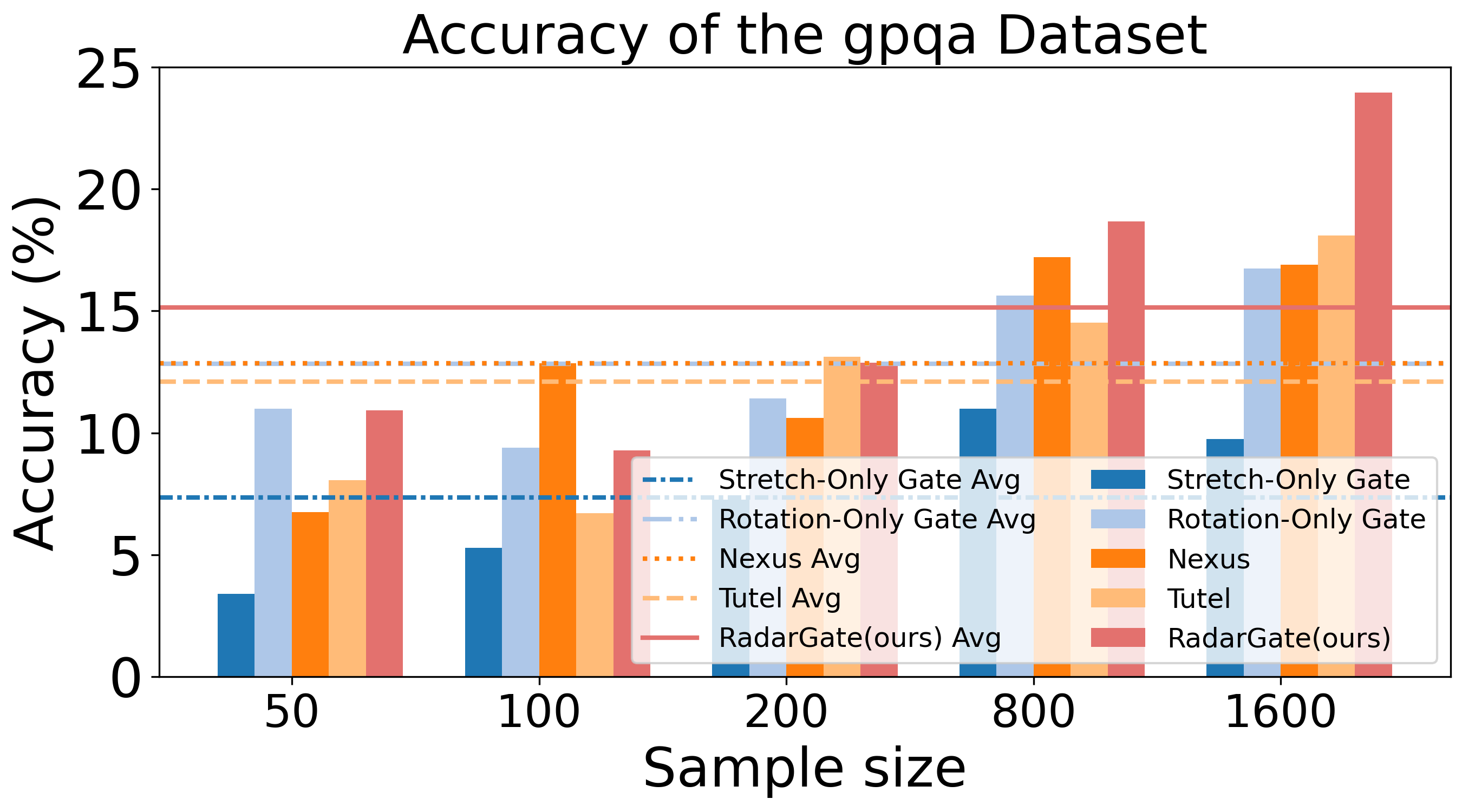}
        \caption{}
        \label{fig:app_exp_module_sca_omoe_subfig55}
    \end{subfigure}
    \hfill
    \begin{subfigure}[t]{0.32\textwidth}
        \includegraphics[width=\textwidth]{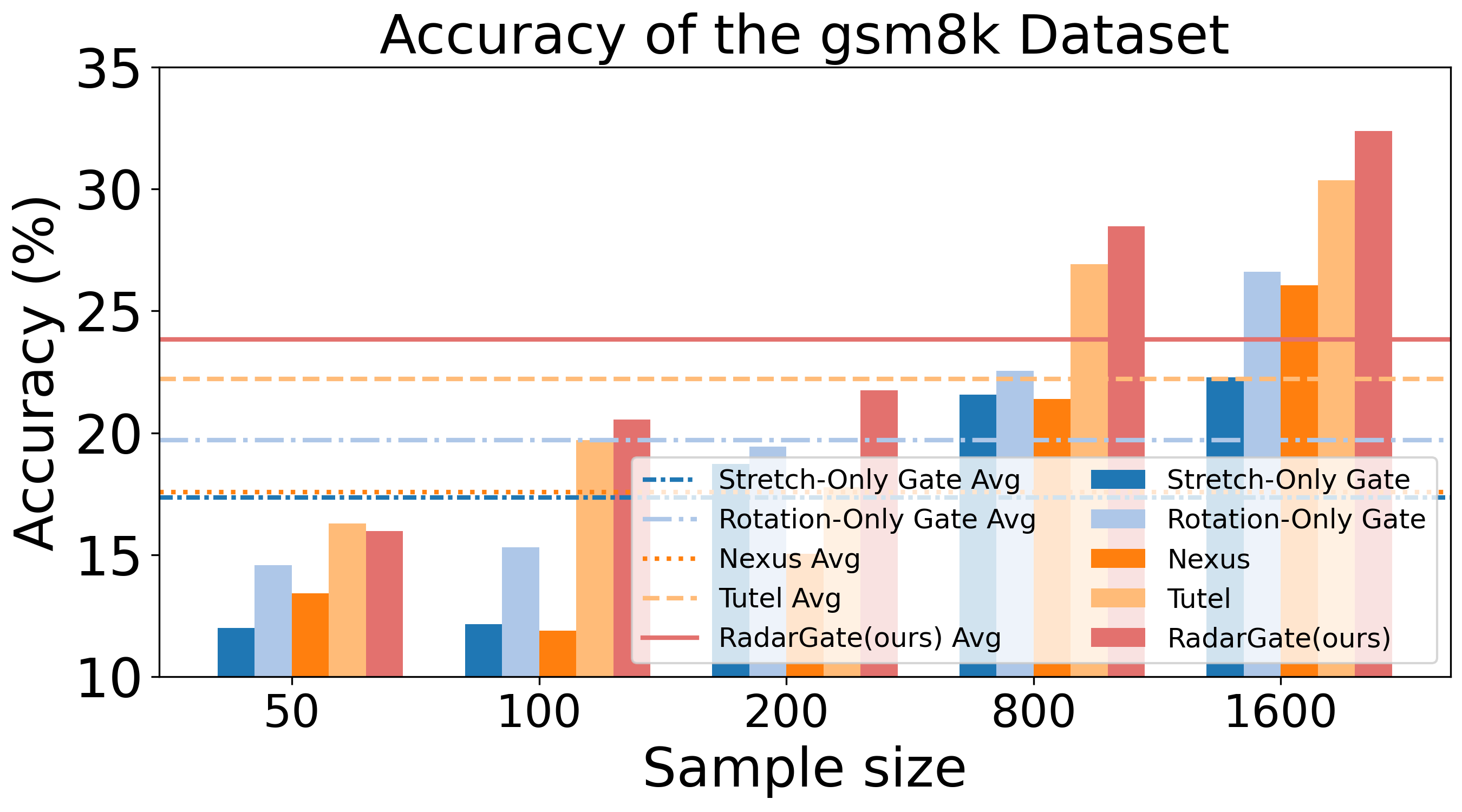}
        \caption{}
        \label{fig:app_exp_module_sca_omoe_subfig56}
    \end{subfigure}
\end{minipage}
\caption{The accuracy variations of different gates within the MoLE architecture across six different benchmarks as the number of training samples increases.} 
\label{fig:app_exp_sample_sca_mole} 
\end{center}
\vskip -0.2in
\end{figure}

\begin{figure}[htpb]
\vskip 0.2in
\begin{center}
\begin{minipage}{\textwidth}
    \centering
    
    \begin{subfigure}[t]{0.32\textwidth}
        \includegraphics[width=\textwidth]{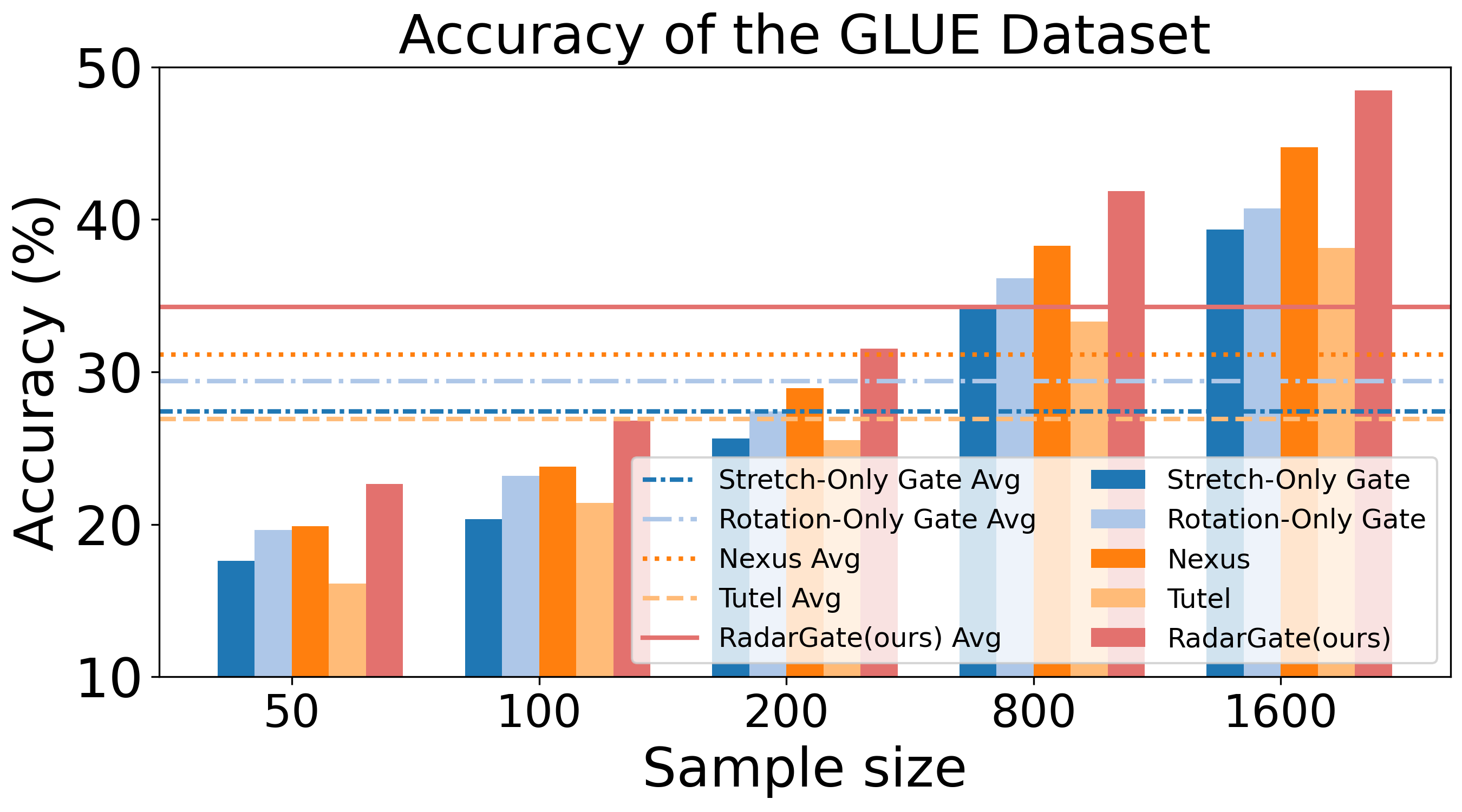}
        \caption{}
        \label{fig:app_exp_module_sca_omoe_subfig61}
    \end{subfigure}
    \hfill
    \begin{subfigure}[t]{0.32\textwidth}
        \includegraphics[width=\textwidth]{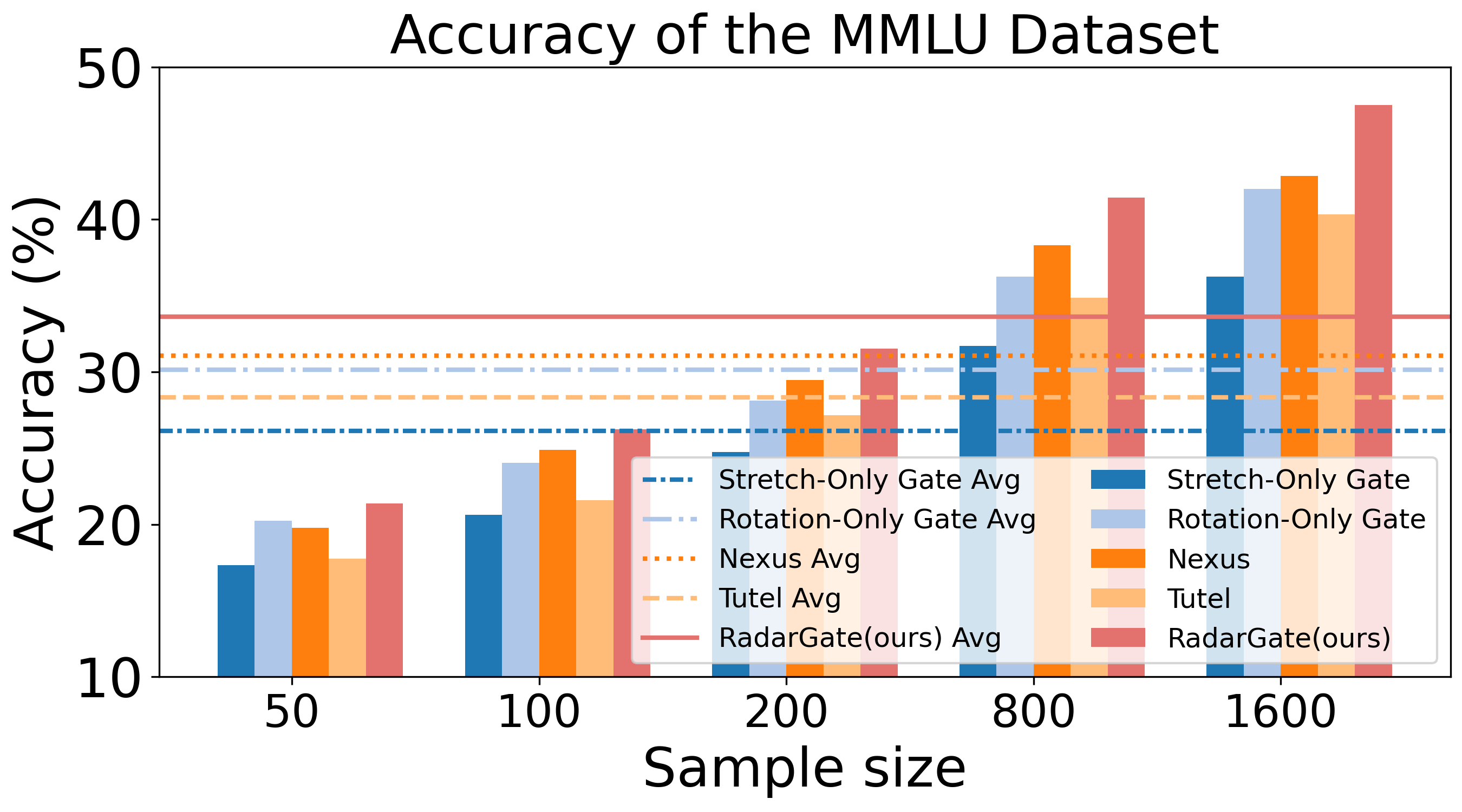}
        \caption{}
        \label{fig:app_exp_module_sca_omoe_subfig62}
    \end{subfigure}
    \hfill
    \begin{subfigure}[t]{0.32\textwidth}
        \includegraphics[width=\textwidth]{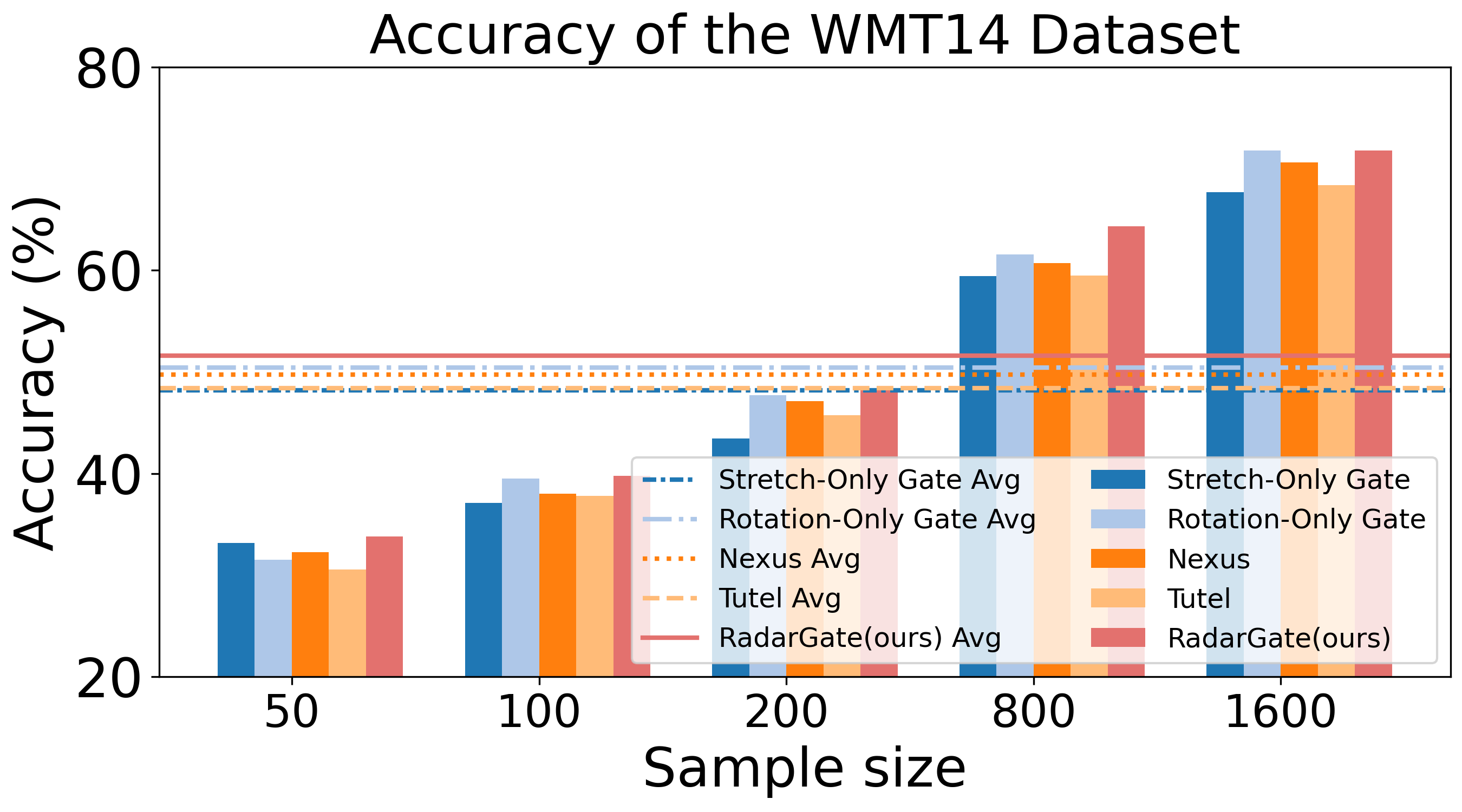}
        \caption{}
        \label{fig:app_exp_module_sca_omoe_subfig63}
    \end{subfigure}
    \hfill
    \begin{subfigure}[t]{0.32\textwidth}
        \includegraphics[width=\textwidth]{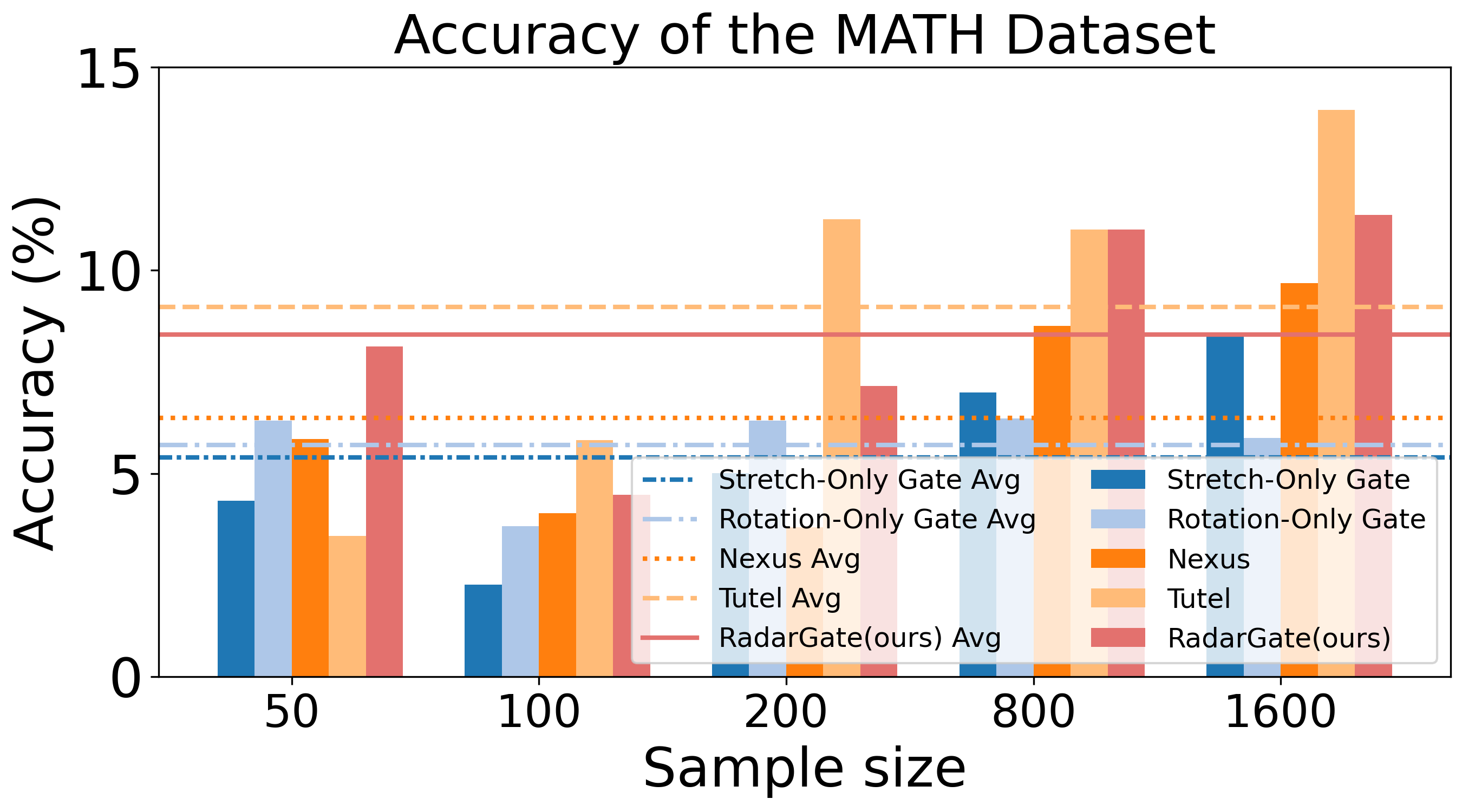}
        \caption{}
        \label{fig:app_exp_module_sca_omoe_subfig64}
    \end{subfigure}
    \hfill
    \begin{subfigure}[t]{0.32\textwidth}
        \includegraphics[width=\textwidth]{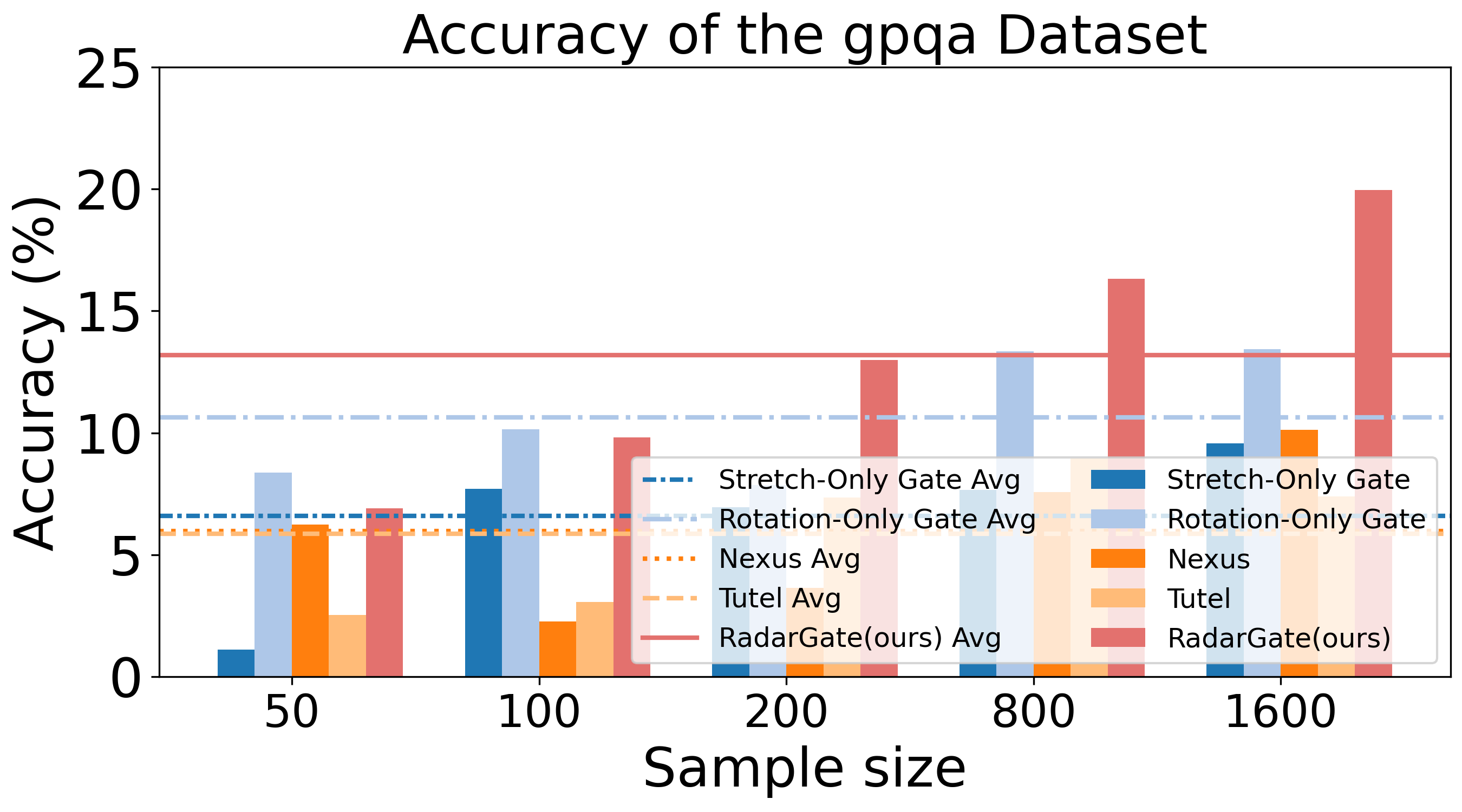}
        \caption{}
        \label{fig:app_exp_module_sca_omoe_subfig65}
    \end{subfigure}
    \hfill
    \begin{subfigure}[t]{0.32\textwidth}
        \includegraphics[width=\textwidth]{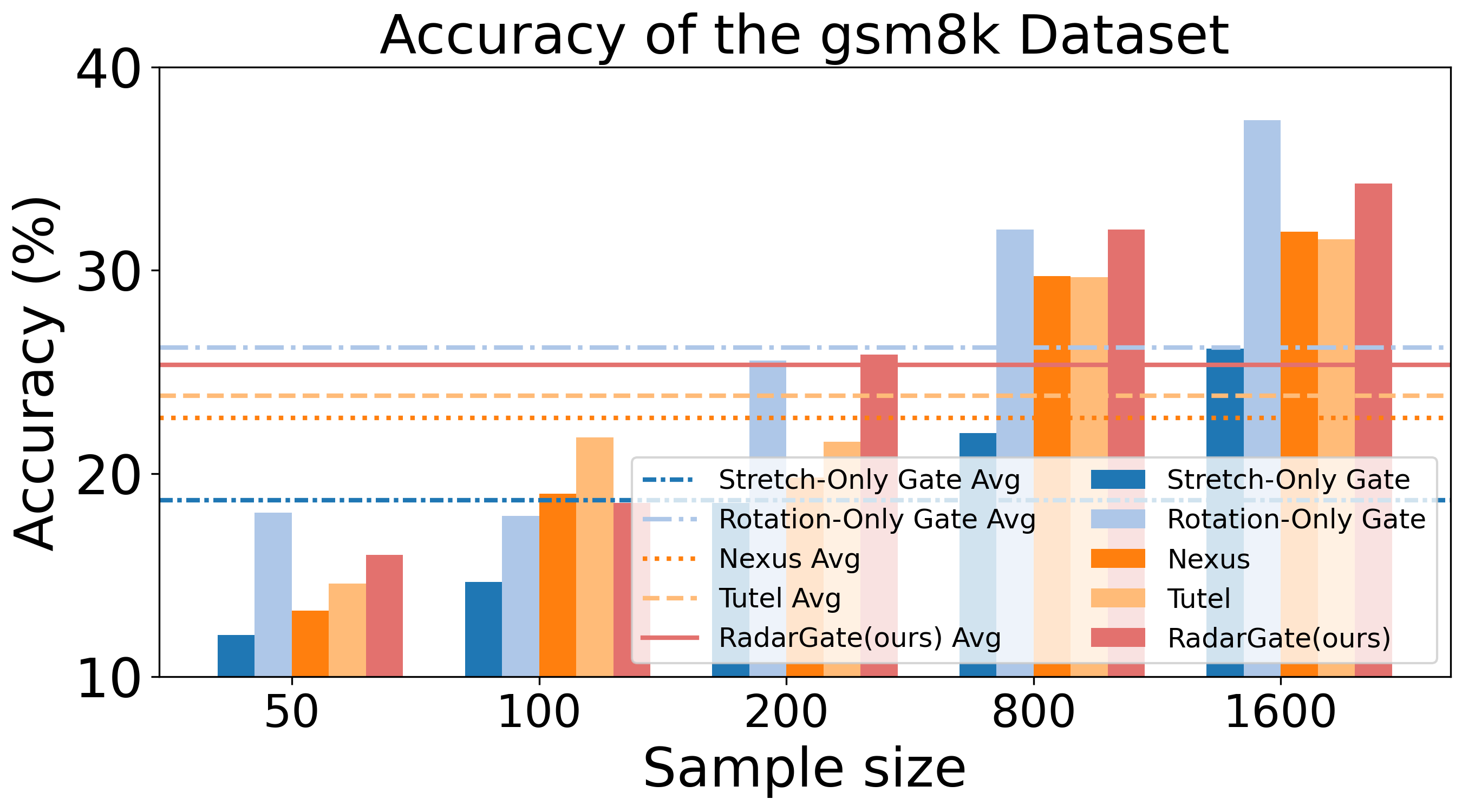}
        \caption{}
        \label{fig:app_exp_module_sca_omoe_subfig66}
    \end{subfigure}
\end{minipage}

\caption{The accuracy variations of different gates within the OMoE architecture across six different benchmarks as the number of training samples increases.} 
\label{fig:app_exp_sample_sca_omoe} 
\end{center}
\vskip -0.2in
\end{figure}

\section{Limitations \& Future Discussion}

Although our RadarGate has demonstrated excellent performance in fitting capability and generalization, and has maintained a high level after LoRAs extension, we also recognize its limitations. Specifically, we believe that RadarGate will also perform excellently in multimodal and even full-modal scenarios to improve fitting and generalization performance, but we have not yet conducted research on this. Future research can extend it to tasks in more modal domains. Similarly, in distributed scenarios, the modular characteristics of RadarGate can bring communication convenience, and the addition of rotational degrees of freedom can also bring better performance. We believe that RadarGate can bring exciting opportunities to resource-constrained scenarios in distributed computing.

\end{document}